\documentclass{article}
\usepackage{fullpage,amsmath,amsfonts,amsthm,bbm}
\usepackage{subcaption}
\usepackage{caption}
\usepackage{float}
\usepackage{natbib}
\setcitestyle{numbers,square}
\usepackage{comment}
\usepackage{graphicx} 
\usepackage{enumitem}
\usepackage{amssymb}
\usepackage{amsmath}
\usepackage{hyperref}
\usepackage{makecell}
\usepackage{multirow}
\usepackage{xcolor}
\usepackage{bm}
\usepackage{fullpage}
\usepackage{booktabs}

\usepackage{thmtools,thm-restate}

\declaretheorem[name=Lemma]{lemma}

\newtheorem{cor}{Corollary}

\newtheorem{assumption}{Assumption}

\newcommand{\mc}{\mathcal} 

\newcommand{\sign}{\text{sign}}

\newcommand{\de}{\textup{d}} 

\newcommand{\sph}{\mathbb{S}} 
\newcommand{\ind}{\mathbbm{1}} 
\def\c{{\textsf{c}}} 
\def\bzero{{\mathbf{0}}} 
\def\bone{{\mathbf{1}}} 

\def\s{k} 
\def\bA{\boldsymbol{A}}
\def\bX{\boldsymbol{X}}
\def\bZ{\boldsymbol{Z}}

\def\by{\boldsymbol{y}}

\def\bI{\boldsymbol{I}}
\def\bv{\boldsymbol{v}}
\def\bw{\boldsymbol{w}}
\def\bR{\boldsymbol{R}}
\def\bS{\boldsymbol{S}}
\def\bN{\boldsymbol{N}}
\def\bmu{\boldsymbol{\mu}}
\def\bpsi{\boldsymbol{\psi}}
\def\bnu{\boldsymbol{\nu}}
\def\bu{\boldsymbol{u}}
\def\bSigma{\boldsymbol{\Sigma}}
\def\bGamma{\boldsymbol{\Gamma}}
\def\bvareps{\boldsymbol{\varepsilon}}
\def\E{\mathbb{E}} 
\def\P{\mathbb{P}} 
\def\C{{\textsf{c}}} 

\newcommand{\bbR}{\mathbb{R}}

\newcommand{\bx}{\boldsymbol{x}}
\newcommand{\wt}{\widetilde}
\newcommand{\wh}{\widehat}
\newcommand{\ee}{\mc{E}}
\newcommand{\lle}{\wh{\mc L}}
\newcommand{\wb}{\bm w^*}
\newcommand{\wpop}{\bm w^*_{\mc{L}}}
\newcommand{\gw}{g_{\bm{w}}}
\newcommand{\tw}{t_{\bm{w}}}
\newcommand{\snr}{\normso{\bmu}}
\newcommand{\geta}{\bm \Gamma_{\eta}}

\DeclareMathOperator*{\argmin}{arg\,min}

\newcommand{\norm}[1]{\left\lVert#1\right\rVert}
\newcommand{\normso}[1]{\left\lVert#1\right\rVert_{\bm \Sigma^{-1}}}
\newcommand{\norms}[1]{\left\lVert#1\right\rVert_{\bm \Sigma}}
\newcommand{\normg}[1]{\left\lVert#1\right\rVert_{\bm \Gamma}}

\title{Minimax Optimal Early-Stopped Gradient Descent for\\Gaussian Mixture Classification}
\author{Alex Buna \and Shirley Xiaoqi Liu \and Patrick Rebeschini\\[-0.5em]
\and \small Department of Statistics, University of Oxford} 
\date{}

\begin{document}

\maketitle
\renewcommand{\thefootnote}{}\footnotetext{Emails: \texttt{alex.bunamarginean@spc.ox.ac.uk}, \texttt{shirley.liu@stats.ox.ac.uk}, and \texttt{patrick.rebeschini@stats.ox.ac.uk}.}
\renewcommand{\thefootnote}{\arabic{footnote}}

\begin{abstract}
    In overparameterised classification, training data can be linearly separable even when the underlying distribution is not. In this setting, gradient descent (GD) on the logistic loss diverges in norm while converging in direction to a max-margin interpolating classifier, whose implicit bias can be statistically suboptimal. In this work, we show that early stopping can overcome this suboptimality: in a Gaussian mixture model with label-flipping noise, GD stopped at an appropriate oracle time achieves minimax-optimal excess zero-one risk for covariance spectra with fast and continuous decay, including polynomial and exponential spectral decays. Our analysis combines a sharp upper bound for the early-stopped iterate with a matching statistical lower bound over arbitrary classifiers, yielding optimal rates that are validated by experiments. A central technical contribution is a new calibration result that converts excess logistic risk into excess zero-one risk; it handles the model misspecification induced by the label-flipping noise, and  removes the square-root rate in standard bounds. We also establish a lower bound for linear interpolators, showing that interpolation can require exponentially more samples than early stopping to achieve the same excess risk.
\end{abstract}
\addtocontents{toc}{\protect\setcounter{tocdepth}{-1}}

\section{Introduction}
Machine learning models are routinely trained  in   overparameterised  regimes, where the number of parameters $d$ exceeds the sample size $n$. 
While interpolation \cite{Muthukumar2021classification} and early-stopping in gradient descent (GD) \cite{wu2025benefits} are both known to generalise in classification settings under suitable conditions,
their relative statistical efficiency  remains less well-understood than in regression settings. In this work, we study  classification in Gaussian mixtures with label-flipping noise, a canonical model that captures key challenges of high-dimensional classification, including misspecification under the logistic loss. In this setting, we show that early-stopped GD, with an appropriately chosen stopping time, is  \emph{minimax-optimal} in certain regimes, whereas interpolating estimators can require exponentially more samples to achieve the same excess risk.

Overparameterisation has been extensively studied in least-squares regression. Foundational studies \cite{bartlett2020benign, belkin2020two,  hastie2022surprises,tsigler2023benign,muthukumar2020harmless} show that  GD on  least-squares   converges to the minimum $\ell_2$-norm interpolator,   which, under sufficient overparameterisation   and  suitable spectral conditions on the data covariance,    achieves  vanishing excess risk as  $n\to \infty$, a phenomenon termed ``benign overfitting''. Complementary to these results,  early stopping for GD methods
has been shown to act as an  implicit   regulariser in overparameterised regimes,  
   closely connected to   explicit $\ell_2$-regularisation in least-squares regression \cite{yao2007early,suggala2018connecting}. Under suitable spectral and signal regularity  conditions,   appropriately early-stopped iterates  achieve  statistically optimal  rates for    oracle stopping times \cite{Buhlmann2003boosting,yao2007early,lin2017optimal,pillaud2018statistical} and, despite the added difficulty, data-dependent stopping times \cite{raskutti2014early,wei2019early,averyanov2020early,kanade2023statistical}.   
   Most recently, it was shown in \citep{wu2025risk} that GD with a data-dependent stopping rule is minimax-optimal over a range of power-law  spectral classes  defined via the capacity and source conditions \cite{caponnetto2007optimal}.

These  regression  results rely heavily on  closed-form solutions and   linear GD dynamics under least-squares loss. In contrast,  even for  linear classifiers, the logistic loss admits neither property, making the  theory for classification   far less developed  despite   its central role in  machine learning. 
Specifically, in overparameterised logistic regression, GD behaves fundamentally differently from least-squares regression. Its iterates diverge in   $\ell_2$-norm  while converging in direction to  the max-margin 
 on linearly-separable data  \cite{soudry2018implicit,shamir2021gradient}, and   to the same direction up to a bounded~offset~on nonseparable~data~\cite{ji2019implicit}. 
 
 As a consequence of this implicit bias,  classification analyses  \cite{chatterji2021finite, wang2022binary,  cao2021risk, hashimoto2025universality}  depart from those for least-squares regression, taking the max-margin classifier as the key object of study.  
A complementary line of work  \cite{Muthukumar2021classification, hsu2021proliferation,ardeshir2021support}    shows that, under sufficient overparameterisation where all samples become support vectors,  the max-margin classifier  coincides with the minimum $\ell_2$-norm interpolator, enabling tools from least-squares analysis  to transfer to classification \cite{Muthukumar2021classification, hsu2021proliferation, wang2022binary, tsigler2025benign}. Both strands of work characterise  sufficient spectral conditions   under which the max-margin classifier exhibits benign overfitting.   
Moreover, \cite{wang2022binary, tsigler2025benign}  explicitly compare interpolation with the  $\ell_2$-regularised classifier.  
However,  none of these  works on classification  establishes how interpolation compares to early-stopped GD 
in terms of  sample complexity, i.e., the number of samples $n$  required to achieve a target excess zero-one  risk, 
or more fundamentally,  whether interpolation or~early-stopped~GD~is~\emph{statistically~optimal}.

This question is motivated by the phenomenon illustrated in Figure~\ref{fig: u-shape}. When running GD with the logistic loss for binary classification on linearly separable data, the population excess zero-one risk initially decreases, but then increases again as GD approaches the interpolating regime, illustrating that early stopping  improves generalisation, whereas interpolation can be statistically suboptimal.

\begin{figure}[t]
        \centering
        \includegraphics[width=\textwidth]{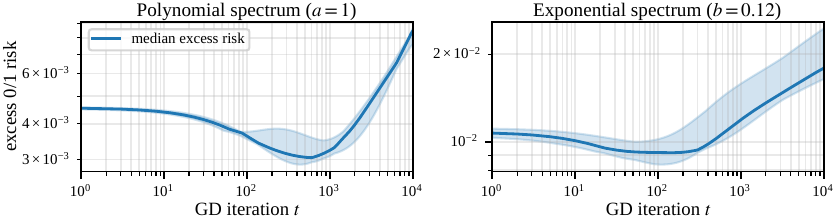}
    \caption{\textbf{Early stopping along the GD trajectory.} Excess zero-one population risk along GD iterates trained on $n=972$ samples from the $d=3888$-dimensional Gaussian mixture model $\bm X\sim \mathcal{N}(\pm\bmu,\bm \Sigma)$ with label-flip probability $p=0.1$ (see \eqref{eq:GMM}). The centres $\bmu$ satisfy $\|\bSigma^{-1}\bmu\|\lesssim 1$. Left: polynomial spectrum $\lambda_i(\bSigma)\asymp i^{-2a}$ with $a=1$. Right: exponential spectrum $\lambda_i(\bSigma)\asymp e^{-bi}$ with $b=0.12$.  The solid curve shows the median over independent repetitions, and the shaded band shows the interquartile range. In both cases, the excess risk first decreases and then increases as GD approaches  interpolation, illustrating the statistical benefit of early stopping.}
    \label{fig: u-shape}
\end{figure}

Characterising this statistical benefit of early stopping is the focus of this work. 
We study a canonical \emph{generative} model in classification: 
Gaussian mixtures  with label-flipping noise. 
In this model, the covariate  vector $\bX\!\in\!\bbR^d$ clusters around  class-conditional means $\pm\bmu\!\in\!\bbR^d$, and~the~observed~label~$Y$~is obtained by flipping a latent    label $\wt{Y}$ independently with probability $p\!\in\![0, 0.5)$.   
Formally, 
 each sample $(\bX, Y)\!\in\! \bbR^d\!\times\! \{\pm 1\}$  is generated by first drawing  a latent label $\wt{Y}\!\in\!\{\pm1\}$~uniformly,~then~setting 
\begin{align}
   \bX=\wt{Y} \bmu + \bvareps, \quad \bvareps\sim \mc{N}(\bzero, \bSigma)\,,\quad \text{and}\quad Y=\begin{cases}
       -\wt{Y}\; &\text{w.p.}\; \;p\,,\\
       \wt{Y}\; &\text{w.p.}\;\; 1-p\,,
   \end{cases}\label{eq:GMM}
\end{align}
where $\bSigma\in \bbR^{d\times d}$ is symmetric positive definite.
A key  challenge in this model is the presence of label-flipping noise, as studied  for Gaussian mixtures in the interpolation regime in  \cite{chatterji2021finite, wang2022binary, hashimoto2025universality, tsigler2025benign}. When $p>0$, the distribution of $Y\mid \bm X$ differs from the sigmoid model and is thus  \emph{misspecified} under the logistic loss. As a result, standard calibration bounds that relate excess zero-one  and logistic risks \citep{zhang2004statistical,bartlett2006convexity} yield suboptimal rates, necessitating new proof techniques.

In this work, we overcome the challenge due to misspecification and prove that, early-stopped GD on the logistic loss is \emph{minimax-optimal} for classification in the model in~\eqref{eq:GMM}  when the covariance spectrum is fast and continuously decaying (FCD), in line with the terminology used in earlier regression literature \cite[Assumption 3]{wu2025risk}. To our knowledge, this is the first result establishing the statistical optimality of early-stopped GD for Gaussian mixture classification.  The FCD class covers a range of spectral decays, including both polynomial and exponential spectra, whereas \cite{wu2025risk}  establishes    minimax rates for GD in  regression within power-law spectral classes,
which correspond to polynomial   decay. Figure~\ref{fig: rates} provides a finite-sample validation of our theory: it plots the empirical excess risk as a function of the sample size under both polynomial and exponential spectra, showing close agreement with the minimax rates  predicted by our  theory,  summarised~in~Table~\ref{tab:summary}.

\begin{figure}[t]

        \centering
        \includegraphics[width=\textwidth]{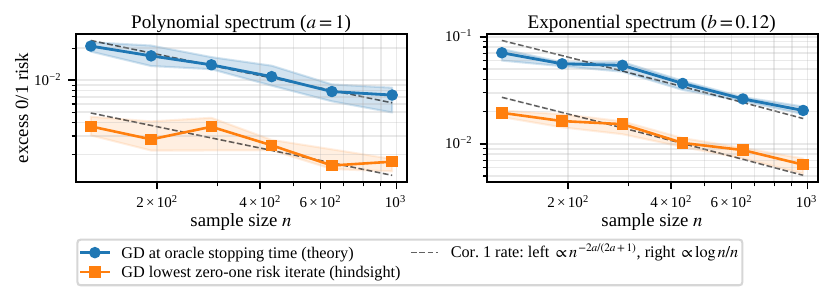}

    \caption{\textbf{Empirical rates for early-stopped GD.}
Excess zero-one population risk as a function of the sample size $n$, under the same experimental setting as  Figure~\ref{fig: u-shape}. The blue curve shows GD at the oracle stopping time used in our theory, while the orange curve shows the GD iterate with the lowest (in hindsight) excess zero-one risk, along the same GD trajectory. Solid curves and shaded bands show medians and interquartile ranges over independent repetitions. 
The thin black dashed lines show the rates from Corollary~\ref{cor: polyexp}, $n^{-2a/(2a+1)}$ and $\log n/n$, up to multiplicative constants.}
    \label{fig: rates}
\end{figure}

\subsection{Main Contributions}
\label{main contributions}

We study GD   under the logistic loss  for Gaussian mixture classification with label-flipping noise. We establish that in certain regimes early stopping is \emph{sufficient and necessary} for statistical optimality:

\begin{itemize}[leftmargin=*]
    \item (Sufficiency) We prove  that for  FCD covariance spectra,  there exists an oracle  stopping time at which   the GD iterate  achieves  minimax-optimal   excess  zero-one  risk 
     (Theorem~\ref{thm: matching bounds}). This is established via an upper bound for the early-stopped GD iterate (Theorem~\ref{thm: upper bound final}) and a  matching statistical lower bound for arbitrary  classifiers 
    (Theorem~\ref{thm: stat lower bound}). 
 To the best of our knowledge, this is the first minimax optimality result for early-stopped GD for Gaussian mixture classification.

    \item For common FCD spectral regimes, specialising Theorem~\ref{thm: matching bounds} yields Corollary \ref{cor: polyexp}, which provides explicit minimax rates for  polynomially and exponentially-decaying spectra. These rates 
    are summarised in Table~\ref{tab:summary}, and are closely matched by  the empirical rates 
    in Figure~\ref{fig: rates}.

    \item A key technical contribution is a calibration result (Lemma~\ref{cor:calibration}) linking excess logistic  and zero-one risks in the Gaussian mixture model with label-flipping noise, which renders the model misspecified under the logistic loss.
    Unlike standard calibration bounds \citep{zhang2004statistical,bartlett2006convexity}, which incur a square-root rate and optimise over all measurable predictors, our result achieves a linear rate and  is restricted to linear classifiers. This sharper calibration   is essential for deriving our~minimax-optimal~upper~bound.
    
    \item (Necessity) Finally, we prove an excess risk  lower bound   over all  linear interpolators (Theorem~\ref{thm:interpolation}), including  the max-margin classifier that GD converges to in direction. As summarised in Table~\ref{tab:summary},  in  scaling regimes such as $d\!\asymp\!n\log n$,  this interpolation lower bound vanishes strictly more slowly in $n$ than the minimax rate of early-stopped GD for polynomial and exponential spectra. This result builds on and generalises Theorem~4.2~of~\cite{wu2025benefits} by removing its signal-sparsity assumption. This  generalisation  is essential for 
    bringing 
    this interpolation lower bound within the scope of 
    our minimax theory, since the worst-case signal directions that~determine~the~minimax~rates~need~not~be~sparse.

\end{itemize}

\begin{table}[t]
  \caption{Summary of main results:  minimax-optimal  
  excess zero-one  risk 
  achieved by early-stopped GD at  the 
  oracle  stopping time $\tau$; excess zero-one risk lower bound at interpolation. 
  }
  \label{tab:summary}
  \centering
  \begin{tabular}{lcc}
  \toprule
      Excess zero-one  risk       
      &\makecell{Polynomial decay \\$\lambda_i (\bSigma)\asymp i^{-2a},\, a>\frac{1}{2}$} & \makecell{Exponential decay\\ $\lambda_i(\bSigma)\asymp e^{-bi}$,\, $b>0$} \\
    \midrule
     At oracle stopping time $\tau$    
     &$\asymp n^{-2a/(2a+1)}$   & $ \asymp\log n/n$   \\
     \midrule
     {At interpolation (for $d\asymp n\log n$)}  
     & $\gtrsim 1/\log n$  
     & $\gtrsim 1/\log n$ 
     \\
    \bottomrule 
  \end{tabular}
\end{table}

\subsection{Related Work}
A first step towards understanding the statistical benefit of early stopping is taken by  \cite{wu2025benefits}, which analyses early-stopped GD in \emph{well-specified} logistic regression, a \emph{discriminative} model. 
It shows that 
early-stopped GD   incurs only polynomial sample complexity in certain overparameterised regimes where interpolation  requires exponential sample complexity. Our work goes beyond \citep{wu2025benefits} in two directions. First, we study a qualitatively distinct   Gaussian mixture model, a generative  model that becomes misspecified under the logistic loss in the presence of label-flipping noise $p>0$. This misspecification requires  new calibration bounds for the analysis. Second, we show that early-stopped GD is not only   
more sample-efficient than interpolation in common overparametrised regimes, but it is, in fact, minimax-optimal for the class of FCD covariance spectra.

Analyses of GD often characterise problem difficulty through the capacity-source conditions,   e.g., \cite{caponnetto2007optimal}, where   capacity    measures  spectral decay and   source  measures  signal alignment 
via a parameter $r$. 
A closely related work to ours on the regression side  is \cite{wu2025risk}, which establishes the minimax optimality  of early-stopped GD over power-law spectral classes. These classes are defined via the capacity-source conditions,   thus corresponding to polynomial spectral  decay, and allowing a general alignment parameter $r\ge0$. Their minimax rates capture  explicit dependence on $r$.  
Our work is similar  in spirit but focuses on  classification under the  logistic loss. We show minimax optimality of early-stopped GD for a broader class of spectral decays than \cite{wu2025risk}, namely the FCD class, which includes both  polynomial and  exponential  spectral decays. However, our minimax  rates can be seen as corresponding to $r=\frac{1}{2}$ and therefore do not explicitly capture dependence on $r$.

Interpolation in Gaussian mixtures   is studied as a canonical linear classification setting  for benign overfitting in recent works  \cite{chatterji2021finite, Muthukumar2021classification,  wang2022binary, cao2021risk, hashimoto2025universality, tsigler2025benign}; see \cite[Table 1]{hashimoto2025universality} for a summary of results across different regimes depending on the  covariance spectrum, the signal-to-noise ratio,
and label-flip probability. Our work studies the same model with general covariance spectrum, but focuses on  the statistical performance of early-stopped GD, using interpolation as an important point  of comparison.

In contrast to the finite-sample results reviewed above and developed in this paper, a long line of work \cite{sur2019modern,salehi2019salehi,deng2022model,kini2020analytic,mignacco2020role,kammoun2021precise,salehi2020performance,montanari2025generalization}
studies  overparameterised linear classification in the   proportional \emph{asymptotic} regime, where $d,n\!\to\! \infty$ with $d/n$ converging to a constant. This regime enables precise asymptotic characterisation of 
the generalisation error. However, these results often rely on  systems of equations that are hard to approach analytically and   focus on  isotropic covariance, with less attention given to      anisotropic settings, which are  important for generalisation~of~$\ell_2$-regularised~solutions~\cite{bartlett2020benign, Muthukumar2021classification}. 

\paragraph{Notation:} We use ordinary letters for scalars,   boldface letters for vectors or matrices,  uppercase letters for random variables, and lowercase for their realisations. For a symmetric positive definite matrix  $\bSigma\in\mathbb{R}^{d\times d}$, let  $\operatorname{tr}(\bSigma):=\sum_{i=1}^d \bSigma_{ii}$ denote its trace.  
For a vector $\bm x\in\mathbb{R}^d$ we denote $\|\bx\|:=\sqrt{\bx^\top \bx}$ and  $\|\bx\|_{\bSigma}:=\sqrt{\bx^\top \bSigma \bx}$. 
For an arbitrary matrix $\mathbf X\in \bbR^{n\times d}$, let $\|\mathbf X\|:=\sup_{\bv\in\bbR^d, \|\bv\|=1}\|\mathbf X\bv\|$ denote its  spectral norm.  
We use $\C$ with various subscripts to denote positive universal constants independent of $n$ and $d$. 
Throughout, $a\lesssim b$ (resp.\ $a\gtrsim b$) means $a\le \C_1 b$ (resp.\ $a\ge \C_2 b$) for universal constants $\C_1, \C_2>0$, and $a\asymp b$ means $a\lesssim b$ and $a\gtrsim b$ hold simultaneously.

\section{Problem Setup}
\label{problem setup}

We study binary classification in the Gaussian mixtures model defined in \eqref{eq:GMM}. 
For a classifier $h:\bbR^d\to \{\pm1\}$, define its population zero-one risk  as
\begin{align*}
    \mc{E}(h):=\P\left(h(\bX)\ne Y\right),
\end{align*} 
and   for a linear classifier parametrised by   $\bm w\in\mathbb{R}^d$,  $h_{\bw}(\bx)=\operatorname{sign}(\bx^\top\bw)$, 
we abbreviate 
\begin{align*}
    \mc{E}(\bw):=\mc{E}(h_{\bw})\,.
\end{align*}  
It is known (see, e.g., \citep{hastie2009elements}) that the Bayes classifier is unique and linear, given by 
\begin{align*}
    \argmin_{h:\bbR^d\to\{\pm1\}} \mc{E}(h)=h_{\bw^*},
    \quad \text{where}\quad
\bw^*:= \bSigma^{-1}\bmu.
\end{align*}
Hence, the goal of a training algorithm is to learn a linear classifier $\bw$ that minimises the  excess zero-one risk $\ee(\bm w)-\ee(\bw^*)$.\footnote{If, instead, one is interested in measuring the excess zero-one risk with respect to the latent labels $\wt Y$, note that the identity $\ee(\bm w)=p+(1-2p)\mathbb{P}(\operatorname{sign}(\bm X^\top \bm w)\neq \wt Y)$ gives an exact characterisation. Hence, the Bayes classifiers coincide under both risk measures and the excess risks differ only by a factor of $1-2p$. By the same identity, when $p=0.5$, i.e., $Y$ is random noise, $\ee(\bm w)=0.5$ for any $\bw\in\mathbb{R}^d$ and the excess risk is always 0. To avoid such degenerate cases, we fix $p<0.5$.}

Since the zero-one loss is discontinuous, training commonly  minimises 
 the  logistic loss $\ell(z)=\log(1+\exp(-z))$,  a smooth  convex surrogate. 
 As before, we define the logistic risk of a generic classifier $h:\bbR^d\to \{\pm1\}$ or  a linear classifier $h_{\bw}(\bm x)=\operatorname{sign}(\bm x^\top \bw)$, respectively, as:
 \begin{align*}
        \mc{L}(h):=\mathbb{E}[\ell(Y  h(\bX) )]\,\quad \text{and}\quad \mc{L}(\bm w):=\mc{L}(h_{\bw})\,.
    \end{align*} 
   The following lemma characterises  the  minimiser  
\begin{align*}
        \qquad \bw_{\mc{L}}^*\in\underset{\bm w\in\mathbb{R}^d}{\arg\min}\,\mc{L}(\bm w)\,,
    \end{align*}
 which plays   a central role in  our early-stopped GD analysis:

\begin{restatable}[Population logistic risk minimiser]{lemma}{lemmawpop}
    \label{lemma: w_pop}
   The minimiser of  $ \mc{L}(\bw)$ over $\bbR^d$ is unique  
    and   has the form 
   $\bm w_{\mc{L}}^*=\rho^*\bm w^*$   where $\rho^*\in(0,2]$. 
\end{restatable}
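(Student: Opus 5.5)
The plan is to interpret $\mc{L}(\bw)$ as $\E[\ell(Y\bX^\top\bw)]$ and to collapse it to a function of two scalars. Set $Z:=Y\wt Y\in\{\pm1\}$, so $Z=1$ w.p.\ $1-p$. Then
\[
Y\bX^\top\bw = Z\,\bmu^\top\bw + Y\bvareps^\top\bw .
\]
Since $\bvareps$ is symmetric and independent of $(\wt Y,Y)$, the term $S:=Y\bvareps^\top\bw\sim\mc N(0,\|\bw\|_{\bSigma}^2)$ is independent of $Z$. Hence
\[
\mc L(\bw)=\varphi\bigl(\bmu^\top\bw,\|\bw\|_{\bSigma}\bigr),\qquad \varphi(m,s):=\E\,\ell(Zm+sG),\quad G\sim\mc N(0,1)\ \text{independent of } Z.
\]

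\textbf{Existence and uniqueness.} The Hessian $\E[\ell''(Y\bX^\top\bw)\bX\bX^\top]$ is positive definite, because $\ell''>0$ and $\bX$ has a nondegenerate Gaussian component. So $\mc L$ is strictly convex. It is also coercive: for $\bw\ne\bzero$, $Y\bX^\top\bw$ is negative with positive probability, again by the full-support Gaussian noise. A unique minimiser $\bw^*_{\mc L}$ therefore exists.

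\textbf{Direction of the minimiser.} For fixed $m$, the map $s\mapsto\varphi(m,s)$ is convex (since $\ell$ is convex) and even in $s$ (since $G\overset{d}{=}-G$). It is therefore nondecreasing on $s\ge0$, and strictly increasing by strict convexity of $\ell$. Consequently, among all $\bw$ with a fixed value $\bmu^\top\bw=m$, $\mc L$ is minimised by minimising $\|\bw\|_{\bSigma}$. By Cauchy--Schwarz in the $\bSigma$-inner product, this minimum is attained uniquely at $\bw\propto\bSigma^{-1}\bmu=\bw^*$. Hence $\bw^*_{\mc L}=\rho^*\bw^*$, where $\rho^*$ minimises the one-dimensional convex function
\[
f(\rho):=\E\,\ell(\rho U),\qquad U:=Z\kappa+\sqrt{\kappa}\,G,\quad \kappa:=\|\bmu\|_{\bSigma^{-1}}^2>0.
\]

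\textbf{Bounds on $\rho^*$.} Write $g(\rho):=f'(\rho)=\E[\ell'(\rho U)U]$, which is nondecreasing since $f$ is convex.
\begin{itemize}
\item \emph{Positivity.} Using $\ell'(0)=-\tfrac12$, we get $g(0)=-\tfrac12\E U=-\tfrac12(1-2p)\kappa<0$, because $p<1/2$. Thus $\rho^*>0$.
\item \emph{Upper bound.} It suffices to show $g(2)\ge0$. Split on $Z$: $g(2)=(1-p)A+pB$, with $A=\E[\ell'(2W)W]$ and $B=\E[\ell'(2V)V]$, where $W\sim\mc N(\kappa,\kappa)$ and $V\sim\mc N(-\kappa,\kappa)$.
\end{itemize}

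\textbf{Main obstacle: the two Gaussian expectations.} This is the step requiring care.
\begin{itemize}
\item \emph{$A=0$.} This is the well-specified case $p=0$, where $\P(Y=1\mid\bX)=\sigma(2\bX^\top\bw^*)$ makes $2\bw^*$ a stationary point. I would verify it directly via the identity $\phi_W(-z)=e^{-2z}\phi_W(z)$ for the density $\phi_W$ of $W$, together with $\ell'(z)=-1/(1+e^{z})$; a change of variables $z\mapsto-z$ then shows the integrand cancels.
\item \emph{$B=\kappa$.} Use $V\overset{d}{=}-W$ and the identity $\ell'(-z)=-1-\ell'(z)$. Then
\[
B=-\E[\ell'(-2W)W]=\E[(1+\ell'(2W))W]=\E W + A=\kappa.
\]
\end{itemize}
Hence $g(2)=p\kappa\ge0$. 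Since $g$ is nondecreasing and $g(0)<0\le g(2)$, the zero $\rho^*$ of $g$ lies in $(0,2]$, with equality exactly when $p=0$.
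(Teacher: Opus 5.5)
Your proof is correct. It shares the paper's skeleton: reduce to the line through $\bw^*$, then read off the signs $f'(0)=-\tfrac{1-2p}{2}\normso{\bmu}^2<0$ and $f'(2)=p\normso{\bmu}^2\ge 0$. You execute each step differently, as follows.

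\textbf{Reduction to the line.} The paper writes $\bw=\rho\bw^*+\bu$ with $\bu^\top\bmu=0$ and conditions on $\bX$ through the posterior $\P(Y=1\mid\bX)=p+(1-2p)\sigma(2\bX^\top\bw^*)$. It then applies conditional Jensen to the strictly convex $\vartheta_{T^*}$, using that $T^*=\bX^\top\bw^*$ and $T_{\bu}=\bvareps^\top\bu$ are independent. You instead use the flip indicator $Z=Y\wt Y$ to show that $\mc L(\bw)$ depends only on $(\bmu^\top\bw,\|\bw\|_{\bSigma})$. You then combine strict monotonicity of an even, strictly convex function with Cauchy--Schwarz in the $\bSigma$-inner product. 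Both arguments express the same fact: independent symmetric noise inside a convex loss can only hurt, and $\|\bw\|_{\bSigma}^2=\rho^2\normso{\bmu}^2+\|\bu\|_{\bSigma}^2$. Your version never needs the posterior formula and makes the two-scalar structure explicit. The paper's decomposition has the advantage that it is reused in the $\bSigma$-geometry calibration lemma.

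\textbf{Evaluation at $\rho=2$.} The paper uses the Bayes identity $\E[\wt Y\mid\bX]=2\sigma(2T^*)-1$. You verify $A=0$ directly from the likelihood ratio $\phi_W(-z)=e^{-2z}\phi_W(z)$ and obtain $B$ from $\ell'(-z)=-1-\ell'(z)$. Your computation is more self-contained. The paper's phrasing shows more transparently why $\rho=2$ is the well-specified ($p=0$) optimum.

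\textbf{Existence.} You prove existence of a minimiser up front, via strict convexity and coercivity on $\bbR^d$. The paper obtains it only implicitly, from $\mc L(\bw)\ge f(\rho)\ge f(\rho^*)$.

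One small point: in your last step, ``$g$ nondecreasing'' is not enough to force $\rho^*\le 2$ when $p=0$. You need $g$ strictly increasing, or uniqueness of the minimiser of $f$. You have both from strict convexity, so this is only a matter of stating it.
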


\paragraph{Training via GD:} Let $(\bm X_i,Y_i)_{i=1}^n$ be   $n$ training samples drawn i.i.d.\ from the joint distribution of $(\bX, Y)$ in \eqref{eq:GMM}.  
The most common  training procedure is to minimise   the empirical logistic risk  
    \begin{align*}
\widehat{\mc{L}}(\bm w):=\frac{1}{n}\sum_{i=1}^n\ell(Y_i\bm X_i^\top \bm w) ,
    \end{align*} 
    over linear classifiers   via GD, 
   initialised at the origin with a fixed step size $\gamma>0$:
\begin{align*}
    \bm w_0 = \bm 0,\quad \quad  \bm w_{t+1}=\bm w_t-\gamma \nabla \lle (\bm w_{t}) \quad \text{for  $t\geq 0$}\,.
\end{align*}
Let  $\mathbf X_{1:n}\in\mathbb{R}^{n\times d}$ denote  the design matrix whose rows are $\bm X_1^\top, \dots, \bX_n^\top$, and let    $\|\mathbf{X}_{1:n}\|$ denote its spectral norm. To ensure monotone descent, we set  the step size 
to 
$    \gamma\leq \min (1/\beta, 1)$,
where  $\beta:= \norm{\mathbf{X}_{1:n}}^2/n$  is the smoothness constant of $\wh{\mc{L}}$ (see  Lemma \ref{lemma:lle gradient hessian} in Appendix \ref{sec:optimisation_lemmas}). 

\paragraph{Early stopping:} We focus on the overparameterised regime with $ d\ge n$, where two contrasting phenomena arise that explain the empirical observations in Figure \ref{fig: u-shape}.  On one hand,  the training samples $(\bX_i, Y_i)_{i=1}^n$ are almost surely \emph{linearly separable},  that is, there exists some $\bm w\in\mathbb{R}^d$ such that 
    \begin{align*}
\min_{ 1\leq i\leq n} Y_i\bm X_i^\top \bm w>0\,.
    \end{align*}
On the other hand,  the Bayes risk $\mc{E}(\bw^*)>0$, implying the  population distribution is \emph{not} linearly separable, even when $p=0$. In this regime, the implicit bias of GD under the  empirical logistic risk \cite{soudry2018implicit,ji2019implicit} drives the GD iterates to  diverge in norm while converging in direction to the max-margin classifier, an  interpolating solution  that  can incur suboptimal excess risk, as we prove in Section \ref{subsec:interpolation}. To prevent this, we  propose to  stop GD early, at iteration 
    \begin{align}\label{eq:def_tau}
        \tau:=\inf\{t\geq 1:\norm{\bm w_t}\geq 4\norm{\wb}\}.
    \end{align}

\begin{restatable}[Existence of $\tau$]{lemma}{tauexists}
\label{lem:existence_tau}
     Suppose $n\leq d$.
    Then $\norm{\bm w_t}\rightarrow\infty$ as $t\rightarrow\infty$ almost surely, and so $\tau=\inf\{t\geq 1:\norm{\bm w_t}\geq 4\norm{\wb}\}$ is finite. 
\end{restatable}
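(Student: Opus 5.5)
The plan is to show two things: the training data are almost surely linearly separable when $n\le d$, and on separable data GD on $\wh{\mc L}$ with step size $\gamma\le\min(1/\beta,1)$ drives $\wh{\mc L}(\bw_t)\to 0$. Since $\ell>0$ everywhere, $\wh{\mc L}(\bw_t)\to 0$ forces $\min_i Y_i\bX_i^\top\bw_t\to\infty$, and therefore $\norm{\bw_t}\to\infty$ by Cauchy--Schwarz (using $\max_i\norm{\bX_i}<\infty$). Finiteness of $\tau$ is then immediate, because $4\norm{\wb}$ is a fixed finite threshold.

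\textbf{Separability.} The rows $\bX_1,\dots,\bX_n$ are jointly Gaussian with a nondegenerate density on $\bbR^{n\times d}$, since $\bSigma\succ 0$. The set of matrices with rank $<n$ is a proper algebraic subvariety, so it has Lebesgue measure zero. Hence almost surely $\mathbf X_{1:n}$ has full row rank $n$ when $n\le d$. We can then solve $\mathbf X_{1:n}\bv=\by$ with $\by=(Y_1,\dots,Y_n)^\top$, for example $\bv=\mathbf X_{1:n}^\top(\mathbf X_{1:n}\mathbf X_{1:n}^\top)^{-1}\by$. This gives $Y_i\bX_i^\top\bv=1>0$ for all $i$, so the data are separable with some unit direction $\bu=\bv/\norm{\bv}$ and margin $m>0$.

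\textbf{Loss goes to zero.} By Lemma~\ref{lemma:lle gradient hessian}, $\wh{\mc L}$ is convex and $\beta$-smooth. Hence for $\gamma\le 1/\beta$ the standard descent inequality applies, and the standard convex GD bound holds for any comparator $\bz$:
\begin{align*}
\wh{\mc L}(\bw_t)\le \wh{\mc L}(\bz)+\frac{\norm{\bz}^2}{2\gamma t}.
\end{align*}
We take $\bz=R\bu$, so that $\wh{\mc L}(\bz)\le \log(1+e^{-Rm})$. Choosing $R=\sqrt{t}$, both terms vanish as $t\to\infty$, so $\wh{\mc L}(\bw_t)\to0$.

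\textbf{Divergence of the norm.} Once $\wh{\mc L}(\bw_t)<\log 2/n$, each sample satisfies $\ell(Y_i\bX_i^\top\bw_t)\le n\wh{\mc L}(\bw_t)$. This forces $Y_i\bX_i^\top\bw_t\ge \ell^{-1}(n\wh{\mc L}(\bw_t))\to\infty$ for every $i$. Then $\norm{\bw_t}\ge Y_1\bX_1^\top\bw_t/\norm{\bX_1}\to\infty$, which completes the proof.

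There is no real obstacle. The only care needed is in two places: the almost-sure full-rank argument, which needs absolute continuity of the Gaussian design, and making sure the GD convergence bound uses only $\gamma\le1/\beta$ without assuming that a minimiser exists.
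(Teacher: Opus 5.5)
Your plan works, but one step fails as written. In the loss-to-zero step, the comparator $\bm z=R\bu$ with $R=\sqrt t$ gives $\norm{\bm z}^2/(2\gamma t)=1/(2\gamma)$. That is a positive constant independent of $t$. So the bound only yields $\lle(\bw_t)\le\log(1+e^{-m\sqrt t})+\frac{1}{2\gamma}$, and you cannot conclude $\lle(\bw_t)\to0$.

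The repair is immediate. Take any $R_t\to\infty$ with $R_t^2/t\to0$, e.g.\ $R_t=t^{1/4}$, which gives $\lle(\bw_t)\le\log(1+e^{-m t^{1/4}})+\frac{1}{2\gamma\sqrt t}\to0$. Alternatively, fix $R$ first to get $\limsup_{t}\lle(\bw_t)\le\log(1+e^{-Rm})$, then let $R\to\infty$.

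A smaller imprecision: the design is not jointly Gaussian but a Gaussian mixture, Gaussian only conditionally on the latent labels $\wt Y_i$. It is still absolutely continuous, so your measure-zero rank argument is unaffected. With these fixes your proof is complete.

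Relative to the paper, your route differs in how separability and divergence are obtained. The paper also argues full rank almost surely. It then cites Proposition~2.2 of \cite{wu2025benefits} for separability, and the implicit-bias results of \cite{soudry2018implicit,shamir2021gradient} for $\norm{\bw_t}\to\infty$.

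You instead exhibit the separator explicitly via $\mathbf X_{1:n}^\top(\mathbf X_{1:n}\mathbf X_{1:n}^\top)^{-1}\by$. You then derive divergence from the elementary convex--smooth GD bound with a growing comparator. This is the same inequality (Lemma~3.3 of \cite{wu2025benefits}) the paper already invokes for Lemma~\ref{ESproperties}(ii).

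Your version is self-contained and needs only convexity, $\beta$-smoothness, $\gamma\le1/\beta$, and $\inf\lle=0$. The cited implicit-bias route gives more, namely directional convergence to the max-margin classifier. The paper uses that only for motivation, and it is not needed for this lemma.
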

Our stopping rule in \eqref{eq:def_tau} ensures that the early-stopped iterate $\bw_\tau$ remain bounded in the sense of Lemma \ref{ESproperties} (i) below. On a technical level, this allows us to localise the iterate $\bw_\tau$, and thus apply a  refined concentration analysis 
 yielding the sharp upper bound   in Theorem~\ref{thm: upper bound final} below. 
At the same time, stopping too early would prevent  the empirical logistic risk $\wh{\mc{L}}(\bw_\tau)$  from decreasing sufficiently, which is needed to ensure   progress towards a good classifier.
As  is customary in risk decomposition, we show that our  stopping time also  guarantees $\wh{\mc{L}}(\bw_\tau)\le\wh{\mc{L}}(\bw^*_{\mc{L}}) $ (Lemma \ref{ESproperties} (ii)).

\begin{restatable}[Properties of the early-stopped iterate]{lemma}{wtauproperties}
\label{ESproperties}
    The early-stopped iterate $\bm w_\tau$ satisfies the following: (i) $\norm{\bm w_\tau}<4\norm{\wb}+1$; and (ii) $\lle(\bm w_{\tau})\leq\lle(\bm w^*_{\mc{L}})$.
\end{restatable}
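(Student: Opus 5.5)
For (i), I will exploit the minimality of $\tau$ together with a bound on the size of a single GD step. Since $\tau$ is the first $t\ge1$ with $\norm{\bm w_t}\ge 4\norm{\wb}$, we have $\norm{\bm w_{\tau-1}}<4\norm{\wb}$. This holds trivially when $\tau=1$, because $\bm w_0=\bm 0$ and $\wb\neq\bm 0$. By the triangle inequality,
\[
\norm{\bm w_\tau}\le\norm{\bm w_{\tau-1}}+\gamma\norm{\nabla\lle(\bm w_{\tau-1})}.
\]
The gradient is $\nabla\lle(\bm w)=-\frac1n\sum_i \sigma(-Y_i\bm X_i^\top\bm w)Y_i\bm X_i=\frac1n\mathbf X_{1:n}^\top\bm v$ with $|v_i|\le 1$. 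Hence
\[
\norm{\nabla\lle(\bm w)}\le \norm{\mathbf X_{1:n}}\sqrt n/n=\sqrt{\beta}.
\]
The step size condition $\gamma\le\min(1/\beta,1)$ then gives $\gamma\sqrt\beta\le\min(1/\sqrt\beta,\sqrt\beta)\le 1$. This yields $\norm{\bm w_\tau}<4\norm{\wb}+1$.

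For (ii), I plan to use the standard GD-on-convex-smooth argument comparing against a fixed reference point $\bm u=\wpop$. By Lemma~\ref{lemma: w_pop}, $\norm{\bm u}=\rho^*\norm{\wb}\le 2\norm{\wb}$. Since $\gamma\le 1/\beta$ and $\lle$ is convex and $\beta$-smooth (Lemma~\ref{lemma:lle gradient hessian}), the usual descent-plus-convexity inequality gives, for every $t$,
\[
\lle(\bm w_{t+1})\le\lle(\bm u)+\frac{1}{2\gamma}\left(\norm{\bm w_t-\bm u}^2-\norm{\bm w_{t+1}-\bm u}^2\right).
\]
Apply this at $t=\tau-1$. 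If $\norm{\bm w_{\tau}-\bm u}\ge\norm{\bm w_{\tau-1}-\bm u}$, we immediately get $\lle(\bm w_\tau)\le\lle(\bm u)$. So the task reduces to showing that the distance to $\bm u$ does not decrease on the last step.

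This is where I expect the main difficulty, and I will handle it with the norm constraint. We know $\norm{\bm w_\tau}\ge4\norm{\wb}$ and $\norm{\bm w_{\tau-1}}<4\norm{\wb}$, but that alone does not order the distances to $\bm u$. I therefore plan to sum the inequality over all $t=0,\dots,\tau-1$ and use monotonicity of $\lle(\bm w_t)$, which holds since $\gamma\le1/\beta$. This gives
\[
\tau\bigl(\lle(\bm w_\tau)-\lle(\bm u)\bigr)\le\frac{1}{2\gamma}\left(\norm{\bm u}^2-\norm{\bm w_\tau-\bm u}^2\right).
\]
Now $\norm{\bm w_\tau-\bm u}\ge\norm{\bm w_\tau}-\norm{\bm u}\ge4\norm{\wb}-2\norm{\wb}=2\norm{\wb}\ge\norm{\bm u}$. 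The right-hand side is therefore nonpositive, which gives $\lle(\bm w_\tau)\le\lle(\wpop)$. The factor $4$ in the definition of $\tau$ is exactly what makes this work, given $\rho^*\le2$.

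The remaining step is the routine verification of the one-step inequality. It follows from smoothness, $\lle(\bm w_{t+1})\le\lle(\bm w_t)-\frac{\gamma}{2}\norm{\nabla\lle(\bm w_t)}^2$, combined with convexity, $\lle(\bm w_t)\le\lle(\bm u)+\nabla\lle(\bm w_t)^\top(\bm w_t-\bm u)$, and then completing the square.
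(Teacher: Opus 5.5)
Your proposal is correct and follows the paper's proof. Part (i) is the same argument, minimality of $\tau$ plus the one-step bound $\gamma\sqrt{\beta}\le 1$. In part (ii), your summed inequality is exactly the GD comparator bound (Lemma 3.3 of \cite{wu2025benefits}) that the paper invokes, combined with the same estimate $\norm{\bw_\tau-\wpop}\ge 4\norm{\wb}-2\norm{\wb}\ge\norm{\wpop}$ from $\rho^*\le 2$. The only differences are cosmetic: you derive that inequality from smoothness and convexity rather than citing it, and you conclude directly rather than by contradiction.
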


Similar oracle stopping rules, typically derived from bias-variance tradeoffs, are commonly used in the statistical analysis of early stopping \citep{Buhlmann2003boosting,yao2007early,lin2017optimal,pillaud2018statistical,wu2025benefits}.

We detail the proof of Lemmas~\ref{lemma: w_pop} in Appendix~\ref{sec:proof_logistic_minimiser}
 and the proofs of Lemmas~\ref{lem:existence_tau}--\ref{ESproperties} in Appendix~\ref{sec:optimisation_lemmas}.

\section{Minimax Optimality of Early-Stopped GD}
\label{minimax optimality of early-stopped gd}

In this section, we present our main results,   establishing the statistical optimality of  GD  at the oracle stopping time $\tau$. We first prove  a high-probability upper bound on the excess zero-one risk of the early-stopped GD iterate $\bw_\tau$    (Section \ref{subsec: upper bound}), and then  a lower bound  that applies to  any arbitrary  classifier, including the linear classifier induced by $\bm w_\tau$ (Section \ref{subsec: stat lower bound}). Finally, we show that these bounds match as functions of the sample size $n$, for  the FCD spectral class, 
including polynomially and exponentially-decaying spectra (Section \ref{sec:fast and cont}).
Throughout, our results  depend on the spectrum of $\bSigma$, whose eigenvalues we denote by $\lambda_1\geq\lambda_2\geq\ldots\geq\lambda_d>0$.

\subsection{Upper Bound for Early-Stopped GD}
\label{subsec: upper bound}

We first present our excess zero-one risk upper bound for the early-stopped GD iterate $\bm w_{\tau}$.
This  bound involves  the spectrum of  $\bm \Sigma$ through a soft-thresholding operator, namely the \emph{effective rank} of $\bSigma$, defined for every $\eta>0$ as
\begin{align*}
    r_{\operatorname{eff}}(\eta):=\operatorname{tr}(\bm \Sigma(\bm \Sigma+\eta \bm I_d)^{-1})=\sum_{i=1}^d\frac{\lambda_i}{\lambda_i+\eta}\,.
\end{align*}
This  corresponds to the effective dimension commonly used in analyses of ridge regression,~e.g.,~\cite{caponnetto2007optimal}. We also recall from Lemma \ref{lemma: w_pop} that  the population logistic risk minimiser has the form 
   $\bm w_{\mc{L}}^*=\rho^*\bm w^*$   where $\rho^*\in(0,2]$. These quantities will be used in the theorem below.

\begin{restatable}[Upper bound for early-stopped GD]{thm}{upperbound}
\label{thm: upper bound final}
    There exist a universal constant $\c_0>0$ and some $\c_1,\c_2,\c_3,\c_4>0$ depending on $\bmu$ and $\bm\Sigma$ only, but not on $n$ or $p$, such that the following holds:
    
    Fix $\delta\!\in\!(0,1)$ and $n\geq 3$. Suppose there exists $\eta_n>0$ satisfying $n\geq\c_0(r_{\operatorname{eff}}(\eta_n)+\!1\!+\log(2/\delta))$ and
    \begin{align}
        \label{thm condition}
        \Psi(\eta_n):=\frac{r_{\operatorname{eff}}(\eta_n)+\log(1/\delta)+\log\log(n)+\c_2}{n}+\c_3\eta_n\leq \c_4(\rho^*)^2\norm{\wb}_{\bSigma}^2\ .
    \end{align}
    Then, with probability at least $1-\delta$,
    \begin{align*}
        \ee (\bm w_{\tau})-\ee(\bw^*) \leq \c_1\frac{1-2p}{(\rho^*)^2\norm{\wb}_{\bSigma}}\Psi(\eta_n).
    \end{align*}
\end{restatable}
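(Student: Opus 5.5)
The proof has three steps. First, bound the excess logistic risk $\mc L(\bw_\tau)-\mc L(\bw^*_{\mc L})$ with a localised uniform deviation argument. Second, convert this into excess zero-one risk through the calibration result, Lemma~\ref{cor:calibration}. Third, use condition \eqref{thm condition} to make sure the iterate lies in the regime where that calibration applies.

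For the first step, Lemma~\ref{ESproperties}(ii) gives $\lle(\bw_\tau)\le\lle(\bw^*_{\mc L})$. Hence
\[
\mc L(\bw_\tau)-\mc L(\bw^*_{\mc L})\le \big[(\mc L-\lle)(\bw_\tau)-(\mc L-\lle)(\bw^*_{\mc L})\big].
\]
Lemma~\ref{ESproperties}(i) puts $\bw_\tau$ in the ball $\{\|\bw\|\le 4\|\bw^*\|+1\}$. I would control the centred empirical process
\[
\sup_{\bw}\big|(\mc L-\lle)(\bw)-(\mc L-\lle)(\bw^*_{\mc L})\big|
\]
over this ball, localised by $\|\bw-\bw^*_{\mc L}\|_{\bSigma}$. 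The logistic loss is $1$-Lipschitz, so contraction reduces the problem to the Gaussian process $\bw\mapsto \frac1n\sum_i\varepsilon_i Y_i\bX_i^\top(\bw-\bw^*_{\mc L})$.

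To bring in the effective rank, split $\bw-\bw^*_{\mc L}$ through the regularised metric $\bSigma+\eta_n\bI$. The Gaussian width of $\{\bv:\|\bv\|_{\bSigma}\le r,\ \|\bv\|\le R\}$ is at most of order $\sqrt{(r^2+\eta_nR^2)\,r_{\rm eff}(\eta_n)}$. Since $R\lesssim\|\bw^*\|$, the $\eta_n R^2$ term is what produces $\c_3\eta_n$. The condition $n\gtrsim r_{\rm eff}(\eta_n)$ supplies the needed concentration of the empirical covariance in this norm.

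Logistic risk has local strong convexity around $\bw^*_{\mc L}$ along the $\bSigma$-norm, with curvature constants depending on $\bmu,\bSigma$. I would get this from the Hessian $\E[\ell''(Y\bX^\top\bw)\bX\bX^\top]$, which is bounded below on the bounded ball. A peeling/fixed-point argument over dyadic shells of $r$ then yields
\[
\|\bw_\tau-\bw^*_{\mc L}\|_{\bSigma}^2\lesssim \mc L(\bw_\tau)-\mc L(\bw^*_{\mc L})\lesssim \Psi(\eta_n).
\]
The union over shells costs the $\log\log n$ term, and $\log(1/\delta)$ enters through Gaussian concentration.

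For the second step, I would apply Lemma~\ref{cor:calibration} for linear classifiers. I expect it to take the form
\[
\ee(\bw)-\ee(\bw^*)\lesssim \frac{1-2p}{(\rho^*)^2\|\bw^*\|_{\bSigma}}\big(\mc L(\bw)-\mc L(\bw^*_{\mc L})\big),
\]
valid when $\bw$ is close enough in angle to $\bw^*$. The mechanism is that the excess zero-one risk of a linear rule is quadratic in the angle to $\bw^*$ measured in the $\bSigma$ geometry. This quadratic behaviour gives a linear rather than square-root link, and it is exactly where the misspecification from $p>0$ is absorbed.

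For the third step, condition \eqref{thm condition} guarantees $\|\bw_\tau-\rho^*\bw^*\|_{\bSigma}^2\le \c_4(\rho^*)^2\|\bw^*\|_{\bSigma}^2$. With $\c_4$ small, this keeps $\bw_\tau$ aligned with $\bw^*$: its $\bSigma$-norm stays comparable to $\rho^*\|\bw^*\|_{\bSigma}$ and its angle stays small. The calibration hypothesis therefore holds.

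The main obstacle is the localised deviation bound in the first step. It needs the sharp rate $r_{\rm eff}/n+\eta_n$ rather than a square root. That requires combining the local curvature of $\mc L$, which degrades with $\|\bw\|$ and so relies on the norm localisation from $\tau$, with a variance-sensitive (Bernstein-type) bound on the increments. I would try Talagrand's inequality with the variance proxy $\|\bw-\bw^*_{\mc L}\|_{\bSigma}^2$ first.
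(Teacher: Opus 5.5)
Your proposal follows the paper's proof essentially step for step. The steps are: Lemma~\ref{ESproperties} for the decomposition and the ball; a localised Rademacher bound in an $\eta$-regularised metric giving $r_{\operatorname{eff}}(\eta_n)/n+\eta_n$ (the paper localises by excess risk and uses $\bGamma+\eta\bI_d$, which matches your $\bSigma$-distance localisation up to constants depending on $\bmu,\bSigma$); Hessian-based local strong convexity on the ball; dyadic peeling for the $\log\log n$ term; and Lemma~\ref{cor:calibration} with its hypothesis enforced by \eqref{thm condition}. The one step to make explicit is that $g_{\bw}=\ell(Y\bX^\top\bw)-\ell(Y\bX^\top\wpop)$ is unbounded, so Talagrand/Bousquet does not apply directly. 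The paper clips $g_{\bw}$ at $m_r=\sqrt{n\varrho_r/\log(1/\delta)}$ with $\varrho_r=2r/\alpha+4\eta R^2$, applies the Bernstein-type bound of Bartlett et al.\ to the clipped class, and controls the residual through $(|g|-m_r)_+\le g^2/(4m_r)$ on the event $E_{\operatorname{cov}}$; this is exactly where $n\gtrsim r_{\operatorname{eff}}(\eta_n)$ enters.
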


    The  proof of Theorem \ref{thm: upper bound final}, including explicit constants, is provided in Appendix \ref{app: upper bound proof},
    with a preview of the
    main technical challenges and   
      new proof techniques  
      provided in Section \ref{sec: new proof techniques}.
    We now interpret the key quantities  in the   bound to build intuition:

        The upper bound depends on $\Psi(\eta_n)$, which must be small enough for condition \eqref{thm condition} to hold. Since the right hand side of \eqref{thm condition} is constant in $n$, this is ensured if $\Psi(\eta_n)\to 0$ as $n\rightarrow\infty$. Moreover, the bound $\ee(\bw_\tau)-\ee(\wb)\lesssim \Psi(\eta_n)$ shows that the same behaviour  suffices for vanishing excess risk. 
        Hence, condition \eqref{thm condition}   imposes no  restrictions beyond those already sufficient for vanishing excess risk.

        The  tightness of the upper bound is governed by the choice of $\eta_n$ through $\Psi(\eta_n)$, which consists of  two terms: one involving $r_{\operatorname{eff}}(\eta_n)$ and one linear in $\eta_n$. Since $r_{\operatorname{eff}}(\eta_n)$   decreases with $\eta_n$, this induces a tradeoff that determines the optimal choice of $\eta_n$. For commonly studied spectral decays, such as polynomial ($\lambda_i\asymp i^{-2a}$ for $a>\frac{1}{2}$) and exponential ($\lambda_i\asymp e^{-bi}$ for $b>0$), this tradeoff can be solved  analytically, yielding the 
        optimal rates $n^{-2a/(2a+1)}$ and $\log n/n$, respectively. In Section \ref{sec:fast and cont}, we prove that this optimality extends more generally to fast and continuously decaying spectrum \eqref{eq:fcd}.

        A natural notion of signal-to-noise ratio (SNR) in the Gaussian mixtures model is $\normso{\bmu}$, which is equal to $\norm{\bw^*}_{\bSigma}$. Our upper bound scales  inversely with the SNR, as  expected: as the noise level increases, the SNR decreases, leading to a looser upper bound, consistent with the deterioration in the performance of the early-stopped GD iterate.

\subsubsection{New Proof Techniques}
\label{sec: new proof techniques}
\label{subsubsec: calibration}

The proof of Theorem \ref{thm: upper bound final} has several technical steps;  we highlight the~ones~capturing~the~main~novelty.

\paragraph{A new calibration result:} Standard calibration arguments \citep{zhang2004statistical,bartlett2006convexity} compare the excess zero-one risk to the excess logistic risk relative to the \emph{unrestricted} logistic optimum:
\begin{align}\label{eq:standard_calibration}
        \ee(\bm w)-\ee(\bw^*)\lesssim \sqrt{\mc L(\bm w)-\min_{h: \bbR^d\to \bbR}\mc L(h)}\,.
    \end{align}
    This bound has two    limitations. First, the square-root dependence leads to   suboptimal rates in the excess zero-one risk guarantees. 
    Second, and more importantly, the minimum of $\mc L(h)$ is taken over \emph{all} measurable functions $h:\mathbb{R}^d\rightarrow\bbR$, rather than  binary classifiers. Consequently, the $\mc{L}(\bw)- \min_{h:\bbR^d\to \bbR}\mc{L}(h)$  bound in \eqref{eq:standard_calibration} is strictly looser  than one based on  $\mc{L}(\bw)-  \mc{L}(\wpop)$, with an irreducible difference $\mc{L}(\wpop)-\min_{h:\bbR^d\to \bbR} \mc{L}(h)$.

    In   well-specified   models such as the sigmoid, Proposition 2.1.B~of~\cite{wu2025benefits} overcomes this second limitation, albeit with the suboptimal square-root dependence. However, this result is not directly transferable to our Gaussian mixtures model when $p\!>\!0$, since it is misspecified under the~logistic~loss.
    
    Together, the square-root rate,  the misspecification of our model under  logistic loss,  and the fact that the unrestricted logistic risk minimiser under our model is real-valued imply that existing calibration results yields not only  suboptimal rates, but rates that do  not   vanish as $n\!\rightarrow\!\infty$. 
We  overcome these limitations via a new  calibration result in  Lemma~\ref{cor:calibration}. Its proof, provided in Appendix~\ref{sec:stronger_calibration_proof}, leverages the local $\alpha$-strong convexity of $\mc{L}$ around $\wpop$ (see Lemma~\ref{lemma: local strong convexity}).
\begin{restatable}[Linear-rate calibration for linear classifiers]{lemma}{calilemma}
\label{cor:calibration}
    Let  $\bw\in \bbR^d$ satisfy $\|\bw\|\le 4\|\bw^*\|+1$ and 
    
    \begin{align}\label{eq:calibration_assumption}
        \mc L(\bm w)- \mc L(\wpop)\leq \frac{\alpha (\rho^*)^2\|\wb\|_{\bSigma}^2}{8}\,,
    \end{align}
    where $\alpha=\frac{1}{2}\ell''\big(3(4\|\bw^*\|+1)\sqrt{\|\bSigma+\bmu\bmu^\top\|}\big)$. 
    Then,
    \begin{align*}
        \ee(\bm w)-\ee(\bw^*)\leq \frac{8(1-2p)}{\sqrt{2\pi}\alpha (\rho^*)^2 \|\wb\|_{\bSigma}}\left(\mc L(\bm w)- \mc L(\wpop)\right).
    \end{align*}
\end{restatable}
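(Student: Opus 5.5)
The plan is to prove two separate bounds and chain them: an upper bound on the excess zero-one risk in terms of the $\bSigma$-geometry of $\bw$ relative to $\wpop$, and a lower bound on the excess logistic risk from local strong convexity.

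\textbf{Step 1: reduce the zero-one risk to an angle.} By the footnote identity $\ee(\bw)=p+(1-2p)\P(\operatorname{sign}(\bX^\top\bw)\ne\wt Y)$, it suffices to control the latent-label error. Conditionally on $\wt Y=1$ we have $\bX^\top\bw\sim\mc N(\bmu^\top\bw,\|\bw\|_{\bSigma}^2)$. Hence, for $\bw$ with $\bmu^\top\bw>0$, the latent error equals $\Phi(-s(\bw))$ with $s(\bw)=\bmu^\top\bw/\|\bw\|_{\bSigma}$. Write $\bu=\bSigma^{1/2}\bw$ and $\bu^*=\bSigma^{1/2}\bw^*$, so that $\bmu^\top\bw=\langle\bu,\bu^*\rangle$ and $s(\bw)=\|\bu^*\|\cos\theta$, where $\theta$ is the angle between $\bu$ and $\bu^*$. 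Then
\[
\ee(\bw)-\ee(\bw^*)=(1-2p)\bigl[\Phi(\|\bu^*\|)-\Phi(\|\bu^*\|\cos\theta)\bigr]\le\frac{(1-2p)}{\sqrt{2\pi}}\|\bu^*\|(1-\cos\theta),
\]
using that $\Phi'\le 1/\sqrt{2\pi}$. One can refine this with the Gaussian density at $\|\bu^*\|\cos\theta$, but the stated constant suggests the crude bound is enough.

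\textbf{Step 2: relate $1-\cos\theta$ to a $\bSigma$-distance.} Let $\bv=\wpop=\rho^*\bw^*$, so $\|\bv\|_{\bSigma}=\rho^*\|\bw^*\|_{\bSigma}=:R$. Decompose $\bu=a\hat\bu^*+\bu_\perp$. Then $1-\cos\theta\le\|\bu_\perp\|^2/(a\,\|\bu\|)$ whenever $a>0$, and $\|\bu_\perp\|^2\le\|\bw-\bv\|_{\bSigma}^2$. If $\|\bw-\bv\|_{\bSigma}\le R/2$, then $a\ge R/2$ and $\|\bu\|\ge R/2$. This gives $1-\cos\theta\le 4\|\bw-\bv\|_{\bSigma}^2/R^2$, and therefore
\[
\ee(\bw)-\ee(\bw^*)\le\frac{4(1-2p)}{\sqrt{2\pi}}\cdot\frac{\|\bw-\wpop\|_{\bSigma}^2}{(\rho^*)^2\|\bw^*\|_{\bSigma}}.
\]

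\textbf{Step 3: strong convexity.} I would invoke Lemma~\ref{lemma: local strong convexity}. Its Hessian is $\E[\ell''(Y\bX^\top\bw)\bX\bX^\top]\succeq 2\alpha\,\E[\bX\bX^\top\ind\{|\bX^\top\bw|\le\cdot\}]$, which should give $\mc L(\bw)-\mc L(\wpop)\ge\frac{\alpha}{2}\|\bw-\wpop\|_{\bSigma}^2$ along the segment. The segment lies in the ball $\|\cdot\|\le4\|\bw^*\|+1$, since $\rho^*\le2$, and $\alpha$ was defined with exactly the scale $3(4\|\bw^*\|+1)\sqrt{\|\bSigma+\bmu\bmu^\top\|}$ that this requires. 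Combining Steps 2 and 3 gives the constant $8/(\sqrt{2\pi}\alpha)$ exactly, with $\|\bw-\wpop\|_{\bSigma}^2\le\frac{2}{\alpha}(\mc L(\bw)-\mc L(\wpop))$. Moreover, assumption \eqref{eq:calibration_assumption} yields $\|\bw-\wpop\|_{\bSigma}^2\le R^2/4$, which is precisely the localisation needed in Step 2.

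\textbf{Main obstacle.} The hard step is Step 3, turning local strong convexity into a quadratic lower bound in the $\bSigma$-norm rather than the Euclidean norm, with the curvature truncation handled uniformly over the ball. I would rely on Lemma~\ref{lemma: local strong convexity} in the form $\nabla^2\mc L\succeq\alpha\bSigma$ on the ball. Then $\mc L(\bw)-\mc L(\wpop)\ge\frac{\alpha}{2}\|\bw-\wpop\|_{\bSigma}^2$ follows from Taylor's theorem, since $\nabla\mc L(\wpop)=0$.
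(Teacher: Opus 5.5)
Your proposal is correct and has the same structure as the paper's proof. It establishes a quadratic bound $\ee(\bw)-\ee(\bw^*)\le\frac{4(1-2p)}{\sqrt{2\pi}(\rho^*)^2\|\bw^*\|_{\bSigma}}\|\bw-\wpop\|_{\bSigma}^2$, valid when $\|\bw-\wpop\|_{\bSigma}\le\rho^*\|\bw^*\|_{\bSigma}/2$, and combines it with Lemma~\ref{lemma: local strong convexity}. That lemma gives $\nabla^2\mc L\succeq\alpha\bGamma\succeq\alpha\bSigma$ on the ball, which contains the segment because $\rho^*\le 2$, and is used both to certify the localisation from \eqref{eq:calibration_assumption} and to trade $\|\bw-\wpop\|_{\bSigma}^2$ for $\frac{2}{\alpha}(\mc L(\bw)-\mc L(\wpop))$.

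The only difference is how you derive the quadratic bound. You start from the exact expression $(1-2p)\bigl[\Phi(\|\bw^*\|_{\bSigma})-\Phi(\|\bw^*\|_{\bSigma}\cos\theta)\bigr]$ (the same computation as in Lemma~\ref{lem:angle_calibration}) and add a geometric estimate of $1-\cos\theta$. The paper instead uses the $|\tanh|$ representation of the excess risk together with a density bound on $\bX^\top\bw^*$. Both routes reach the same intermediate bound and the same constant.
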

The right hand side of \eqref{eq:calibration_assumption} is constant in $n$, so this condition is satisfied whenever the excess logistic risk  vanishes as $n\to \infty$. Thus, bounding   the excess zero-one risk reduces to bounding the excess logistic risk. We now focus on this step, highlighting the technical ingredients that depart~from~prior~work. 

\paragraph{Bounding  excess logistic risk using effective rank and  localisation:} Since $\norm{\bm w_\tau}\leq 4\norm{\wb}+1$ (Lemma \ref{ESproperties} (i)), it suffices to bound  the excess logistic risk $\mc L(\bw_\tau)-\mc L(\wpop)$ of the early-stopped GD iterate $\bw_\tau$ over the ball $\mathcal{B}=\{\bw:\norm{\bw}\leq 4\norm{\wb}+1\}$.

For this, \citep[Theorem 3.2]{wu2025benefits} uses a standard risk decomposition combined with Rademacher complexity, yielding a bound of the form $k/n+\sqrt{\sum_{i>k}\lambda_i/n}$. We improve this in two ways: (i) We replace  hard thresholding at the $k$-th eigenvalue with the soft effective rank $r_{\operatorname{eff}}(\eta_n)$ to obtain   finer spectral control (see Appendix \ref{appx:effective_rank}). In \cite{wu2025benefits}, the bias-variance tradeoff is governed by the discrete parameter $k$, whereas in our bound it is governed by the continuous parameter $\eta_n$,  allowing for  more effective optimisation  of the bound by tuning $\eta_n$. (ii) We perform a refined risk deviation analysis
by localising to low-excess-risk regions,  restricting $\mathcal{B}$ to the shell $\{\bm w: \bm w\in \mathcal{B}, \mc L(\bw)-\mc L(\wpop)\leq r\}$ for a fixed $r>0$, and extending the resulting bounds to $\mc B$ via a peeling argument over $r$ (see Appendix~\ref{appx:localisation_concentration}). Together, these techniques allow us to remove the square-root dependence in the bound.

\subsection{Statistical Lower Bound}
\label{subsec: stat lower bound}

\label{sec: statistical lower bound}
To establish the statistical optimality of early-stopped GD, we provide a   statistical lower bound for  
\emph{arbitrary}
classifiers, a strictly larger class than  the linear classifiers obtained by logistic regression. 

We work in the Gaussian mixture model  \eqref{eq:GMM}, where the distribution of $(\bm X, Y)$ is parametrised by the centre $\bm \mu$, the noise covariance $\bm \Sigma$, and the label-flip probability $p$. For the purpose of the statistical lower bound, we  fix $\bm \Sigma$ and $p$,  and denote the resulting distribution of $(\bm X,Y)$ by $ P_{\bmu}$. Then  the $n$ i.i.d.\  samples $S:=(\bm X_i, Y_i)_{i=1}^n$ are distributed as $P^{\otimes n}_{\bmu}$.  For any procedure $\widehat{h}$ that takes $S$ as input, we   denote its output (a learned classifier) by $\wh h_S:\mathbb{R}^d\rightarrow\{\pm1\}$, which is  measurable with respect to  $S$.

Our lower bound   involves 
an eigenvalue count  $N(\wt \eta)$, defined via the hard-thresholding operator $N(\cdot)$  as the number of eigenvalues of $\bSigma$ larger than a fixed $\wt \eta>0$:
\begin{align*}
    N(\wt \eta)=\big|\{i:i\in[d],\,\lambda_i\geq \wt\eta\}\big|.
\end{align*}
This hard-thresholding operator $N(\cdot)$ can be viewed as the discrete analogue of the soft-thresholding operator $r_{\operatorname{eff}}(\cdot)$ in Theorem \ref{thm: upper bound final}; their outputs correspond to hard~and~soft~effective~dimension~notions.  

\begin{restatable}[Statistical lower bound]{thm}{lowerbound}
\label{thm: stat lower bound}
    There exists a universal constant $\c>0$ such that the following holds: Fix $n$ and let $\wt \eta_n\in (0,\lambda_2/2]$ be small enough so that $N(2\wt \eta_n)>9$. Let $\wh h$ be any procedure. Then, there exists $\bmu\in\mathbb{R}^d$ with $\norm{\bSigma^{-1}\bmu}\leq \sqrt{2/\lambda_1}$ such that
    \begin{align*}
        \underset{S\sim P_{\bmu}^{\otimes n}}{\mathbb{E}}\left[\ee(\wh h_S)-\ee(\wb)\right]\geq \c (1-2p)\min\left\{\frac{2\wt \eta_n}{\lambda_1},\frac{N(2\wt\eta_n)-1}{n}\right\}.
    \end{align*}
\end{restatable}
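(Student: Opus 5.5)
We will prove Theorem~\ref{thm: stat lower bound} with a multiple-hypothesis (Fano/Assouad-type) argument over a packing of centres $\bmu$. We follow the standard route for classification lower bounds with nearly-aligned hypotheses. The excess zero-one risk of any classifier under $P_{\bmu}$ is lower bounded by a pseudo-distance between its induced direction and $\bw^*_{\bmu}=\bSigma^{-1}\bmu$. A classifier that is good for two well-separated centres simultaneously is then impossible.

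\textbf{Step 1 (construction).} Work in the eigenbasis of $\bSigma$. Let $k:=N(2\wt\eta_n)-1\ge 9$ and use eigendirections $2,\dots,k+1$ as perturbation directions; all of these have $\lambda_j\ge 2\wt\eta_n$. Via Varshamov--Gilbert, pick sign vectors $\sigma\in\{\pm1\}^k$ with pairwise Hamming distance at least $k/8$ and with $\log M\gtrsim k$. Set
\[
\bw^*_\sigma=\frac{1}{\sqrt{\lambda_1}}\bm e_1+\epsilon\sum_{j=1}^k\sigma_j \frac{\bm e_{j+1}}{\sqrt{\lambda_{j+1}}}\cdot\sqrt{\tfrac{\lambda_1^{-1}\,\cdot}{k}},
\]
with $\bmu_\sigma=\bSigma\bw^*_\sigma$. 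The normalisation is chosen so that $\|\bw^*_\sigma\|_{\bSigma}\asymp\lambda_1^{-1/2}$, i.e.\ the SNR is constant. The perturbation has $\bSigma$-norm $\epsilon/\sqrt{\lambda_1}$ and Euclidean norm bounded using $\lambda_{j+1}\ge2\wt\eta_n$. We choose $\epsilon^2\asymp\min\{\wt\eta_n k/\lambda_1\cdot(\cdot),\,k/n\}$ to enforce both $\|\bw^*_\sigma\|\le\sqrt{2/\lambda_1}$ and the KL budget. More simply, we will take the perturbation coefficients directly in the $\bw$-coordinates, equal to $\pm\beta$ with $\beta^2 k\le 1/\lambda_1$, so that the norm constraint holds. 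The $\bSigma$-norm of the perturbation squared is then $\ge 2\wt\eta_n\beta^2 k$, which is the source of the $\wt\eta_n/\lambda_1$ term.

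\textbf{Step 2 (KL bound).} $P_{\bmu}$ is a mixture of $\mathcal N(\pm\bmu,\bSigma)$ with labels. By convexity of KL (or the chain rule over the latent label, which is shared across hypotheses), $\mathrm{KL}(P_{\bmu}\|P_{\bmu'})\le \tfrac12\|\bmu-\bmu'\|^2_{\bSigma^{-1}}=\tfrac12\|\bw^*-\bw^{*\prime}\|_{\bSigma}^2$. Hence $\mathrm{KL}(P^{\otimes n}_\sigma\|P^{\otimes n}_{\sigma'})\lesssim n\sum_j\lambda_{j+1}\beta^2$. We will need $\lambda_j$ controlled from above as well; the cleanest fix is to scale coefficients as $\beta_j=\beta/\sqrt{\lambda_{j+1}}\cdot\sqrt{2\wt\eta_n}$, so that each $\bSigma$-contribution equals $2\wt\eta_n\beta^2$ exactly while Euclidean contributions stay $\le\beta^2$. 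Fano then requires $n k\wt\eta_n\beta^2\lesssim k$.

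\textbf{Step 3 (risk-to-distance reduction; main obstacle).} We must show, for any $h$ and any pair $\sigma\ne\sigma'$ in the packing, that $[\ee_\sigma(h)-\ee_\sigma^*]+[\ee_{\sigma'}(h)-\ee_{\sigma'}^*]\gtrsim(1-2p)\,\|\text{unit-SNR directions differ}\|^2$. Using the footnote identity, the excess risk is $(1-2p)$ times the excess for the latent label. For the latent label this is $\int |f_+-f_-|\,\ind\{h\ne h^*\}$ over the two Gaussians, which for a fixed density ratio behaves quadratically in the angular distance between $h$'s decision region and the Bayes halfspace near the boundary, where the margin $|\bx^\top\bw^*|$ is small. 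I will reduce to arbitrary $h$ by the standard Tsybakov-margin argument: the Gaussian projection onto $\bw^*$ has bounded density, giving excess $\gtrsim \P(h\ne h^*_\sigma)^2$. I will also use the triangle inequality $\P(h^*_\sigma\ne h^*_{\sigma'})\le\P(h\ne h^*_\sigma)+\P(h\ne h^*_{\sigma'})$, where the disagreement of two halfspaces under the Gaussian marginal is $\asymp$ their $\bSigma$-angle $\asymp\sqrt{\text{Hamming}\cdot 2\wt\eta_n\beta^2\lambda_1}$. The excess is therefore $\gtrsim k\wt\eta_n\beta^2\lambda_1$.

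\textbf{Step 4 (conclusion).} Fano gives a constant probability of error in identifying $\sigma$, so the expected excess is $\gtrsim(1-2p)k\wt\eta_n\beta^2\lambda_1$. We maximise this subject to $\beta^2k\lesssim1/\lambda_1$ (norm constraint) and $n\wt\eta_n\beta^2\lesssim1$ (KL). This yields $(1-2p)\min\{\wt\eta_n\cdot k\cdot\lambda_1^{-1}\cdot\lambda_1/k\ \text{scaled},\ k/n\}$, i.e.\ $\min\{2\wt\eta_n/\lambda_1,(N(2\wt\eta_n)-1)/n\}$ after normalising by the SNR $\lambda_1^{-1/2}$. Tracking this normalisation in Step 3 correctly is the delicate part I expect to fight with.
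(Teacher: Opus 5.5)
Your architecture is the paper's: a Varshamov--Gilbert packing on the eigendirections $2,\dots,N(2\wt\eta_n)$ anchored on $\bm u_1$ with whitened-isotropic perturbations, then $\mathrm{KL}\le\frac12\|\bmu-\bmu'\|_{\bSigma^{-1}}^2$, Fano, a margin argument giving excess $\gtrsim(1-2p)\P(h\ne h^*)^2$, and a disagreement bound for two halfspaces. Your final parametrisation $\beta_j=\beta\sqrt{2\wt\eta_n/\lambda_{j+1}}$ is the paper's $\bmu_{\bm v}$ with $\kappa^2=2k\wt\eta_n\beta^2$; the paper additionally shrinks the $\bm u_1$ coefficient so that the SNR is exactly $1$. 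Your constraints $k\beta^2\le 1/\lambda_1$ and $n\wt\eta_n\beta^2\lesssim 1$ are exactly $\kappa^2\le 2\wt\eta_n/\lambda_1$ and $\kappa^2\lesssim (N(2\wt\eta_n)-1)/n$. The first display in Step 1 is garbled, but the Step 2 parametrisation is the right one. However, Steps 3--4 as written do not yield the theorem, for two concrete reasons.

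\textbf{Wrong SNR bookkeeping.} Since $\bm e_1/\sqrt{\lambda_1}$ has unit $\bSigma$-norm, your hypotheses satisfy $\|\bw^*_\sigma\|_{\bSigma}^2=1+2k\wt\eta_n\beta^2\in[1,2]$, using $2\wt\eta_n\le\lambda_2\le\lambda_1$. This value is the same for every $\sigma$, so the SNR is $\asymp 1$, not $\asymp\lambda_1^{-1/2}$. Consequently the $\bSigma$-angle between $\bw^*_\sigma$ and $\bw^*_{\sigma'}$ is $\asymp\|\bw^*_\sigma-\bw^*_{\sigma'}\|_{\bSigma}=\sqrt{8\wt\eta_n\beta^2\,d_H(\sigma,\sigma')}$, with no factor $\lambda_1$. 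The pairwise separation is therefore $\gtrsim(1-2p)k\wt\eta_n\beta^2$. With your spurious $\lambda_1$, Step 4 produces $\min\{\wt\eta_n,\,k\lambda_1/n\}$, which is off by a factor $\lambda_1$ (not a universal constant), and the appeal to ``normalising by the SNR'' has no justification. With correct bookkeeping no renormalisation is needed: $\beta^2\asymp\min\{1/(k\lambda_1),\,1/(n\wt\eta_n)\}$ gives $k\wt\eta_n\beta^2\asymp\min\{\wt\eta_n/\lambda_1,\,k/n\}$ directly. The fact that $s:=\|\bw^*_\sigma\|_{\bSigma}\in[1,\sqrt2]$ is also what keeps the constants in your margin step universal. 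The density of $\bX^\top\bw^*_\sigma$ is at most $1/(\sqrt{2\pi}s)$, and the threshold stays $\le 1$, so $\tanh t\ge t/2$ applies.

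\textbf{Missing common reference measure.} The inequality $\P(h^*_\sigma\ne h^*_{\sigma'})\le\P(h\ne h^*_\sigma)+\P(h\ne h^*_{\sigma'})$ needs a single measure. Your margin step controls $\P(h\ne h^*_\sigma)$ under the marginal $P^X_\sigma$ and $\P(h\ne h^*_{\sigma'})$ under $P^X_{\sigma'}$. Comparing these two marginals directly is problematic: $\de P^X_\sigma/\de P^X_{\sigma'}$ is unbounded, and their total variation distance is of the same order as the disagreement you want to lower bound. The paper instead passes to $Q=\mc N(\bzero,\bSigma)$, using $\frac{\de P^X_\sigma}{\de Q}(\bx)=e^{-s^2/2}\cosh(\bx^\top\bw^*_\sigma)\ge e^{-s^2/2}$, which again relies on $s=O(1)$. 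Then the sum of the two disagreement probabilities is at least $e^{-s^2/2}Q(h^*_\sigma\ne h^*_{\sigma'})=e^{-s^2/2}\theta/\pi$. For equal norms, $\theta\ge 2\sin(\theta/2)=\|\bw^*_\sigma-\bw^*_{\sigma'}\|_{\bSigma}/s$.

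With these two corrections, your argument goes through and coincides with the paper's. To pass from pairwise separation to Fano, use the standard decoder $\hat\sigma=\arg\min_{\sigma'}[\ee_{\sigma'}(\wh h_S)-\ee^*_{\sigma'}]$.
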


 This  lower bound applies to \emph{any} procedure that outputs a binary classifier $\wh h_S$ based on the samples $S$. In particular, this includes the linear classifier induced by the early-stopped iterate $\bw_\tau$ of Theorem \ref{thm: upper bound final}. As is standard in   learning theory, a lower bound in expectation can be converted into a  statement in probability via  Markov's inequality \citep{bach2024learning}, making it    directly comparable to Theorem~\ref{thm: upper bound final}.   
 
 To clarify the role of the tuning parameter $\wt \eta_n$, note that $\wt{\eta}_n$ captures a tradeoff between the two terms in the minimum, one increasing  and the other   decreasing in $\wt\eta_n$. Although the effective rank $r_{\operatorname{eff}}(\eta_n)$ in the upper bound (Theorem \ref{thm: upper bound final}) and the eigenvalue count  $N(\wt \eta_n)$ in the lower bound (Theorem \ref{thm: stat lower bound}) are not directly comparable, we show   in Section \ref{sec:fast and cont} that, under a common spectral regularity assumption, 
 they are of the same order, making the two bounds comparable. In fact, with an appropriate choice of   tuning parameters satisfying $\wt \eta_n= \eta_n/2$, the bounds match as functions of $n$ 
 (see Theorem~\ref{thm: matching bounds}).

We provide the proof of Theorem~\ref{thm: stat lower bound} in Appendix \ref{app: stat lower bound proof}.  The proof proceeds in two steps.  We first establish  a triangle-inequality-type bound  relating the excess zero-one risks of two   centres $\bmu_1$ and $\bmu_2$ to their distance in $\bSigma$-geometry (Lemma \ref{lemma: triangle e}). We then  construct a packing within  a suitable subspace of $\{\bmu:\norm{\bSigma^{-1}\bmu}\leq \sqrt{2/\lambda_1}\}$ (Lemma \ref{lemma: packing}), and use it to perform a 
reduction to hypothesis testing.

 \subsection{Statistical Optimality for Fast and Continuously Decaying Covariance Spectrum}

\label{sec:fast and cont}

 In this section, we combine Theorems \ref{thm: upper bound final} and \ref{thm: stat lower bound} to show that   
 early-stopped GD is statistically optimal  for the FCD spectral class. We begin by defining a FCD spectrum, following \citep[Assumption 3]{wu2025risk}.

 \begin{assumption}[Fast and Continuously Decaying  spectrum]
 \label{assumption: fast}
     For all sufficiently small $\eta>0$, the spectrum of $\bSigma$ satisfies
     \begin{align}
     \label{eq:fcd}
         \frac{1}{\eta}\sum_{i:\lambda_i<\eta}\lambda_i\lesssim N(\eta).\tag{FCD}
     \end{align}
 \end{assumption}

 Canonical examples of such spectral decays include polynomial ($\lambda_i\asymp i^{-2a}$ with $a>1/2$) and exponential ($\lambda_i\asymp e^{-bi}$ with $b>0$) decay.  In linear regression, these spectra are known to prevent benign overfitting \citep{bartlett2020benign}. While this observation does not directly transfer to classification, it motivates  early-stopped GD as an alternative to interpolation in these regimes. We next show that early-stopped GD is, in fact, statistically optimal for FCD spectra, and achieves vanishing excess zero-one risk:

\begin{restatable}[Statistical optimality of early-stopped GD for FCD spectrum]{thm}{matchingbounds}
\label{thm: matching bounds}
     Define 
     \begin{align*}
         \eta_n^*:=\sup\{\eta>0:N(\eta)\geq n\eta\},
     \end{align*}
     and assume that \eqref{eq:fcd} holds on $(0,2\eta_n^*]$. Then, $r_{\operatorname{eff}}(\eta)\asymp N(\eta)$ for all $\eta\in(0, \eta_n^*]$.
     
    Moreover, if $\eta_n^*\rightarrow0$ as $n\rightarrow\infty$,  and $\log\log(n)/n\lesssim \eta_n^*$,
     then 
     the upper bound in Theorem \ref{thm: upper bound final} with $\eta_n=\eta_n^*$ matches the statistical lower bound in Theorem \ref{thm: stat lower bound} with $\wt\eta_n=\eta_n^*/2$, with common rate $\asymp \eta_n^*$.
 \end{restatable}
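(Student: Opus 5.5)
The first claim follows by splitting the effective rank at the threshold $\eta$. For $\lambda_i\ge\eta$ we have $\lambda_i/(\lambda_i+\eta)\in[1/2,1)$. For $\lambda_i<\eta$ we have $\lambda_i/(\lambda_i+\eta)\in[\lambda_i/(2\eta),\lambda_i/\eta)$. Hence
\begin{align*}
\tfrac12 N(\eta)\le r_{\operatorname{eff}}(\eta)\le N(\eta)+\frac1\eta\sum_{i:\lambda_i<\eta}\lambda_i .
\end{align*}
Since $\eta\le\eta_n^*\le 2\eta_n^*$, \eqref{eq:fcd} bounds the last sum by $\lesssim N(\eta)$, which gives $r_{\operatorname{eff}}(\eta)\asymp N(\eta)$ on $(0,\eta_n^*]$. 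I will also need \eqref{eq:fcd} at $2\eta$, which is why the assumption is stated on $(0,2\eta_n^*]$. Applied at $2\eta$ and read in reverse, the same splitting shows that $N(\eta)$ and $N(2\eta)$ are comparable up to constants. Indeed, $N(\eta)-N(2\eta)$ counts eigenvalues in $[\eta,2\eta)$, and each such eigenvalue contributes at least $1/2$ to $\frac{1}{2\eta}\sum_{\lambda_i<2\eta}\lambda_i\lesssim N(2\eta)$. Therefore $N(\eta)\lesssim N(2\eta)\le N(\eta)$.

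Next I pin down $\eta_n^*$. The map $\eta\mapsto N(\eta)$ is non-increasing and right-continuous in the appropriate sense, while $n\eta$ is increasing, so at the supremum both $N(\eta_n^*)\ge n\eta_n^*$ and $N(\eta)<n\eta$ for $\eta>\eta_n^*$ hold, up to the usual limit argument. Taking $\eta=2\eta_n^*$ together with the doubling comparability gives $N(\eta_n^*)\asymp N(2\eta_n^*)\asymp n\eta_n^*$. I would handle the left/right limit carefully, for example by working with $N(\eta_n^*)\ge n\eta_n^*$ and $N(2\eta_n^*)\le 2n\eta_n^*$.

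For the upper bound, plug $\eta_n=\eta_n^*$ into $\Psi$. By the first part, $r_{\operatorname{eff}}(\eta_n^*)/n\asymp N(\eta_n^*)/n\asymp\eta_n^*$. The terms $(\log(1/\delta)+\c_2)/n$ are $\lesssim\eta_n^*$ because $\eta_n^*\gtrsim\log\log n/n\ge 1/n$ for large $n$, and $\log\log n/n\lesssim\eta_n^*$ by assumption. So $\Psi(\eta_n^*)\asymp\eta_n^*$. Because $\eta_n^*\to0$, condition \eqref{thm condition} holds for large $n$. The sample size condition $n\ge\c_0(r_{\operatorname{eff}}+1+\log(2/\delta))$ also holds, since $r_{\operatorname{eff}}(\eta_n^*)\lesssim n\eta_n^*=o(n)$. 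Theorem~\ref{thm: upper bound final} then gives excess risk $\lesssim\eta_n^*$, treating $\bmu,\bSigma,p,\delta$ as fixed.

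For the lower bound, take $\wt\eta_n=\eta_n^*/2$ in Theorem~\ref{thm: stat lower bound}, so that $2\wt\eta_n=\eta_n^*$. Since $\eta_n^*\to0$, for large $n$ we have $\wt\eta_n\le\lambda_2/2$ and $N(\eta_n^*)>9$. The minimum becomes $\min\{\eta_n^*/\lambda_1,(N(\eta_n^*)-1)/n\}$. Using $N(\eta_n^*)-1\gtrsim N(\eta_n^*)\ge n\eta_n^*$, both entries are $\gtrsim\eta_n^*$, with $\lambda_1$ fixed. The rates therefore coincide at $\asymp\eta_n^*$.

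The main obstacle is the doubling comparability $N(\eta)\asymp N(2\eta)$ and its interplay with the definition of the supremum. The lower bound uses $N$ at $\eta_n^*$ while the upper bound needs $r_{\operatorname{eff}}$ at the same point bounded by $n\eta_n^*$, and the upper comparison $N(\eta_n^*)\lesssim n\eta_n^*$ comes only through $2\eta_n^*$. This is exactly why \eqref{eq:fcd} is required on $(0,2\eta_n^*]$, and the constants must be tracked through that step.
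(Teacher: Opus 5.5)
Your proposal is correct and follows essentially the same route as the paper's proof. The paper also splits $r_{\operatorname{eff}}(\eta)$ at the threshold $\eta$ and obtains the doubling bound $N(\eta)\lesssim N(2\eta)$ from \eqref{eq:fcd} at $2\eta$. It pins $\eta_n^*$ the same way, via $N(\eta_n^*)\ge n\eta_n^*>\tfrac12 N(2\eta_n^*)$, and then plugs $\eta_n=\eta_n^*$ and $\wt\eta_n=\eta_n^*/2$ into Theorems~\ref{thm: upper bound final} and~\ref{thm: stat lower bound}.

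One correction: $N(\eta)=|\{i:\lambda_i\ge\eta\}|$ is left-continuous, not right-continuous, and left-continuity is exactly what makes the supremum attained, giving $N(\eta_n^*)\ge n\eta_n^*$. Combined with $\log\log(n)/n\lesssim\eta_n^*$, that same inequality is also a more robust way to get $N(\eta_n^*)>9$ than appealing to $\eta_n^*\to0$, and it is how the paper does it.
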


 The implications of Theorem \ref{thm: matching bounds} are twofold. First, under Assumption \ref{assumption: fast}, it identifies an interval $(0,\eta_n^*]$ on which the    effective rank     $r_{\operatorname{eff}}(\eta)$ and the   eigenvalue count   $N(\eta)$,  
 appearing in  the upper and lower bounds of Theorems \ref{thm: upper bound final} and \ref{thm: stat lower bound}, respectively,  are of the same order for all  $\eta$. In this regime, these two notions of effective dimensions agree up to constants, reconciling the difference 
 between the bounds. Second, it establishes that the optimal tuning parameters satisfy $\eta_n=\eta^*_n$ and $ \wt \eta_n=\eta_n^*/2$. 
 With this choice, the upper and lower  bounds match, giving the common rate $\asymp \eta_n^*$. This  rate is analytically tractable for common spectral decays, as we show next in Corollary \ref{cor: polyexp}. The proofs of Theorem \ref{thm: matching bounds} and Corollary \ref{cor: polyexp} are provided  in Appendix \ref{appx: fcd spectra}.  

\begin{restatable}[Minimax rates for polynomially and exponentially-decaying spectra]{cor}{corpolyexp}
\label{cor: polyexp}
    For polynomial decay $\lambda_i\asymp i^{-2a}$ with $a>1/2$, there exists $\bmu\in\mathbb{R}^d$ with $\norm{\bSigma^{-1}\bmu}\leq \sqrt{2/\lambda_1}$ such that
     \begin{align*}
         \ee(\bw_\tau)-\ee(\wb)\asymp\eta_n^*\asymp n^{-\frac{2a}{2a+1}}\ .
     \end{align*}
     For exponential decay $\lambda_i\asymp e^{-b i}$ with $b>0$, there exists $\bmu\in\mathbb{R}^d$ with $\norm{\bSigma^{-1}\bmu}\leq \sqrt{2/\lambda_1}$ such that
     \begin{align*}
         \ee(\bw_\tau)-\ee(\wb)\asymp\eta_n^*\asymp \frac{\log n}{n}\ .
     \end{align*}
\end{restatable}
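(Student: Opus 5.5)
The corollary follows from Theorem~\ref{thm: matching bounds} by three checks for each spectrum: compute $\eta_n^*$, verify \eqref{eq:fcd} on $(0,2\eta_n^*]$, and confirm that $\eta_n^*\to0$ with $\log\log(n)/n\lesssim\eta_n^*$. The upper bound of Theorem~\ref{thm: upper bound final} holds for every admissible $\bmu$. For the two-sided statement we take $\bmu$ to be the worst-case centre produced by Theorem~\ref{thm: stat lower bound} with $\wt\eta_n=\eta_n^*/2$, which satisfies $\norm{\bSigma^{-1}\bmu}\le\sqrt{2/\lambda_1}$. Applying the lower bound to the linear classifier $h_{\bw_\tau}$ and converting via Markov gives $\ee(\bw_\tau)-\ee(\wb)\gtrsim\eta_n^*$ with constant probability. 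Matching then gives $\asymp\eta_n^*$.

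\emph{Polynomial decay} $\lambda_i\asymp i^{-2a}$, $a>1/2$. Here $N(\eta)\asymp\eta^{-1/(2a)}$, since $\lambda_i\ge\eta$ iff $i\lesssim\eta^{-1/(2a)}$. For \eqref{eq:fcd}, the tail sum is
\begin{align*}
\sum_{i:\lambda_i<\eta}\lambda_i\asymp\sum_{i\gtrsim\eta^{-1/(2a)}}i^{-2a}\asymp\big(\eta^{-1/(2a)}\big)^{1-2a}=\eta\cdot\eta^{-1/(2a)},
\end{align*}
where $a>1/2$ ensures summability. Dividing by $\eta$ gives $\asymp N(\eta)$. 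Solving $N(\eta)\asymp n\eta$ gives $\eta^{-1/(2a)}\asymp n\eta$, i.e.\ $\eta_n^*\asymp n^{-2a/(2a+1)}$. This tends to zero and dominates $\log\log n/n$.

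\emph{Exponential decay} $\lambda_i\asymp e^{-bi}$. Here $N(\eta)\asymp b^{-1}\log(1/\eta)$ for small $\eta$. The tail sum is geometric and dominated by its first term, so it is $\lesssim\eta$. Then $\eta^{-1}\sum_{\lambda_i<\eta}\lambda_i\lesssim1\lesssim N(\eta)$. Solving $\log(1/\eta)\asymp n\eta$ gives $\eta_n^*\asymp\log n/n$: substituting $\eta=c\log n/n$ makes the left side $\asymp\log n$. This again vanishes and dominates $\log\log n/n$.

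The main obstacle is that $\asymp$ in the spectral assumptions hides constants, and the finite-$d$ truncation at $i\le d$ interferes with the computation of $N$. Because $\eta_n^*$ is defined as a supremum, the constants only shift it by constant factors, and this needs to be argued carefully; the monotonicity of $N$ handles this. We also need $d$ large enough that $N(2\wt\eta_n)=N(\eta_n^*)>9$ and that the relevant eigenvalue indices $\asymp n\eta_n^*$ are at most $d$. This holds in the overparameterised regime $d\ge n$, since $n\eta_n^*\lesssim n$. Finally, the condition $\wt\eta_n\le\lambda_2/2$ and the sample-size condition of Theorem~\ref{thm: upper bound final} hold for $n$ large, because $r_{\operatorname{eff}}(\eta_n^*)\asymp N(\eta_n^*)\asymp n\eta_n^*=o(n)$.
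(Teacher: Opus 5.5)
Your proposal is correct and follows the paper's proof essentially step for step. It uses the same tail-sum verification of \eqref{eq:fcd}, the same computation of $N(\eta)$, reads off $\eta_n^*$ from $N(\eta_n^*)\asymp n\eta_n^*$, and then applies Theorem~\ref{thm: matching bounds} with $\bmu$ supplied by Theorem~\ref{thm: stat lower bound}; for exponential decay the paper instead sandwiches $\eta_n^*$ between two constant multiples of $\log n/n$ directly from the supremum definition, which is the monotonicity argument you allude to.

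One loose phrase: Markov's inequality alone cannot turn an expectation lower bound into a constant-probability one. The Fano step in the proof of Theorem~\ref{thm: stat lower bound} gives it directly, since $\ee_{\bmu}(\wh h_S)-\ee_{\bmu}(\wb_{\bmu})\gtrsim(1-2p)\kappa^2\,\mathbbm{1}\{\bmu\neq\wh\bmu_S\}$ and $\sup_{\bmu\in\mathcal{U}_\kappa}\P(\bmu\neq\wh\bmu_S)\ge 3/8$.
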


\section{Interpolation Lower Bound}
\label{subsec:interpolation}

To   show the benefit of early-stopped GD over interpolation,  
we prove a lower bound for \emph{all} interpolating linear classifiers, extending \cite[Theorem 4.2]{wu2025benefits} by removing  any \emph{sparsity} assumption on the Bayes  direction $\bw^*$, so the result holds uniformly over all   $\bw^*$. This generalisation is essential for enabling a meaningful comparison 
with 
our minimax rates in Corollary~\ref{cor: polyexp}: the worst case $\bmu$ does not guarantee a sparse $\wb$, so the   interpolation lower bound  of \cite{wu2025benefits} that assumes sparsity  does not apply.

We now state this result formally. Its full proof appears in Appendix~\ref{app: interpolation lower bound proof}, with a sketch in Appendix~\ref{interpolation proof sketch}.

\begin{restatable}[Interpolation lower bound]{thm}{interpolationthm}
\label{thm:interpolation}
 Suppose   $\|\bw^*\|_{\bSigma}\asymp 1$. 
 Fix any  $\C\in(0,1)$ and   $\delta\in(0,1)$, 
     and let $\s\in \{1,\ldots,n\}$ be arbitrary. There exist  constants $\C_1, \C_2, \C_3>0$ such that  if 
    \begin{align}
        n\ge \C_1\cdot \left(\s \log \tfrac{ \sqrt{\s}}{\C}  + \s\log\log \tfrac{2\s}{\delta} + \log\tfrac{4}{\delta}\right)\quad \text{and}\quad  d-\s\ge \C_2n \,,\label{eq:n_large_interpolation_thm}
    \end{align} 
     then, defining   
  the set of interpolators $\mc{I}\!:=\!\{\bw\in \bbR^d\!:\!
  \min_{i}Y_i\bX_i^\top \bw\!>\!0\}$,   
 with probability at~least~$1\!-\!\delta$,
    \begin{align}\label{eq:interpolation_final}
       \forall \bw\in \mc{I}, \; \mc{E}(\bw)-\mc{E}(\bw^*) \ge  \frac{\C_3\cdot(1-2p)n}{\left(\sqrt{n}+\sqrt{d-\s} + \sqrt{2\log(2/\delta)}\right)^2}\,.
    \end{align}
\end{restatable}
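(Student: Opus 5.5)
The plan is to reduce the zero-one excess risk of an arbitrary interpolator to a geometric quantity, the angle between $\bw$ and $\bw^*$ in the $\bSigma$-geometry, and then show that interpolation forces this angle to be large.

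\textbf{Step 1: excess risk in terms of the angle.} For a linear classifier, $\bX^\top\bw$ given $\wt Y$ is Gaussian with mean $\wt Y\bmu^\top\bw$ and variance $\|\bw\|_{\bSigma}^2$. Using the identity in the footnote,
\[
\ee(\bw)=p+(1-2p)\,\Phi\!\left(-\tfrac{\bmu^\top\bw}{\|\bw\|_{\bSigma}}\right),
\]
where $\bmu^\top\bw=\langle \bw^*,\bw\rangle_{\bSigma}$. Writing $\cos\theta=\langle\bw^*,\bw\rangle_{\bSigma}/(\|\bw^*\|_{\bSigma}\|\bw\|_{\bSigma})$ and using $\|\bw^*\|_{\bSigma}\asymp1$, we get
\[
\ee(\bw)-\ee(\bw^*)\gtrsim(1-2p)\big(1-\cos\theta\big)\gtrsim (1-2p)\sin^2\theta
\]
(when $\cos\theta\le 0$ the excess risk is already of constant order). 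So it suffices to show, with probability $1-\delta$, that every interpolator has
\[
\sin^2\theta\gtrsim \frac{n}{\big(\sqrt n+\sqrt{d-\s}+\sqrt{2\log(2/\delta)}\big)^2}.
\]

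\textbf{Step 2: decompose and find a margin deficit.} Write $\bw=a\,\bw^*/\|\bw^*\|_{\bSigma}+\bv$ with $\langle\bv,\bw^*\rangle_{\bSigma}=0$ and normalise $\|\bw\|_{\bSigma}=1$, so that $\|\bv\|_{\bSigma}=\sin\theta$. Then $Y_i\bX_i^\top\bw=a\,Y_i\bX_i^\top\bw^*/\|\bw^*\|_{\bSigma}+Y_i\bX_i^\top\bv$. Since $p<1/2$ and the Bayes risk is positive, a constant fraction of the samples (at least $\C' n$ of them, by Chernoff) have $Y_i\bX_i^\top\bw^*\le -c$ for a fixed $c>0$; these are the flipped labels together with the Gaussian tails. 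On these indices, interpolation requires $Y_i\bX_i^\top\bv\ge ca$. Hence the vector $\mathbf{Z}\bv$, restricted to those rows, has Euclidean norm at least $ca\sqrt{\C'n}$, with $a\ge\cos\theta$ bounded below (otherwise we are done). Here the rows of $\mathbf Z$ are $\bX_i$ projected onto the $\bSigma$-orthogonal complement of $\bw^*$. A difficulty is that the sign pattern is correlated with $\bX_i^\top\bw^*$; because $\bv$ is $\bSigma$-orthogonal to $\bw^*$, however, the projected noise $\bvareps_i$ is independent of $\bvareps_i^\top\bw^*$ and of the labels.

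\textbf{Step 3: an upper bound on $\|\mathbf Z\bv\|$ via the top-$\s$ split.} We need $\|\mathbf Z\bv\|\lesssim\big(\sqrt n+\sqrt{d-\s}+\sqrt{2\log(2/\delta)}\big)\|\bv\|_{\bSigma}$, up to a component we can control. This is where removing sparsity matters. We split $\bv$ into its component in the top-$\s$ eigenspace of $\bSigma$ and its tail component. The tail component should be handled by Gaussian matrix concentration for the whitened $n\times(d-\s)$ Gaussian matrix, whose spectral norm is at most $\sqrt n+\sqrt{d-\s}+\sqrt{2\log(2/\delta)}$. The mean part $\wt Y_i\bmu^\top\bv=0$ vanishes by orthogonality. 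For the head component we use a net argument over the $\s$-dimensional subspace (hence the $\s\log(\sqrt{\s}/\C)$ and $\s\log\log$ terms in the condition on $n$). This should show that a head component alone cannot produce margin $ca$ on the $\C' n$ bad points. The reason is that, for a fixed direction, only a small fraction of Gaussian samples exceed a large threshold, and the condition on $n$ makes this uniform over the net. Consequently the tail part must carry $\|\mathbf Z\bv_{\mathrm{tail}}\|\gtrsim a\sqrt n$, which gives $\sin^2\theta\ge\|\bv_{\mathrm{tail}}\|_{\bSigma}^2\gtrsim n/\big(\sqrt n+\sqrt{d-\s}+\sqrt{2\log(2/\delta)}\big)^2$. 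The condition $d-\s\ge\C_2n$ makes this ratio less than one, so the bound is consistent.

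\textbf{Main obstacle.} The main obstacle is this head/tail interaction: showing uniformly over all $\s$-dimensional head directions that the head cannot fix the bad samples. A direct operator-norm bound fails there, because the head eigenvalues can be large. My first attempt would be a counting-with-nets argument: a uniform bound on how many of the $n$ samples can have $|\langle\bvareps_i,\bu\rangle|\ge t\|\bu\|_{\bSigma}$. This would require combining Bernstein bounds for binomial counts with a peeling over $t$, which is where $\log\log$ enters.
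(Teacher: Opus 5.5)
\textbf{Verdict: there is a gap, and it is in Step 3.} Your Steps 1 and 2 are sound. With one observation they already finish the proof. Step 3, however, rests on a false premise and is never carried out.

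\textbf{The missing observation.} The claim that a direct operator-norm bound fails because the top eigenvalues of $\bSigma$ can be large is true only if you measure $\bv$ in the Euclidean norm. Whiten instead. Write $\bZ_i=\bSigma^{-1/2}\bX_i=\wt Y_i\bv^*+\bvareps_{0,i}$, with $\bv^*=\bSigma^{1/2}\bw^*$ and $\bvareps_{0,i}\sim\mc N(\bzero,\bI_d)$. Then every $\bv$ that is $\bSigma$-orthogonal to $\bw^*$ satisfies $\bX_i^\top\bv=\bvareps_{0,i}^\top\bSigma^{1/2}\bv$, with $\bSigma^{1/2}\bv\perp\bv^*$ and $\|\bSigma^{1/2}\bv\|=\|\bv\|_{\bSigma}$.

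Hence, for any index set $\mc J$,
\[
\|(\bX_i^\top\bv)_{i\in\mc J}\|\le\|\mathbf G\|\,\|\bv\|_{\bSigma},
\]
where $\mathbf G$ is the $n\times(d-1)$ matrix of noise coordinates in an orthonormal basis of $(\bv^*)^\perp$. This is a standard Gaussian matrix, so $\|\mathbf G\|\le\sqrt n+\sqrt{d-1}+\sqrt{2\log(2/\delta)}$ with probability $1-\delta/2$. No eigenvalue of $\bSigma$ enters. You do not even need independence between $\mc J$ and $\mathbf G$, since $\|\mathbf G_{\mc J}\|\le\|\mathbf G\|$ holds deterministically.

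\textbf{Finishing the proof.} Insert this bound into your Step 2. For $\cos\theta>0$ it gives $\tan\theta\gtrsim\sqrt{|\mc J|}/\|\mathbf G\|\gtrsim\sqrt n/\|\mathbf G\|$. Hence $1-\cos\theta\ge\tfrac14\min\{1,\tan^2\theta\}\gtrsim n/(\sqrt n+\sqrt{d-1}+\sqrt{2\log(2/\delta)})^2$, and Step 1 converts this into the excess-risk bound.

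This is exactly the paper's argument with $\s=1$. There $\sph^{0}=\{\pm1\}$, and your case $a\le 0$ corresponds to the direction $\bar\bpsi=-1$. It also yields the statement for every admissible $\s$. Since $\s\le n\le(d-\s)/\C_2$, you have $d-1\le(1+1/\C_2)(d-\s)$, so replacing $\sqrt{d-1}$ by $\sqrt{d-\s}$ only changes $\C_3$. Your version needs only $n\gtrsim\log(1/\delta)$, for the Chernoff step.

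\textbf{The eigenspace split is the wrong decomposition.} It is also not what $\s$ means in the theorem. The paper's $\s$-dimensional signal block is not the top-$\s$ eigenspace of $\bSigma$. It is a subspace of the whitened space containing $\bv^*$, chosen by an orthogonal $\bR$ with $\bR\bv^*=\|\bv^*\|(\bone_\s/\sqrt\s,\bzero_{d-\s})$. This choice makes the nuisance block $\bN_i\sim\mc N(\bzero,\bI_{d-\s})$ mean zero and independent of $(\bS_i,Y_i)$, whatever $\bw^*$ is.

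If you instead split along the top-$\s$ eigenspace and $\bw^*$ is not sparse in that eigenbasis, the tail block has mean $\wt Y_iP_T\bv^*$, where $P_T$ projects onto the tail eigenspace. That block is then correlated with the labels. So your remark that the mean part vanishes is true for $\bv$ but false for $\bv_{\mathrm{tail}}$. This reintroduces precisely the alignment issue the theorem is built to avoid, and it gains you nothing.

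\textbf{What the nets actually do in the paper.} They differ from your counting-and-peeling plan. They make the bad set $\mc J(\bar\bpsi)=\{i:Y_i\bS_i^\top\bar\bpsi\le-\C/2\}$ have size $\gtrsim n$ uniformly over the interpolator's own signal direction $\bar\bpsi\in\sph^{\s-1}$. The $\s\log(\sqrt\s/\C)$ and $\s\log\log$ terms come from the net scale $\C/(2\max_i\|\bS_i\|)$, where $\max_i\|\bS_i\|\lesssim\sqrt\s+\sqrt{\log(n/\delta)}$, not from peeling. For $\s=1$ all of this collapses to your single Chernoff bound.
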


The tuning parameter $k$ trades off sample size requirement in \eqref{eq:n_large_interpolation_thm}  and the tightness of the lower bound in \eqref{eq:interpolation_final}: larger values of $k$ require larger $n$ but yield tighter lower bounds.   
In the high-dimensional regime  $n/d\to 0$,   choosing   $k$ such that $k/d\to 0$ (e.g., $k=1$) gives  $\mc{E}(\bw)-\mc{E}(\bw^*)\gtrsim n/d$, matching the excess mean-squared error lower bound for interpolators in regression   \cite{muthukumar2020harmless}. 

In addition, when $d\!\asymp\! n\log n$,  Theorem~\ref{thm:interpolation}  implies that  the max-margin classifier $\bm w_{\infty}$, i.e., the interpolator that GD converges to in direction, satisfies the following lower bound: $$ \mc{E}(\bw_{\infty})-\mc{E}(\bw^*)\gtrsim \frac{1}{\log n}.$$
By contrast, Corollary~\ref{cor: polyexp} shows that early-stopped GD achieves the   minimax rates $n^{-2a/(2a+1)}$ 
and $\log n/n$ under polynomial and exponential spectral decays, both of which vanish strictly faster than $1/\log n$. Taken together,  
early stopping is sufficient for minimax optimality in these canonical  FCD spectral regimes, and  necessary when additionally, $d$ is mildly overparameterised, e.g.,~$d\!\asymp\! n\log n$.

\section{Conclusion}
\label{conclusion}
We show that early-stopped GD on the logistic loss for Gaussian mixture classification is minimax-optimal   for covariance spectra with  fast and continuous decay,  including polynomial and exponential   decays (Theorem~\ref{thm: matching bounds}). The corresponding minimax rates are explicit (Corollary~\ref{cor: polyexp}) and closely matched by empirical performance of GD (Figure~\ref{fig: rates}). 
We further show that  early stopping can be substantially more sample-efficient than   interpolation in common overparameterised  regimes (Theorem~\ref{thm:interpolation}), establishing a statistical separation between the two, and thus the necessity of early stopping to achieve minimax optimality.

The stopping time studied in our work is oracle-based. This is standard in statistical analyses of early stopping \citep{Buhlmann2003boosting,yao2007early,lin2017optimal,pillaud2018statistical,wu2025benefits}, where the stopping rule commonly identifies the statistically-optimal scale given by a bias-variance decomposition. Thus, our stopping time should be viewed as an existence result, rather than an algorithm ready to be deployed in practice. Existing data-dependent rules are best understood in regression settings such as non-parametric least squares \citep{raskutti2014early}, where the losses used in testing and training coincide. Extending such guarantees to noisy classification with logistic loss presents additional challenges. For instance, our theory aims to minimise the zero-one risk, but the logistic loss is used instead as a surrogate.

\newpage

\bibliographystyle{plainnat}
\bibliography{references}

\newpage
\appendix
\addtocontents{toc}{\protect\setcounter{tocdepth}{3}}
\section*{Technical Appendices} 

\tableofcontents

\section*{Additional Notation}

We will use additional notation throughout the supplementary material. For a positive integer $k$, we will denote $[k]=\{1,2,\ldots,k\}$. Given a symmetric positive definite matrix matrix $\mathbf M\in\mathbb{R}^{d\times d}$, we use $\lambda_i(\mathbf M)$ to denote its $i$-th largest eigenvalue. For vectors $\bm x,\bm y\in \mathbb{R}^d$, we denote their $\mathbf{M}$-induced inner product as $\langle\bx, \by\rangle_{\bSigma} = \sqrt{\bx^\top \bSigma \by}$. We  use $f_{X}$ to denote the PDF of the random variable $X$, and    $\phi$ and $\Phi$ to denote the PDF and CDF, respectively, of $\mc{N}(0,1)$. Throughout our proofs, we will denote the sigmoid function by
\begin{align*}
    \sigma(z)=\frac{1}{1+\exp(-z)}\,,
\end{align*}
and the population covariance of the covariates by
\begin{align*}
    \bm \Gamma = \operatorname{Cov}(\bm X)=\operatorname{Cov}(\wt{Y}\bmu+\bvareps)=\bm \Sigma+\bmu\bmu^\top.
\end{align*}

\section{Gaussian Mixtures Model Properties}
\label{app: gmm properties}

In this section, we collect  some useful properties of the Gaussian mixtures model introduced in \eqref{eq:GMM}.

\begin{lemma}[Conditional probabilities and excess zero-one risk]
\label{lemma: posteriors and excess}
     For Gaussian mixtures, we have for $y\in\{\pm1\}$ and $\bx\in\bbR^d$,
     \begin{align*}
        \mathbb{P}(\wt Y=y\mid \bm X = \bm x)&=\sigma(2 y\bm x^\top \wb),\\
        \mathbb{P}(Y=y\mid \bm X = \bm x)&=p+(1-2p)\sigma(2y\bm x^\top \wb).
     \end{align*}
    Moreover, for any $\bm w\in\mathbb{R}^d$, the excess zero-one risk can be expressed as
    \begin{align}
        \ee(\bm w)-\ee(\wb)=(1-2p)\mathbb{E}\left[|\tanh(\bm X^\top \wb)|\mathbbm{1}\{\operatorname{sign}(\bm X^\top \bm w)\neq\operatorname{sign}(\bm X^\top \wb)\}\right].\label{eq:excess_zero_one}
    \end{align}
\end{lemma}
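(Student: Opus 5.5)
The plan is to compute the posterior of the latent label by Bayes' rule, push it through the label-flip channel, and then write the zero-one risk as an expectation over $\bX$ of the conditional error.

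\textbf{Step 1: latent posterior.} Conditional on $\wt Y=y$, $\bX\sim\mc N(y\bmu,\bSigma)$, and the prior on $\wt Y$ is uniform. In the ratio of Gaussian densities, the quadratic term $\bx^\top\bSigma^{-1}\bx$ and the constant $\bmu^\top\bSigma^{-1}\bmu$ cancel. This leaves
\[
\frac{\P(\wt Y=y\mid\bX=\bx)}{\P(\wt Y=-y\mid \bX=\bx)}=\exp\bigl(2y\bx^\top\bSigma^{-1}\bmu\bigr)=\exp\bigl(2y\bx^\top\wb\bigr),
\]
and hence $\P(\wt Y=y\mid \bX=\bx)=\sigma(2y\bx^\top\wb)$.

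\textbf{Step 2: observed posterior.} The flip is independent of $(\wt Y,\bX)$, so
\[
\P(Y=y\mid\bx)=(1-p)\,\sigma(2y\bx^\top\wb)+p\,\bigl(1-\sigma(2y\bx^\top\wb)\bigr)=p+(1-2p)\,\sigma(2y\bx^\top\wb).
\]

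\textbf{Step 3: excess risk.} For any classifier $h$ we have $\ee(h)=\E\bigl[\P(Y\neq h(\bX)\mid\bX)\bigr]$, where $\P(Y\neq h(\bx)\mid\bx)=p+(1-2p)\,\sigma(-2h(\bx)\,\bx^\top\wb)$. Write $s=\bx^\top\wb$. On the event where $\sign(\bx^\top\bw)=\sign(s)$, the conditional errors of $h_{\bw}$ and $h_{\wb}$ coincide. Where the signs differ, the difference of conditional errors is
\[
(1-2p)\bigl(\sigma(2|s|)-\sigma(-2|s|)\bigr)=(1-2p)\tanh(|s|)=(1-2p)\,|\tanh(s)|,
\]
using the identity $\sigma(2z)-\sigma(-2z)=\tanh z$. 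Taking expectations gives \eqref{eq:excess_zero_one}.

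The null events $\{\bX^\top\bw=0\}$ and $\{\bX^\top\wb=0\}$ are negligible because $\bX$ has a density; the exception is $\bw=\bzero$, where some convention for $\sign(0)$ is needed and the identity still holds with that convention, since it is pointwise. Nothing here is a real obstacle. The only care needed is the sign bookkeeping in Step 3 and confirming that the flip noise factors out as the common multiplier $(1-2p)$.
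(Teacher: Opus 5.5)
Your proposal is correct and follows the paper's proof essentially step for step: Bayes' rule for the latent posterior, the flip channel for the observed posterior, and conditioning on $\bX$ for the excess risk. The only difference is packaging: the paper writes Step 3 as the standard identity $\ee(\bw)-\ee(\wb)=\E\bigl[|2\P(Y=1\mid\bX)-1|\,\ind\{\sign(\bX^\top\bw)\neq\sign(\bX^\top\wb)\}\bigr]$ and then applies $\tanh(z)=2\sigma(2z)-1$, which is exactly your difference-of-conditional-errors computation.
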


\begin{proof}
    Although these results are well-known, we prove them here for completeness. Start by noting that for $\wt{y}\in \{\pm1\}$,
    \begin{align*}
        \bm X\mid \widetilde Y=\widetilde{y}\sim \mc{N}(\widetilde y \bm \mu,\bm \Sigma).
    \end{align*}
     Together with the fact that $\mathbb{P}(\widetilde{Y}=\pm 1)=1/2$, the Bayes' rule gives:
    \begin{align*}
        \mathbb{P}(\widetilde{Y}=1\mid\bm X=\bm x)=\frac{f_{\bm X\mid \widetilde{Y}=1}(\bm x)}{f_{\bm X\mid \widetilde{Y}=1}(\bm x)+f_{\bm X\mid \widetilde{Y}=-1}(\bm x)}=\sigma(2\bm x^\top \bm\Sigma^{-1}\bm \mu)=\sigma(2\bm x^\top\wb).
    \end{align*}
     Since $\sigma(-t)=1-\sigma(t)$, $\P(\wt{Y}=y\mid \bX=\bx) = \sigma(2y\bx^\top \bw^*)$. 
  Furthermore, since $\wt Y$ is flipped with probability $p\in [0,0.5)$ to obtain $Y$, we have 
    \begin{align*}
        \mathbb{P}(Y=y\mid \bm  X= \bm x)=p+(1-2p)\sigma(2y\bm x^\top \wb)\,.
    \end{align*}
    Finally, conditioning on $\bX$ and computing the excess zero-one risk gives
    \begin{align*}
        \ee(\bm w)-\ee(\wb)=\mathbb{E}\left[|2\mathbb{P}(Y=1\mid \bm X)-1|\mathbbm{1}\{\operatorname{sign}(\bm X^\top \bm w)\neq\operatorname{sign}(\bm X^\top \wb)\}\right],
    \end{align*}
    and \eqref{eq:excess_zero_one} follows from the identity $\tanh(z)=2\sigma(2z)-1$.
    
\end{proof}

\begin{lemma}[Subgaussian covariates]
\label{lemma: subgauss covariates}
    For any $\bm w\in\mathbb{R}^d$, $\bm X^\top \bm w$ is subgaussian with variance proxy $\normg{\bm w}^2$\ . Consequently, for any $t>0$,
    \begin{align}
        \mathbb{P}(|\bm X^\top \bm w|>t)\leq 2\exp\left(-\frac{t^2}{2\normg{\bm w}^2}\right).\label{eq:subgaussian_concentration}
    \end{align}
\end{lemma}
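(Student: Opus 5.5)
Since $\bX = \wt Y\bmu + \bvareps$ with $\wt Y$ uniform on $\{\pm1\}$ and independent of $\bvareps\sim\mc N(\bzero,\bSigma)$, I would decompose $\bX^\top\bw = \wt Y\,\bmu^\top\bw + \bvareps^\top\bw$. I would bound the moment generating function directly and then apply the Chernoff method.

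For any $s\in\bbR$, independence gives
\begin{align*}
\E\big[e^{s\bX^\top\bw}\big]=\E\big[e^{s\wt Y\bmu^\top\bw}\big]\,\E\big[e^{s\bvareps^\top\bw}\big]=\cosh(s\,\bmu^\top\bw)\,e^{s^2\norms{\bw}^2/2}.
\end{align*}
Next I would use the elementary bound $\cosh(z)\le e^{z^2/2}$, which follows by comparing the Taylor coefficients, since $(2k)!\ge 2^k k!$. This yields
\begin{align*}
\E\big[e^{s\bX^\top\bw}\big]\le \exp\Big(\tfrac{s^2}{2}\big((\bmu^\top\bw)^2+\bw^\top\bSigma\bw\big)\Big)=\exp\Big(\tfrac{s^2}{2}\normg{\bw}^2\Big),
\end{align*}
because $\bGamma=\bSigma+\bmu\bmu^\top$. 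Thus $\bX^\top\bw$ is mean-zero and subgaussian with variance proxy $\normg{\bw}^2$.

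For the tail bound \eqref{eq:subgaussian_concentration}, Markov's inequality gives $\P(\bX^\top\bw>t)\le e^{-st}e^{s^2\normg{\bw}^2/2}$. Optimising at $s=t/\normg{\bw}^2$ gives $\exp(-t^2/(2\normg{\bw}^2))$. The same argument applied to $-\bw$ controls the lower tail, and a union bound over the two tails gives the factor $2$.

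The degenerate case $\bw=\bzero$ is trivial. There is no real obstacle; the only step that needs care is the inequality $\cosh z\le e^{z^2/2}$, which is exactly what lets the Rademacher mean component be absorbed into the same variance proxy as the Gaussian part.
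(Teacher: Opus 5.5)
Your proposal is correct and follows the paper's proof essentially verbatim: you factorise the MGF using the independence of $\wt Y$ and $\bvareps$, absorb the Rademacher mean term via $\cosh z\le e^{z^2/2}$ into the variance proxy $\normg{\bw}^2$, and finish with a Chernoff bound over both tails. Your display also correctly carries the square in $(\bmu^\top\bw)^2$, which the paper's version drops by a typo.
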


\begin{proof}
    Fix any $\bm w\in\mathbb{R}^d$ and any $t\in\mathbb{R}$. Then, by the independence of $\wt Y$ and $\bm \varepsilon\sim \mathcal{N}(\bzero,\bm \Sigma)$, we have
    \begin{align*}
        \mathbb{E}[\exp (t\bm X^\top \bm w)]&=\mathbb{E}[\exp(t\wt Y\bmu^\top \bm w)]\mathbb{E}[\exp(t\bm\varepsilon^\top \bm w)]=\cosh(t\bmu^\top\bm w)\exp\left(\frac{t^2}{2}\bm w^\top\bm\Sigma\bm w\right)\\
        &\leq \exp\left(\frac{t^2}{2}(\bm w^\top \bm \Sigma \bm w+(\bmu^\top \bm w))\right)=\exp\left(\frac{t^2}{2}\bm w\bm^\top \bGamma\bm w\right)=\exp\left(\frac{t^2}{2}\normg{\bm w}^2\right).
    \end{align*}
    Thus, $\bm X^\top \bm w$ is subgaussian with variance proxy $\normg{\bm w}^2$\ . The concentration inequality \eqref{eq:subgaussian_concentration} follows from a Chernoff bound.
\end{proof}

\section{Proofs Omitted from Section \ref{problem setup}}
\label{app: problem setup}
\subsection{Population Logistic Risk Minimiser}
\label{sec:proof_logistic_minimiser}

We restate Lemma~\ref{lemma: w_pop} and detail its proof. 
\lemmawpop*

\begin{proof}
    Recall that $\wb = \bm \Sigma^{-1}\bm \mu$. 
    Fix an arbitrary $\bm w\in\mathbb{R}^d$ and decompose it as 
    \begin{align*}
        \bm w = \rho \wb + \bm u, 
    \end{align*}
    where 
\begin{align*}
    \rho=\bm w^\top \bm \mu/(\bw^*)^\top\bm \mu\in \bbR\quad \text{and}\quad \bm u=\bw-\rho\bw^*
\end{align*}
By construction, $\bu^\top \bm \mu=0$, so $\bw$ is split into a component along the signal direction $\bw^*$ and a component $\bu$ orthogonal to $\bmu$. 

    \paragraph{Step 1:} We first prove that each minimiser of  $\mc L(\cdot)$  satisfies  $\bm u=\bm 0_d$, i.e., it lies along $\bw^*$. For this, let $T^*=\bm X^\top \wb$ and $T_{\bm u}=\bm X^\top \bm u$. These random variables are independent. Indeed,  since  $\bw^*=\bSigma^{-1}\bmu$ and $\bm u^\top \bm \mu =0$, we have 
    \begin{align}
        T^*&=(\widetilde{Y}\bm \mu+\bm\varepsilon)^\top \bm \wb=\widetilde{Y}\normso{\bm \mu}^2+\bm\varepsilon^\top \wb, \label{eq:T_star_def}\\
        T_{\bm u}&=(\wt Y\bm \mu+\bm\varepsilon)^\top \bm u=\bm \varepsilon^\top \bm u.
        \label{eq:Tu_def}
    \end{align}
    These are independent because $\wt Y$ is independent of $\bm\varepsilon$ and the Gaussians $\bm\varepsilon^\top\wb$ and $\bm\varepsilon^\top \bm u$ are uncorrelated:
    \begin{align*}
        \operatorname{Cov}(\bm\varepsilon^\top\wb,\bm\varepsilon^\top  \bm u)=\bm u^\top \bm \Sigma \wb=0.
    \end{align*}
    Now, for $t,s\in\mathbb{R}$, define
    \begin{align}\label{eq:eta_rho_def}
        \zeta(t)=p+(1-2p)\sigma(2t),\quad\quad\vartheta_t(s)=\zeta(t)\ell(s)+(1-\zeta(t))\ell(-s)\,,
    \end{align}
    where recall $\sigma(z)={1}/(1+\exp(-z))$ denotes the sigmoid function and $\ell(z)=\log(1+\exp(-z))$    denotes the logistic loss. 
    Then, using the decomposition of $\bm w$ and the tower law, we can write 
    \begin{align*}
        \mc L(\bm w)&=\mathbb{E}[\ell(Y\bm X^\top \bm w)]=\mathbb{E}[\ell(Y\bm X^\top (\rho \wb +\bm u))]=\mathbb{E}[\ell(Y(\rho T^*+T_{\bm u}))]\\
        &=\mathbb{E}\left[\mathbb{E}[\ell(Y(\rho T^*+T_{\bm u}))\mid \bm X]\right]=
        \mathbb{E}[\vartheta_{T^*}(\rho T^*+T_{\bm{u}})]\,,
    \end{align*}
    where the last step applies $\P(Y=1\mid \bX)=\zeta(T^*)$ due to Lemma \ref{lemma: posteriors and excess}. 
    Since $\ell''>0$, the function $\vartheta_t$ is strictly convex for any $t\in \bbR$. Hence, conditioning on $T^*$ and applying the tower law, we can write 
    \begin{align*}
        \mc L(\bm w)&=\mathbb{E}[\vartheta_{T^*}(\rho T^*+T_{\bm{u}})]=\mathbb{E}\left[\mathbb{E}[\vartheta_{T^*}(\rho T^*+T_{\bm{u}})\mid T^*]\right]\\
        &\stackrel{\text{(i)}}{\geq} \mathbb{E}\left[\vartheta_{T^*}(\rho T^*+\mathbb{E}[T_{\bm u}\mid T^*])\right] \stackrel{\text{(ii)}}{=}\mathbb{E}[\vartheta_{T^*}(\rho T^*)]=\mc L(\rho\wb)\,,
    \end{align*}
    where (i) is   Jensen's inequality applied to the strictly convex function $\vartheta_{T^*}$,  and (ii) uses   the independence of $T^*$ and $ T_{\bm u}$, together with $\mathbb{E}[T_{\bm u}]=0$. 
    Equality in (i)  holds if and only if $T_{\bm u}=0$, which implies $\bm u=\bm 0_d$. 
    
    \paragraph{Step 2:} We have shown every minimiser of $\mc{L}(\cdot)$ in $\bbR^d$ takes the form $\rho \bw^*$, so it remains to minimise over $\rho$. For this, we study the function
    \begin{align*}
        f(\rho)=\mc L(\rho \wb).
    \end{align*}
    To begin,     by noticing  $T^*=\bX^\top \bw^*= (\wt{Y}\bmu+\bvareps)^\top \bw^*= \wt Y \normso{\bm \mu}^2+\bm \varepsilon^\top \wb$ and      $\bvareps^\top\bw^*\perp  \wt Y$, we have 
    \begin{align}\label{eq:E_T_star_Y}
        \E[T^*\wt{Y}]=\bm \mu^\top\bm \Sigma^{-1}\bm \mu = \|\bmu\|_{\bSigma^{-1}}^2\,.
    \end{align}
    Thus, we can write
    \begin{align}
        \mathbb{E}[T^*\zeta(T^*)]&\stackrel{\text{(i)}}{=}\tfrac{1}{2}\mathbb{E}[T^*(2\zeta(T^*)-1)]\nonumber\\
        &\stackrel{\text{(ii)}}{=}\tfrac{1}{2}\mathbb{E}[T^*\mathbb{E}[Y\mid \bm X]]=\tfrac{1}{2}\mathbb{E}[T^* Y]=\tfrac{1-2p}{2}\mathbb{E}[T^*\wt Y]\nonumber\\
        &\stackrel{\text{(iii)}}{=}
        \tfrac{1-2p}{2}\normso{\bm \mu}^2\,,\label{eq:E_T_eta}
    \end{align}
    where (i) uses $\mathbb{E}[T^*]=0$, (ii) holds because  $\P(Y=1\mid \bX)=\zeta(T^*)$ from Lemma \ref{lemma: posteriors and excess} implies $\mathbb{E}[Y\mid \bm X]=2\mathbb{P}(Y=1\mid \bm X)-1=2\zeta(T^*)-1$, and  (iii) applies \eqref{eq:E_T_star_Y}. 
    
    Let $\vartheta_t(s), \zeta(t)$ be defined as in \eqref{eq:eta_rho_def}, then note that the derivative $\vartheta'_t(s)=\sigma(s)-\zeta(t)$. Using this and applying \eqref{eq:E_T_eta}, we obtain
    \begin{align}
    \label{f' alpha}
        f'(\rho)&=\frac{\mathrm{d}}{\mathrm{d} \rho}\E[\vartheta_{T^*}(\rho T^*)]=\E[T^* \vartheta_{T^*}'(\rho T^*)]
        =\mathbb{E}[T^*(\sigma(\rho T^*)-\zeta(T^*))]\nonumber\\
        &=\mathbb{E}[T^*\sigma(\rho T^*)]-\mathbb{E}[T^*\zeta(T^*)]\nonumber\\
        &=\mathbb{E}[T^*\sigma(\rho T^*)]-\tfrac{1-2p}{2}\normso{\bm\mu}^2\,.
    \end{align}
    Setting $f'(\rho^*)=0$ and rearranging, any stationary point $\rho^*$ of $f(\cdot)$ satisfies
    \begin{align*}
        \E[T^*\sigma(\rho^* T^*)]=\tfrac{1-2p}{2}\normso{\bm\mu}^2\,.
    \end{align*}
    
 \paragraph{Step 3:} The final step is to show that $f(\cdot)$ has a \emph{unique} stationary point $\rho^*\in (0,2]$ and that it is a minimiser.
 Since $\sigma'(z)>0$ for all $z\in \bbR$ and $T^*\ne 0$ almost surely, we have 
    \begin{align*}
        f''(\rho)=\mathbb{E}[(T^*)^2\sigma'(\rho T^*)]>0\,,
    \end{align*}
    meaning $f$ is strictly convex. 
    Hence, $\rho^*$ is the unique minimiser of $f(\cdot)$. Finally, note that $\rho^*\in (0,2]$ because $f'(0)<0$ and $f'(2)\geq 0$. Indeed, Eq.~\eqref{f' alpha} implies that 
    \begin{align*}
        f'(0)=\tfrac{1}{2}\mathbb{E}[T^*]-\tfrac{1-2p}{2}\normso{\bm \mu}^2=-\tfrac{1-2p}{2}\normso{\bm \mu}^2<0\,,
    \end{align*}
    and
    \begin{align*}
        f'(2)&=\mathbb{E}[T^*\sigma(2 T^*)]-\tfrac{1-2p}{2}\normso{\bm \mu}^2\\
        &\stackrel{\text{(i)}}{=}\tfrac{1}{2}\mathbb{E}\left[T^*({2\sigma(2T^*)-1})\right]-\tfrac{1-2p}{2}\normso{\bm \mu}^2\\
        &\stackrel{\text{(ii)}}{=}\tfrac{1}{2}\mathbb{E}[T^*\wt Y]-\tfrac{1-2p}{2}\normso{\bm \mu}^2\\
        &\stackrel{\text{(iii)}}{=}\tfrac{1}{2}\normso{\bm \mu}^2-\tfrac{1-2p}{2}\normso{\bm \mu}^2=p\normso{\bm \mu}^2\geq 0\,,
    \end{align*}
    where (i) uses $\E[T^*]=0$, (ii) holds because  
     $\P(\wt{Y}=1\mid \bX)=\sigma(2T^*)$  from Lemma \ref{lemma: posteriors and excess} implies  $\mathbb{E}[\wt Y\mid \bm X]=2\P(\wt{Y}=1\mid \bX)-1=2\sigma(2T^*)-1$, and  (iii) applies \eqref{eq:E_T_star_Y}. 
\end{proof}

\subsection{Optimisation Lemmas}\label{sec:optimisation_lemmas}
As detailed in Section~\ref{problem setup}, we train a linear classifier on the Gaussian mixture  samples by minimising the empirical logistic risk  $\wh{\mc{L}}(\cdot)$ via early-stopped GD. To analyse the optimisation, we first establish the Lipschitz and smoothness  properties of $\wh{\mc{L}}(\cdot)$,   then characterise the stopping time $\tau$ (Lemma~\ref{lem:existence_tau}) and the early-stopped GD iterate $\bw_\tau$ (Lemma~\ref{ESproperties}).  Throughout this section, we will denote
\begin{align*}
    \beta:=\frac{\norm{\mathbf{X}_{1:n}}^2}{n}.
\end{align*}

\begin{lemma}[Lipschitz and smoothness]
\label{lemma:lle gradient hessian}
The empirical logistic risk $\lle(\cdot)$ is $\sqrt{\beta}$-Lipschitz and $\beta$-smooth, i.e.,
\begin{align*}
    \underset{\bm w\in\mathbb{R}^d}{\sup} \norm{\nabla\lle(\bm w)}\leq \sqrt{\beta}\quad\quad\text{and}\quad\quad \underset{\bm w\in\mathbb{R}^d}{\sup}\norm{\nabla^2 \lle(\bm w)}\leq {\beta}.
\end{align*}

\end{lemma}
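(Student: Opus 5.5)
We compute the gradient and Hessian of $\lle$ explicitly and bound their norms through the spectral norm of the design matrix $\mathbf X_{1:n}$. Using $\ell'(z)=-\sigma(-z)$ and $\ell''(z)=\sigma(z)\sigma(-z)$, we have
\begin{align*}
    \nabla\lle(\bm w)=\frac{1}{n}\sum_{i=1}^n \ell'(Y_i\bm X_i^\top\bm w)\,Y_i\bm X_i=\frac{1}{n}\mathbf X_{1:n}^\top \bm v(\bm w),\qquad \nabla^2\lle(\bm w)=\frac{1}{n}\mathbf X_{1:n}^\top \mathbf D(\bm w)\mathbf X_{1:n},
\end{align*}
where $\bm v(\bm w)\in\mathbb{R}^n$ has entries $v_i=\ell'(Y_i\bm X_i^\top\bm w)Y_i$, and $\mathbf D(\bm w)$ is diagonal with entries $\ell''(Y_i\bm X_i^\top\bm w)$. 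In the Hessian we used $Y_i^2=1$.

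For the Lipschitz bound, note that $|\ell'(z)|=\sigma(-z)\le 1$. Hence $|v_i|\le 1$ and $\norm{\bm v(\bm w)}\le\sqrt n$. Then
\begin{align*}
    \norm{\nabla\lle(\bm w)}\le \frac{1}{n}\norm{\mathbf X_{1:n}^\top}\,\norm{\bm v(\bm w)}\le\frac{\norm{\mathbf X_{1:n}}\sqrt n}{n}=\sqrt{\frac{\norm{\mathbf X_{1:n}}^2}{n}}=\sqrt\beta,
\end{align*}
uniformly in $\bm w$.

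For smoothness, observe that $0<\ell''(z)=\sigma(z)(1-\sigma(z))\le 1/4\le 1$. Thus $\bzero\preceq\mathbf D(\bm w)\preceq \bI_n$, which gives
\begin{align*}
    \bzero\preceq\nabla^2\lle(\bm w)\preceq \frac1n\mathbf X_{1:n}^\top\mathbf X_{1:n}.
\end{align*}
Since $\nabla^2\lle(\bm w)$ is positive semidefinite, its spectral norm is at most $\norm{\mathbf X_{1:n}}^2/n=\beta$. This actually yields the sharper constant $\beta/4$, so the claim holds with room to spare.

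There is no real obstacle in this argument. The only care needed is to use correctly that the spectral norm of $\mathbf X_{1:n}^\top$ equals that of $\mathbf X_{1:n}$, and to remember that the Hessian is PSD so its spectral norm equals its top eigenvalue.
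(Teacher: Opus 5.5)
Your proof is correct and follows essentially the same route as the paper: you write $\nabla\lle(\bm w)=\frac1n\mathbf X_{1:n}^\top\bm v(\bm w)$ with entries bounded by $1$ in absolute value, and $\nabla^2\lle(\bm w)=\frac1n\mathbf X_{1:n}^\top\mathbf D(\bm w)\mathbf X_{1:n}$ with $0\le\ell''\le 1/4$. The only cosmetic difference is that you bound the Hessian via the Loewner ordering on $\mathbf D(\bm w)$, whereas the paper uses submultiplicativity of the spectral norm; both give the same (indeed sharper, $\beta/4$) constant.
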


\begin{proof}
Recall $\|\mathbf{X}_{1:n}\|$ denotes the spectral norm of $\mathbf{X}_{1:n}$.    Let $\bm a(\bm w)\in\mathbb{R}^n$ be the vector whose $i$-th coordinate is defined as $[\bm a(\bw)]_i=Y_i/(1+\exp(Y_i\bm X_i^\top \bm w))$. Then we have $\nabla \lle(\bm w)=-\frac{1}{n}\mathbf{X}_{1:n}^\top \bm a(\bm w)\in \bbR^d$. Since the entry-wise absolute value $|[\bm a(\bm w)]_i| \leq 1$ for all $i\in[n]$, we have $\norm{\bm a(\bm w)}\leq\sqrt{n}$ and so
    \begin{align*}
        \norm{\nabla \lle(\bm w)}\leq \frac{\norm{\mathbf{X}_{1:n}}\norm{\bm a(\bm w)}}{n}\leq \frac{\norm{\mathbf{X}_{1:n}}}{\sqrt{n}}=\sqrt{\beta}.
    \end{align*}
    For the Hessian, note that $\nabla^2 \lle(\bm w)=\frac{1}{n}\mathbf{X}_{1:n}^\top \bm D(\bm w)\mathbf{X}_{1:n}\in \bbR^{d\times d}$, where $\bm D(\bw)$ is the $n\times n$ diagonal matrix whose $i$-th diagonal entry is $[\bm D(\bw)]_{ii}=\ell''(Y_i\bm X_i^\top \bm w)$. Since $0\leq \ell''(z)\leq 1/4$ for all $z\in \bbR$, we have 
    \begin{align*}
        \norm{\nabla^2\lle(\bm w)}\leq \frac{\norm{\mathbf{X}_{1:n}}\norm{\bm D(\bm w)}\norm{\mathbf{X}_{1:n}}}{n}\leq \frac{\norm{\mathbf{X}_{1:n}}^2}{4n}\leq \beta.
    \end{align*}
\end{proof}

We next restate Lemmas~\ref{lem:existence_tau} and \ref{ESproperties} and provide their proofs, respectively. 
\tauexists*

\begin{proof}
   When $n\leq d$, we have $\operatorname{rank}(\mathbf{X}_{1:n})=n$ almost surely,  since the covariates  $\bm X=\wt Y\bm \mu+\mc{N}(0,\bm\Sigma)$  have a continuous density on  $\mathbb{R}^d$ and are therefore linearly independent almost surely. Then, linear separability of the training samples follows from \cite[Proposition 2.2]{wu2025benefits}. On linearly-separable data, the implicit bias of GD  under logistic loss \cite{soudry2018implicit, shamir2021gradient}  implies    $\|\bm w_t\|\to \infty$  as $t\to \infty$, so $\tau$  is finite.
\end{proof}

\wtauproperties*

\begin{proof}
    \textbf{Proof of part (i): } By the definition of the stopping time $\tau:=\inf\{t\geq 1:\norm{\bm w_t}\geq 4\norm{\wb}\}$, we have $\norm{\bm w_{\tau-1}}<4\norm{\wb}$. Using the GD update rule, Lemma \ref{lemma:lle gradient hessian}, and the choice of step size  $\gamma\leq \min(1/\beta, 1)$ in Section~\ref{problem setup}, 
    we can write
    \begin{align*}
        \norm{\bm w_{\tau}-\bm w_{\tau-1}}=\gamma \norm{\nabla \lle(\bm w_{\tau-1})}\leq \gamma\sqrt{\beta}\le \sqrt{\gamma}\cdot \sqrt{\gamma\beta}\leq \sqrt{\gamma}\leq 1\,.
    \end{align*}
    Hence, by the triangle inequality,
    \begin{align*}
        \norm{\bm w_\tau}\leq \norm{\bm w_{\tau-1}}+\norm{\bm w_{\tau}-\bm w_{\tau-1}}<4\norm{\wb}+1.
    \end{align*}

       \paragraph{Proof of part (ii):} Suppose, towards a contradiction, that $\lle(\bm w_{\tau})>\lle(\bm w^*_{\mc{L}})$. Since $\lle(\cdot)$ is $\beta$-smooth and the step size $\gamma\leq 1/\beta$, Lemma 3.3 in \cite{wu2025benefits} with the comparator $\bm w^*_{\mc{L}}$ gives
    \begin{align*}
        \frac{\norm{\bm w_{\tau}-\bm w_{\mc{L}}^*}^2}{2\gamma \tau}+\lle(\bm w_{\tau})\leq \lle(\bm w_{\mc{L}}^*)+\frac{\norm{\bm w_{\mc{L}}^*}^2}{2\gamma \tau}.
    \end{align*}
    Our assumption $\lle(\bm w_{\tau})>\lle(\bm w^*_{\mc{L}})$ then forces  $\norm{\bm w_\tau-\bm w_{\mc{L}}^*}< \norm{\bm w^*_{\mc{L}}}$. But  by the definition of the stopping time $\tau$, the triangle inequality, and $\norm{\bm w^*_{\mc{L}}}=\|\rho^*\bw^*\|\leq 2\norm{\wb}$ due to Lemma \ref{lemma: w_pop},  we arrive at a contradiction:
    \begin{align*}
        4\norm{\wb}\leq \norm{\bm w_\tau}\leq \norm{\bm w_\tau-\bm w^*_{\mc{L}}}+\norm{\bm w^*_{\mc{L}}}<2\norm{\bm w^*_{\mc{L}}}\leq 4\norm{\bm \wb}.
    \end{align*}
\end{proof}

\section{Proof of  the Early-Stopped GD Upper Bound}
\label{app: upper bound proof}
We prove  the excess zero-one  risk upper bound of Theorem \ref{thm: upper bound final} for the early-stopped iterate $\bm w_\tau$  via a sequence of lemmas (Sections~\ref{sec:stronger_calibration_proof}--\ref{sec:empirical_excess_risk_deviation}), culminating in the proof of Theorem~\ref{thm: upper bound final} at the end of this section (Section~\ref{sec:theorem_1_proof}).

Throughout, for symmetric matrices $\bm A, \bm B\in \bbR^{d\times d}$,  we   write $ \bm A \succ \bm B$ (resp.\ $\succeq$) to mean that $\bm A-\bm B$ is positive  definite (resp.\ positive semidefinite), i.e., $\bm h^\top (\bm A- \bm B)\bm h> 0$ (resp. $\ge0$) for all $\bm h\in \bbR^d$.
\subsection{Zero-One to Logistic Calibration via \texorpdfstring{$\bm \Sigma$}{}-geometry}\label{sec:stronger_calibration_proof}

In this section, we prove the calibration result in Lemma~\ref{cor:calibration}, which upper bounds the excess zero-one risk   by the excess logistic risk. 
The key technique is to introduce the $\bSigma$-geometry distance $\|\bw-\wpop\|_{\bSigma}$ as a bridge between the two risks:  Lemma~\ref{lemma: conditional excess risk bound} upper bounds the   excess zero-one risk by $\|\bw-\wpop\|_{\bSigma}$, while Lemma~\ref{lemma: local strong convexity} allows us to lower bound   the excess logistic risk by $\|\bw-\wpop\|_{\bSigma}$. Combining the two yields Lemma~\ref{cor:calibration}.

To begin, we   show in Lemma~\ref{lemma: conditional excess risk bound} that if  $\bm w\in\mathbb{R}^d$ is sufficiently close to  $\wpop$ in the $\bSigma$-norm, then its excess zero-one risk $\mc{E}(\bw)-\ee(\wb)$ can be upper bounded by $\|\bw-\bw^*_{\mc{L}}\|_{\bSigma}$.

\begin{lemma}[$\bSigma$-geometry calibration of excess zero-one risk]
\label{lemma: conditional excess risk bound}
    Suppose $\bm w\in\mathbb{R}^d$ satisfies
    \begin{align}
        \norm{\bm w-\bm w^*_{\mc{L}}}_{\bm \Sigma}\leq \frac{\rho^*\normso{\bmu}}{2}.\label{eq:w_wpop_norm_bound}
    \end{align}
    Then,
    \begin{align}\label{eq:excess_risk_ell2_norm}
        \ee(\bm w)-\ee(\wb)\leq \frac{4(1-2p)}{\sqrt{2\pi}(\rho^*)^2\normso{\bmu}}\norms{\bm w-\wpop}^2.
    \end{align}
\end{lemma}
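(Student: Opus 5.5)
We start from the representation \eqref{eq:excess_zero_one} of Lemma~\ref{lemma: posteriors and excess}:
\[
\ee(\bw)-\ee(\wb)=(1-2p)\,\E\bigl[|\tanh(T^*)|\,\ind\{\sign(\bX^\top\bw)\ne\sign(T^*)\}\bigr],\qquad T^*=\bX^\top\wb .
\]
The sign classifier is invariant under positive rescaling, so we replace $\bw$ by $\bw/\rho^*$. We then write $\bw/\rho^*=\wb+\bv$ with $\bv:=(\bw-\wpop)/\rho^*$, so that $\norms{\bv}=\norms{\bw-\wpop}/\rho^*\le \normso{\bmu}/2$ by \eqref{eq:w_wpop_norm_bound}. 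The claim then reads $\ee(\bw)-\ee(\wb)\le \frac{4(1-2p)}{\sqrt{2\pi}\normso{\bmu}}\norms{\bv}^2$. As in the proof of Lemma~\ref{lemma: w_pop}, we decompose $\bv=a\wb+\bu$ with $\bu^\top\bmu=0$. Then $T_{\bu}=\bvareps^\top\bu$ is independent of $T^*$, is Gaussian with variance $\norms{\bu}^2$, and satisfies $\norms{\bv}^2=a^2\normso{\bmu}^2+\norms{\bu}^2$ (the cross term vanishes because $\bu^\top\bSigma\wb=\bu^\top\bmu=0$). Moreover $|a|\normso{\bmu}\le\norms{\bv}\le\normso{\bmu}/2$, so $|a|\le 1/2$ and $1+a\ge 1/2$.

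The sign disagreement event is $\{\sign((1+a)T^*+T_{\bu})\ne\sign(T^*)\}$. It is contained in $\{|T^*|\le |T_{\bu}|/(1+a)\}$, and on this event also $\sign(T_{\bu})=-\sign(T^*)$. Using $|\tanh z|\le|z|$, independence, and conditioning on $T_{\bu}$, we get
\[
\ee(\bw)-\ee(\wb)\le(1-2p)\,\E_{T_{\bu}}\Bigl[\int_0^{|T_{\bu}|/(1+a)} t\, f_{T^*}(\pm t)\,\de t\Bigr],
\]
where the sign in $f_{T^*}(\pm t)$ is the one opposite to $T_{\bu}$, coming from the half-line restriction.

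Next we need a uniform bound on the density of $T^*$. It is a mixture $\tfrac12\mc N(\pm\normso{\bmu}^2,\normso{\bmu}^2)$, so $f_{T^*}(t)\le 1/(\sqrt{2\pi}\normso{\bmu})$. The integral is therefore at most $\frac{1}{\sqrt{2\pi}\normso{\bmu}}\cdot\frac{T_{\bu}^2}{2(1+a)^2}$, and taking expectations gives a bound $\frac{(1-2p)}{2\sqrt{2\pi}\normso{\bmu}(1+a)^2}\norms{\bu}^2\le \frac{2(1-2p)}{\sqrt{2\pi}\normso{\bmu}}\norms{\bu}^2$. Since $\norms{\bu}^2\le\norms{\bv}^2$, this is within the constant $4$ of the target. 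The factor $(\rho^*)^2$ arises exactly through $\norms{\bv}^2=\norms{\bw-\wpop}^2/(\rho^*)^2$, which yields \eqref{eq:excess_risk_ell2_norm}. The component along $\wb$ (the coefficient $a$) only rescales and never changes the sign, which is why the condition \eqref{eq:w_wpop_norm_bound} is needed to keep $1+a>0$ bounded away from zero.

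The main obstacle is the region analysis: we must make sure the disagreement event is correctly captured when $1+a$ could be small or negative, and this is exactly what the localisation hypothesis rules out. A second point is obtaining a quadratic rather than linear rate in $\norms{\bu}$. This comes from the weight $|\tanh T^*|\le|T^*|$, which vanishes at the decision boundary and combines with the width $|T_{\bu}|$ of the disagreement strip. A mild subtlety is the sharp density bound. The conservative choice $\sup f_{T^*}\le 1/(\sqrt{2\pi}\normso{\bmu})$ suffices, so we do not exploit the exponential factor $e^{-\normso{\bmu}^2/2}$, which would only improve the constants.
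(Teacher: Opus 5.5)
Your proof is correct and is essentially the paper's argument: the same decomposition of $\bw$ along $\wb$ plus a $\bmu$-orthogonal part, the same inclusion of the disagreement event in $\{|T^*|\le |T_{\bu}|/\rho\}$ with $\rho\ge\rho^*/2$, the bound $|\tanh z|\le|z|$, and the same uniform density bound on $T^*$. Your preliminary rescaling by $\rho^*$ is only cosmetic, since your $1+a$ equals the paper's $\rho/\rho^*$. Your extra observation that $\sign(T_{\bu})=-\sign(T^*)$ on the disagreement event restricts the integral to a half-line, which gives the constant $2$ instead of the paper's $4$; this is a harmless sharpening.
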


\begin{proof}
    Recall  $\bw^*=\bm\Sigma^{-1}\bmu$. As in Lemma \ref{lemma: w_pop}, decompose  $\bm w=\rho \wb+\bm u$ with $\bm u^\top \bmu=0$. 
    We first show  that $\rho>0$. Since $\wpop=\rho^*\bw^*$, we have 
    \begin{align}
    \label{eq: norm w - w pop}
        \norms{\bm w-\wpop}^2=\norms{(\rho-\rho^*)\wb+\bm u}^2=(\rho-\rho^*)^2\normso{\bmu}^2+\norms{\bm u}^2\,,
    \end{align}
    which implies  $\norms{\bm w-\wpop}\geq |\rho-\rho^*|\normso{\bmu}$. Combining with the assumption in  \eqref{eq:w_wpop_norm_bound}  gives  $|\rho-\rho^*|\leq \rho^*/2$. Since  $\rho^*>0$  (c.f.\ Lemma~\ref{lemma: w_pop}), it follows  that
    \begin{align}
    \label{bound alpha}
        \rho\geq \frac{\rho^*}{2}>0\,.
    \end{align}
    
The decomposition $\bw=\rho\bw^*+\bu$ implies  $\bm X^\top \bm w=\rho T^*+T_{\bm u}$, where  $T^*=\bm X^\top\wb$ and $T_{\bm u}=\bm X^\top \bm u$    are independent and each can be expanded as in \eqref{eq:T_star_def}--\eqref{eq:Tu_def}. Applying  the excess   risk expression  \eqref{eq:excess_zero_one} of Lemma \ref{lemma: posteriors and excess} and applying $\tanh(z)\leq |z|$ yields
    \begin{align*}
        \ee(\bm w)-\ee(\wb)&=(1-2p)\mathbb{E}\left[|\tanh(\bm X^\top \wb)|\mathbbm{1}\{\operatorname{sign}(\bm X^\top \bm w)\neq\operatorname{sign}(\bm X^\top \wb)\}\right]\\
        &\leq(1-2p)\mathbb{E}\left[|\bm X^\top \wb|\mathbbm{1}\{\operatorname{sign}(\bm X^\top \bm w)\neq\operatorname{sign}(\bm X^\top \wb)\}\right]\\
        &=(1-2p)\mathbb{E}\left[|T^*|\mathbbm{1}\{\operatorname{sign}(\rho T^*+T_{\bm u})\neq\operatorname{sign}(T^*)\}\right].
    \end{align*}
   Since $\rho>0$, we have the following event inclusion:
    \begin{align*}
        \{\operatorname{sign}(\rho T^*+T_{\bm u})\neq\operatorname{sign}(T^*)\}\subseteq\left\{|T^*|\leq \frac{|T_{\bm u}|}{\rho}\right\}\,.
    \end{align*}
    
    Note that conditionally on $\wt{Y}$,  $T^*\mid \wt{Y} \sim \mathcal{N}(\wt{Y}\|\bmu\|_{\bSigma^{-1}}^2,\normso{\bmu}^2)$, so the conditional density satisfies $f_{T^*\mid \wt{Y}}(t)\leq 1/(\sqrt{2\pi}\normso{\bmu})$. Since this bound is uniform over $\wt{Y}\in \{\pm1\}$, the marginal density of $T^*$ satisfies the same bound $f_{T^*}(t)\leq 1/(\sqrt{2\pi}\normso{\bmu})$. Therefore, we have
    \begin{align*}
        \ee(\bm w)-\ee^*&\leq (1-2p)\mathbb{E}\left[|T^*|\mathbbm{1}\{\operatorname{sign}(\rho T^*+T_{\bm u})\neq\operatorname{sign}(T^*)\}\right]\\
        &\leq (1-2p)\mathbb{E}\left[|T^*|\mathbbm{1}\left\{|T^*|\leq \frac{|T_{\bm u}|}{\rho}\right\}\right]
        =(1-2p)\mathbb{E}\left[\mathbb{E}\left[|T^*|\mathbbm{1}\left\{|T^*|\leq \frac{|T_{\bm u}|}{\rho}\right\}\biggl\vert T_{\bm u}\right]\right]\\
        &=(1-2p)\mathbb{E}\left[\int_{-|T_{\bm u}|/\rho}^{|T_{\bm u}|/\rho} |t|f_{T^*}(t)\de t\right]
        \leq \frac{1-2p}{\sqrt{2\pi}\normso{\bmu}}\mathbb{E}\left[\int_{-|T_{\bm u}|/\rho}^{|T_{\bm u}|/\rho} |t|\de t\right]\\
        &=\frac{1-2p}{\sqrt{2\pi}\normso{\bmu}}\mathbb{E}\left[\frac{T_{\bm u}^2}{\rho^2}\right]
        =\frac{(1-2p)}{\sqrt{2\pi}\normso{\bmu}}\cdot \frac{\norms{\bm u}^2}{\rho^2}\,,
    \end{align*}
    where the last step holds because  $T_{\bm u}\sim \mc N(0,\norms{\bm u}^2)$. 
    Combining this with the bound $\norms{\bm u}^2\leq \norms{\bm w-\wpop}^2$ implied by (\ref{eq: norm w - w pop}), and with (\ref{bound alpha}), we arrive at the result in \eqref{eq:excess_risk_ell2_norm}.

\end{proof}

\bigskip

The next step is to link the excess logistic risk to $\|\bw- \wpop\|_{\bSigma}^2$. 
Since $\wpop$ minimises  $\mc L(\cdot)$, we  do so by establishing  local strong convexity of $\mc{L}(\cdot)$ around $\wpop$.  
By Lemma \ref{ESproperties}, we know  the early-stopped iterate $\bw_\tau$ satisfies $\|\bw_\tau\|\le  4\|\bw^*\|+1$. We therefore work over the ball
\begin{align}\label{eq:def_Ball_R}
\mc B=\{\bm w\in\mathbb{R}^d:\norm{\bm w}\leq R\}\,,\quad \text{where}\quad R:=4\norm{\wb}+1\,.
\end{align}
Clearly $\wb\in\mc B$ and, by Lemma \ref{lemma: w_pop}, $\wpop\in\mc B$ as well. As we  show in Lemma~\ref{lemma: local strong convexity} below, the local strong-convexity parameter of $\mc{L}(\cdot)$ is given by
\begin{align}
\label{eq: curvature}
    \alpha = \tfrac{1}{2}\ell''(3R\sqrt{\norm{\bm \Gamma}}),\quad\quad\text{where}\quad  \bm\Gamma=\operatorname{Cov}(\bm X)=\bmu\bmu^\top +\bm\Sigma.
\end{align}

\begin{lemma}[Local strong convexity of logistic risk]
\label{lemma: local strong convexity}
    For every $\bm w\in\mc B$ it holds that $\nabla^2 \mc L(\bm w)\succeq \alpha\bm \Gamma$. Consequently,
    \begin{align*}
        \mc L (\bm w)-\mc L(\wpop)\geq \frac{\alpha}{2}\norm{\bm w-\wpop}_{\bm \Gamma}^2\quad\text{for all } \bm w\in\mc B.
    \end{align*}
\end{lemma}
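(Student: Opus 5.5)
We need to prove the Hessian bound $\nabla^2\mc L(\bw)\succeq\alpha\bGamma$ on $\mc B$, and then integrate it to obtain the quadratic growth.

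The Hessian is $\nabla^2\mc L(\bw)=\E[\ell''(Y\bX^\top\bw)\bX\bX^\top]=\E[\ell''(\bX^\top\bw)\bX\bX^\top]$, since $\ell''$ is even and $Y^2=1$. Fix $\bm h\in\bbR^d$. It suffices to show
\[
\E[\ell''(\bX^\top\bw)(\bX^\top\bm h)^2]\ge\alpha\,\bm h^\top\bGamma\bm h .
\]
Write $M:=3R\sqrt{\|\bGamma\|}$. Because $\ell''$ is positive and decreasing in $|z|$, we have $\ell''(\bX^\top\bw)\ge\ell''(M)$ on the event $A=\{|\bX^\top\bw|\le M\}$. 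Hence
\[
\E[\ell''(\bX^\top\bw)(\bX^\top\bm h)^2]\ge \ell''(M)\big(\E[(\bX^\top\bm h)^2]-\E[(\bX^\top\bm h)^2\ind_{A^c}]\big),
\]
and $\E[(\bX^\top\bm h)^2]=\normg{\bm h}^2$ because $\E\bX=\bzero$ and $\operatorname{Cov}(\bX)=\bGamma$. Since $\alpha=\tfrac12\ell''(M)$, the claim reduces to showing that the tail term is at most $\tfrac12\normg{\bm h}^2$.

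To bound the tail term, apply Cauchy--Schwarz:
\[
\E[(\bX^\top\bm h)^2\ind_{A^c}]\le\sqrt{\E[(\bX^\top\bm h)^4]}\,\sqrt{\P(A^c)}.
\]
For the fourth moment, $\bX^\top\bm h$ is subgaussian with variance proxy $\normg{\bm h}^2$ by Lemma~\ref{lemma: subgauss covariates}. A direct computation is also available: $\bX^\top\bm h=\wt Y a+G$ with $a=\bmu^\top\bm h$ and $G\sim\mc N(0,s^2)$, $s^2=\norms{\bm h}^2$. This gives $\E[(\bX^\top\bm h)^4]=a^4+6a^2s^2+3s^4\le3\normg{\bm h}^4$. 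For the probability, $\normg{\bw}\le\sqrt{\|\bGamma\|}\,R$ for $\bw\in\mc B$, so \eqref{eq:subgaussian_concentration} gives $\P(A^c)\le2\exp(-M^2/(2\normg{\bw}^2))\le2e^{-9/2}$. Combining, the tail term is at most $\sqrt{3}\cdot\sqrt{2}\,e^{-9/4}\normg{\bm h}^2\approx0.26\,\normg{\bm h}^2<\tfrac12\normg{\bm h}^2$. This numerical check is the only delicate point; the factor $3$ in $M$ is exactly what makes it go through. If the constants had been tight, I would instead split the tail integral by layers, but here the margin is comfortable.

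For the consequence, $\mc B$ is convex and contains $\wpop$ (Lemma~\ref{lemma: w_pop}), so the segment from $\wpop$ to $\bw$ lies in $\mc B$. Taylor's theorem with integral remainder, the first-order condition $\nabla\mc L(\wpop)=\bzero$, and the Hessian bound along the segment give
\[
\mc L(\bw)-\mc L(\wpop)=\int_0^1(1-t)(\bw-\wpop)^\top\nabla^2\mc L(\wpop+t(\bw-\wpop))(\bw-\wpop)\,\de t\ge\tfrac{\alpha}{2}\normg{\bw-\wpop}^2 .
\]
Differentiating under the expectation is justified by dominated convergence, since $\ell''\le1/4$ and $\bX$ has finite second moments.
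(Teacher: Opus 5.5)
Your proposal is correct and follows essentially the same route as the paper. Both arguments lower bound $\ell''(\bX^\top\bw)$ by $\ell''(3R\sqrt{\|\bGamma\|})$ on $\{|\bX^\top\bw|\le 3R\sqrt{\|\bGamma\|}\}$, control the tail term via Cauchy--Schwarz with the exact fourth-moment bound $\E[(\bX^\top\bm h)^4]\le 3\normg{\bm h}^4$ and the subgaussian tail $2e^{-9/2}$, and then integrate the Hessian bound along the segment from $\wpop$ using $\nabla\mc L(\wpop)=\bzero$. The only differences are cosmetic: you use the integral form of Taylor's remainder where the paper uses the mean-value form, and you justify differentiating under the expectation, which the paper leaves implicit.
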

    
\begin{proof}
\textbf{Step 1: }    We first show that for every $\bm w\in\mc B$, $\nabla^2 \mc L(\bm w)\succeq \alpha\bm \Gamma$. Note that $\ell''(z)=e^z/(1+e^z)^2$ is even, so 
    \begin{align*}
        \nabla^2 \mc L(\bm w)=\mathbb{E}[\ell''(Y\bm X^\top \bm w)\bm X\bm X^{\top}]=\mathbb{E}[\ell''(\bm X^\top \bm w)\bm X\bm X^\top]\,.
    \end{align*}
    Hence, for any $\bm h\in\mathbb{R}^d$,
    \begin{align*}
        \bm h^\top \nabla^2\mc L(\bm w) \bm h=\mathbb{E}[\ell''(\bm X^\top\bm w)(\bm X^\top \bm h)^2].
    \end{align*}
   To show $\nabla^2 \mc L(\bm w)\succeq \alpha\bm \Gamma$, it suffices to show that
    \begin{align*}
        \mathbb{E}[\ell''(\bm X^\top\bm w)(\bm X^\top \bm h)^2]\geq \alpha\bm h^\top \bm \Gamma \bm h\,.
    \end{align*}
    Towards this objective, using the fact that $\ell''$ is even and decreasing on $[0,\infty)$, we have
    \begin{align*}
        \ell''(\bm X^\top \bm w)\geq \ell''(3R\sqrt{\norm{\bm \Gamma}})\mathbbm{1}\{|\bm X^\top \bm w|\leq 3R\sqrt{\norm{\bm \Gamma}}\},
    \end{align*}
    so it follows that
    \begin{align*}
        \mathbb{E}[\ell''(\bm X^\top\bm w)(\bm X^\top \bm h)^2]
        &\geq \ell''(3R\sqrt{\norm{\bm \Gamma}})\cdot \mathbb{E}[(\bm X^\top \bm h)^2\mathbbm{1}\{|\bm X^\top \bm w|
        \leq 3R\sqrt{\norm{\bm \Gamma}}\}] \\
        &= 2\alpha\cdot \mathbb{E}[(\bm X^\top \bm h)^2\mathbbm{1}\{|\bm X^\top \bm w|\leq 3R\sqrt{\norm{\bm \Gamma}}\}].
    \end{align*}
    It remains to show that 
    $\mathbb{E}[(\bm X^\top \bm h)^2\mathbbm{1}\{|\bm X^\top \bm w|\leq 3R\sqrt{\norm{\bm \Gamma}}\}]\ge \frac{1}{2}\bm h^\top \bm \Gamma \bm h$, or, equivalently,
    \begin{align}
    \label{eq: goal}
        \mathbb{E}[(\bm X^\top \bm h)^2\mathbbm{1}\{|\bm X^\top\bm w|> 3R\sqrt{\norm{\bm \Gamma}}\}]
        &=   \E[(\bX^\top \bm h)^2]-\mathbb{E}[(\bm X^\top \bm h)^2\mathbbm{1}\{|\bm X^\top \bm w|\leq 3R\sqrt{\norm{\bm \Gamma}}\}]\nonumber\\
        &=\bm h^\top\bGamma\bm h-\mathbb{E}[(\bm X^\top \bm h)^2\mathbbm{1}\{|\bm X^\top \bm w|\leq 3R\sqrt{\norm{\bm \Gamma}}\}]\nonumber\\
        &\leq\tfrac{1}{2}\bm h^\top \bm \Gamma \bm h. 
    \end{align}
    We do this as follows: starting with Cauchy-Schwarz,
    \begin{align}
    \label{eq: cs}
        \mathbb{E}[(\bm X^\top \bm h)^2\mathbbm{1}\{|\bm X^\top \bm w|> 3R\sqrt{\norm{\bm \Gamma}}\}]\leq\sqrt{\mathbb{E}[(\bm X^\top \bm h)^4]}\sqrt{\mathbb{P}(|\bm X^\top \bm w|> 3R\sqrt{\norm{\bm \Gamma}})}.
    \end{align}
    We bound each of the terms under the square roots. For the expectation, note that, since $\bm \varepsilon\sim \mc N(\bzero,\bSigma)$, $\wt Y^2=1$ and $\bvareps\perp \wt{Y}$, we have
    \begin{align}
    \label{eq:expectation}
        \mathbb{E}[(\bm X^\top \bm h)^4]&=\mathbb{E}[(\wt Y\bmu^\top \bm h+\bm\varepsilon^\top \bm h)^4]=(\bmu^\top \bm h)^4+6(\bmu^\top \bm h)^2(\bm h^\top \bm\Sigma\bm h)+3(\bm h^\top \bm\Sigma \bm h)^2 \nonumber\\
        &\leq 3((\bmu^\top \bm h)^2+(\bm h^\top \bm\Sigma\bm h))^2=3(\bm h^\top \bm \Gamma \bm h)^2.
    \end{align}For the probability term, using Lemma \ref{lemma: subgauss covariates} and the fact that $\norm{\bm w}\leq R$,
    \begin{align}
    \label{eq:probability}
        \mathbb{P}(|\bm X^\top \bm w|>3R\sqrt{\norm{\bm \Gamma}})\leq 2\exp\left(-\frac{9R^2{\norm{\bm \Gamma}}}{2\bm w^\top \bm \Gamma \bm w}\right)\leq 2\exp\left(-\tfrac{9}{2}\right).
    \end{align}
    Plugging (\ref{eq:expectation}) and (\ref{eq:probability}) into (\ref{eq: cs}), we arrive at
    \begin{align*}
        \mathbb{E}[(\bm X^\top \bm h)^2\mathbbm{1}\{|\bm X^\top \bm w|> 3R\sqrt{\norm{\bm \Gamma}}\}]\leq (\sqrt{3}\bm h^\top \bm\Gamma \bm h) \cdot \left(\sqrt{2}\exp\left(-\tfrac{9}{4}\right)\right)\leq \tfrac{1}{2}\bm h^\top \bm \Gamma \bm h,
    \end{align*}
    which establishes (\ref{eq: goal}) and hence for every $\bm w\in\mc B$, $\nabla^2 \mc L(\bm w)\succeq \alpha\bm \Gamma$.

  \paragraph{Step 2:}  For the excess logistic risk lower bound, using the second-order Taylor expansion of $\bm w\in \mc{B}$ around $\wpop\in \mc B$, there is some $\bm \xi\in \mc{B}$ on the line connecting $\bm w$ and $\wpop$ such that
    \begin{align*}
        \mc L(\bm w)-\mc L(\wpop)&\geq \nabla \mc L(\wpop)^\top(\bm w-\wpop)+\tfrac{1}{2}(\bm w-\wpop)^\top\nabla^2\mc L(\bm \xi)(\bm w-\wpop)\\
        &\geq \tfrac{1}{2}(\bm w-\wpop)^\top\nabla^2\mc L (\bm \xi)(\bm w-\wpop) \ge \frac{\alpha}{2}\|\bw-\wpop\|_{\bGamma}^2\,.
    \end{align*}
    where the first term vanishes by
    the first-order optimality condition   $\nabla\mc{L}(\wpop)=\bzero$ (c.f. Lemma~\ref{lemma: w_pop}).
\end{proof}

We now restate  Lemma~\ref{cor:calibration} and prove it by combining Lemmas~\ref{lemma: conditional excess risk bound}--\ref{lemma: local strong convexity} above.
\calilemma*

\begin{proof}
    Recall  $\bm \Gamma=\bmu\bmu^\top+\bm \Sigma$ so   $\bm \Gamma\succeq \bm \Sigma$. Together with Lemma \ref{lemma: local strong convexity}, we have
    \begin{align*}
        \norm{\bm w-\bm w^*_{\mc{L}}}_{\bm \Sigma}^2\leq \norm{\bm w-\bm w^*_{\mc{L}}}_{\bm \Gamma}^2\leq \frac{2}{\alpha}(\mc L(\bm w)- \mc L(\wpop)).
    \end{align*}
    Using the assumption in \eqref{eq:calibration_assumption}, this implies
    \begin{align*}
        \norm{\bm w- \wpop}_{\bm \Sigma}\leq \frac{\rho^*\norms{\wb}}{2}=\frac{\rho^*\normso{\bmu}}{2}.
    \end{align*}
    We can now apply Lemma \ref{lemma: conditional excess risk bound} to conclude:
    \begin{align*}
        \mc{E}(\bw)-\ee(\wb) \le \frac{4(1-2p)}{\sqrt{2\pi}(\rho^*)^2\normso{\bmu}}\norms{\bm w-\wpop}^2\le \frac{4(1-2p)}{\sqrt{2\pi}(\rho^*)^2\normso{\bmu}} \frac{2}{\alpha}\left(\mc L(\bm w)- \mc L(\wpop)\right).
    \end{align*}
\end{proof}

By Lemma \ref{ESproperties},   the early-stopped iterate $\bm w_{\tau}$ lies in the ball $\mc B$, so  Lemma \ref{cor:calibration} reduces the problem of   bounding  the population excess zero-one risk to   bounding   the population excess logistic risk $\mc{L}(\bw_\tau)-\mc{L}(\wpop)$, which admits the decomposition below in \eqref{eq:risk_decomposition}. 
By Lemma~\ref{ESproperties} (ii), Term I of \eqref{eq:risk_decomposition} is non-positive. Hence, it remains to control the uniform deviation between the empirical and population excess logistic risks, i.e., Term II of \eqref{eq:risk_decomposition},
and we do so using empirical process theory. 

\begin{align}
    \mc{L}(\bw_\tau)-\mc{L}(\wpop)=\underbrace{\left(\wh{\mc{L}}(\bw_\tau)-\wh{\mc{L}}(\wpop)\right)}_{\text{Term I}\le0\text{ by Lemma~\ref{ESproperties} (ii)}}+\underbrace{\left(\mc{L}(\bw_\tau) - \mc{L}(\wpop)\right) -\left(\wh{\mc{L}}(\bw_\tau)-\wh{\mc{L}}(\wpop)\right)}_{\text{Term II: controlled by empirical process theory}}\,.\label{eq:risk_decomposition}
\end{align}

To this end, a direct uniform convergence bound over the entire ball $\mc{B}$ would be too loose. To obtain  a sharper upper bound in terms of  the \emph{effective rank} of $\bm \Sigma$, we exploit the fact that $\bw_\tau$ has small excess risk, confining it to a thin shell around $\wpop$ within $\mc{B}$.   This localisation yields a smaller function class with better-controlled complexity, 
which we detail in Section~\ref{appx:localisation_concentration}.

For clarity,  we first recall in Section~\ref{appx:effective_rank} the definition of the effective rank and establish a key  spectral property of the empirical covariance of the covariates $(\bX_i)_{i=1}^n$ in terms of the effective rank.  We then present the localisation and concentrations proofs in Section \ref{appx:localisation_concentration}.  

\subsection{Effective Rank and Empirical Covariance}\label{appx:effective_rank}
We define the effective rank and establish two  spectral properties in terms of the effective rank (Lemmas~\ref{lemma: effective ranks}--\ref{lemma: E cond}) which will be used in the sequel.
\subsubsection{Effective Rank of Population Covariances}
Recall that in our Gaussian mixture model,  the population covariance of the noise variable $\bvareps$ is $\bSigma$, while that of the covariates $\bX$ is $\bGamma=\bSigma+\bmu\bmu^\top$. Both matrices induce geometries that play a role  in our analysis, so we introduce the effective rank for each and use $\bSigma$ or $\bm \Gamma$ as a superscript to avoid any confusion.    For any $\eta>0$, define the regularised matrices  $\bm \Sigma_\eta:=\bm \Sigma+\eta \bm I_d$ and $\bm \Gamma_\eta:=\bm \Gamma+\eta \bm I_d$, and  define
    \begin{align*}
    r_{\operatorname{eff}}^{\bm \Sigma}(\eta)& :=\operatorname{tr}(\bSigma \bSigma_{\eta}^{-1})=\operatorname{tr}(\bm \Sigma(\bm \Sigma+\eta \bm I_d)^{-1})=\sum_{i=1}^d\frac{\lambda_i(\bm \Sigma)}{\lambda_i(\bm \Sigma)+\eta}\,,\\
    r_{\operatorname{eff}}^{\bm \Gamma}(\eta)&:=\operatorname{tr}(\bm \Gamma \geta^{-1})=\operatorname{tr}(\bm \Gamma(\bm \Gamma+\eta \bm I_d)^{-1})=\sum_{i=1}^d\frac{\lambda_i(\bm \Gamma)}{\lambda_i(\bm \Gamma)+\eta}.
\end{align*}

Note that throughout the main text  we only work with the effective rank associated with $\bSigma $, and so we write $r_{\operatorname{eff}}(\eta)$ to refer to $r_{\operatorname{eff}}^{\bSigma}(\eta)$ for brevity. In the appendices, both $r_{\operatorname{eff}}^{\bSigma}(\eta)$ and $r_{\operatorname{eff}}^{\bGamma}(\eta)$ appear, so we retain the superscripts for clarity. 

We show in Lemma~\ref{lemma: effective ranks} that the two effective ranks differ by at most 1. 

\begin{lemma}[Effective ranks of $\bSigma$ and $\bGamma$]
\label{lemma: effective ranks}
    For any $\eta>0$,
    \begin{align*}
        r_{\operatorname{eff}}^{\bm \Sigma}(\eta)\leq r_{\operatorname{eff}}^{\bm \Gamma}(\eta)\leq r_{\operatorname{eff}}^{\bm \Sigma}(\eta)+1.
    \end{align*}
\end{lemma}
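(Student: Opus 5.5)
The plan is to treat $\bm\Gamma=\bm\Sigma+\bmu\bmu^\top$ as a rank-one positive semidefinite perturbation of $\bm\Sigma$, and to study the scalar function $g(\bm A):=\operatorname{tr}(\bm A(\bm A+\eta\bm I_d)^{-1})=d-\eta\operatorname{tr}((\bm A+\eta\bm I_d)^{-1})$. Rewriting in this form reduces both inequalities to statements about the trace of a resolvent, which behaves well under rank-one updates.

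For the lower bound, I would use the monotonicity of matrix inversion. Since $\bm\Gamma\succeq\bm\Sigma$, we have $\bm\Gamma_\eta\succeq\bm\Sigma_\eta\succ 0$, and hence $\bm\Gamma_\eta^{-1}\preceq\bm\Sigma_\eta^{-1}$. Taking traces gives $\operatorname{tr}(\bm\Gamma_\eta^{-1})\le\operatorname{tr}(\bm\Sigma_\eta^{-1})$. Substituting into $r_{\operatorname{eff}}(\eta)=d-\eta\operatorname{tr}((\cdot)_\eta^{-1})$ then yields $r_{\operatorname{eff}}^{\bm\Sigma}(\eta)\le r_{\operatorname{eff}}^{\bm\Gamma}(\eta)$. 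An alternative route is Weyl's inequality $\lambda_i(\bm\Gamma)\ge\lambda_i(\bm\Sigma)$ combined with the fact that $x\mapsto x/(x+\eta)$ is increasing.

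For the upper bound, I would apply the Sherman--Morrison formula:
\begin{align*}
\bm\Gamma_\eta^{-1}=\bm\Sigma_\eta^{-1}-\frac{\bm\Sigma_\eta^{-1}\bmu\bmu^\top\bm\Sigma_\eta^{-1}}{1+\bmu^\top\bm\Sigma_\eta^{-1}\bmu}.
\end{align*}
This gives
\begin{align*}
r_{\operatorname{eff}}^{\bm\Gamma}(\eta)-r_{\operatorname{eff}}^{\bm\Sigma}(\eta)=\eta\,\frac{\bmu^\top\bm\Sigma_\eta^{-2}\bmu}{1+\bmu^\top\bm\Sigma_\eta^{-1}\bmu}.
\end{align*}
Because $\eta\bm\Sigma_\eta^{-2}=\eta\bm\Sigma_\eta^{-1}\cdot\bm\Sigma_\eta^{-1}\preceq\bm\Sigma_\eta^{-1}$ (using $\eta\bm\Sigma_\eta^{-1}\preceq\bm I_d$ and commutativity), the numerator is at most $\bmu^\top\bm\Sigma_\eta^{-1}\bmu$. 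The ratio is therefore at most $x/(1+x)<1$ with $x=\bmu^\top\bm\Sigma_\eta^{-1}\bmu$.

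A second route, which avoids Sherman--Morrison, goes through eigenvalue interlacing for rank-one updates: $\lambda_{i+1}(\bm\Gamma)\le\lambda_i(\bm\Sigma)$. Summing the increasing function $x/(x+\eta)$ bounds all terms of $r_{\operatorname{eff}}^{\bm\Gamma}$ except the first by the corresponding terms of $r_{\operatorname{eff}}^{\bm\Sigma}$, and the remaining term is at most $1$.

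Neither step is a genuine obstacle. The only care needed is checking the matrix inequality $\eta\bm\Sigma_\eta^{-2}\preceq\bm\Sigma_\eta^{-1}$, or, if using the second route, correctly indexing the interlacing.
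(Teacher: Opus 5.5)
Your proposal is correct and follows the paper's proof: the upper bound is the same Sherman--Morrison computation, followed by the same comparison $\eta\bSigma_\eta^{-2}\preceq\bSigma_\eta^{-1}$. The lower bound differs only cosmetically, since the paper reads it off from the nonnegativity of that Sherman--Morrison difference rather than from Loewner monotonicity of matrix inversion, and your interlacing alternative is also valid.
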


\begin{proof}
    It suffices to show $0\leq r_{\operatorname{eff}}^{\bm \Gamma}(\eta)-r_{\operatorname{eff}}^{\bm \Sigma}(\eta)\leq 1$. 
  Note the  identities $\bm \Sigma\bm\Sigma_\eta^{-1}=\bm I_d-\eta\bm\Sigma_{\eta}^{-1}$ and $\bm \Gamma\geta^{-1}=\bm I_d-\eta\geta^{-1}$. Then the Sherman-Morrison formula applied to $\bSigma_{\eta}\succ 0$ gives
    \begin{align*}
        (\bm \Sigma_\eta+\bmu\bmu^\top)^{-1}=\bm \Sigma_{\eta}^{-1}-\frac{\bm \Sigma_{\eta}^{-1}\bmu\bmu^\top \bm\Sigma_\eta^{-1}}{1+\bmu^\top \bm \Sigma_{\eta}^{-1}\bmu},
    \end{align*}
    so we have  
    \begin{align*}
        r_{\operatorname{eff}}^{\bm \Gamma}(\eta)-r_{\operatorname{eff}}^{\bm \Sigma}(\eta)&=\operatorname{tr}(\bm \Gamma\geta^{-1}-\bm\Sigma\bm\Sigma_{\eta}^{-1})\\
        &=\operatorname{tr}(\bm I_d-\eta\geta^{-1}-\bm I_d+\eta\bm\Sigma_\eta^{-1})\\
        &=\eta\operatorname{tr}(\bm\Sigma_{\eta}^{-1}-(\bm \Sigma_\eta+\bmu\bmu^\top)^{-1})\\
        &=\frac{\eta\operatorname{tr}(\bm\Sigma_{\eta}^{-1}\bmu\bmu^\top \bm\Sigma_{\eta}^{-1})}{1+\bmu^\top \bm\Sigma_{\eta}^{-1} \bmu}\\
        &=\frac{\eta\bmu^\top \bm\Sigma_{\eta}^{-2}\bmu}{1+\bmu^\top \bm\Sigma_{\eta}^{-1}\bmu}.
    \end{align*}
    Since $\bm \Sigma_{\eta}^{-1}, \bm \Sigma_{\eta}^{-2}\succ 0$, we clearly have $r_{\operatorname{eff}}^{\bm \Gamma}(\eta)-r_{\operatorname{eff}}^{\bm \Sigma}(\eta)\geq 0$. Next, to show $ r_{\operatorname{eff}}^{\bm \Gamma}(\eta)-r_{\operatorname{eff}}^{\bm \Sigma}(\eta)\le 1$, it suffices to show
    \begin{align*}
        {\eta\bmu^\top \bm\Sigma_{\eta}^{-2}\bmu}\leq {\bmu^\top \bm \Sigma_{\eta}^{-1}\bmu}.
    \end{align*}
    This holds because $\bm \Sigma_{\eta}\succeq \eta \bm I_d$, so that $\bm\Sigma_{\eta}^{-2}\preceq \eta^{-1}\bm \Sigma_{\eta}^{-1} $, i.e., $\eta \bm \Sigma_{\eta}^{-2}\preceq\bm \Sigma_{\eta}^{-1}$.
\end{proof}

\subsubsection{Spectral Control of the Empirical Covariance via Effective Rank}
Now, define the empirical covariance of the covariates as
\begin{align*}\bm{\widehat{\Gamma}}:=\frac{1}{n}\sum_{i=1}^n \bm X_i\bm X_i^\top,
\end{align*}
and define the good event
\begin{align}\label{eq:def_E_cov_event}
    E_{\operatorname{cov}}:=\{\lvert\lvert{\geta^{-1/2}\widehat{\bGamma}\geta^{-1/2}}\rvert\rvert\leq 2\}\,,
\end{align}
on which   the empirical covariance is well-behaved in that it does not introduce spurious high-variance directions, and is spectrally controlled relative to the regularised population covariance $\bGamma_\eta$. 
We  show in Lemma~\ref{lemma: E cond} that the event $E_{\operatorname{cov}}$ holds with high probability, whenever  the  sample size $n$ is sufficiently large relative to $r_{\operatorname{eff}}^{\bGamma}(\eta)$. 

\begin{lemma}[Spectral control of the empirical covariance]
\label{lemma: E cond}
    There exists a universal constant $\c>0$ such that the following holds: Fix $\eta>0$. For every $\delta\in(0,1)$, if
    \begin{align*}
        n\geq \C(r_{\operatorname{eff}}^{\bm \Gamma}(\eta)+\log(1/\delta))\,,
    \end{align*}
    then we have $\mathbb{P}(E_{\operatorname{cov}})\geq 1-\delta$.
    
\end{lemma}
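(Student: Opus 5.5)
We whiten the problem by $\geta^{-1/2}$ and reduce the claim to a covariance-estimation bound for a rescaled random vector. Set $\bZ_i:=\geta^{-1/2}\bX_i$. Then $\geta^{-1/2}\widehat{\bGamma}\geta^{-1/2}=\frac1n\sum_i \bZ_i\bZ_i^\top$, and its population counterpart is $\bm M:=\E[\bZ\bZ^\top]=\geta^{-1/2}\bGamma\geta^{-1/2}$. This matrix satisfies $\|\bm M\|\le 1$ and $\operatorname{tr}(\bm M)=r^{\bGamma}_{\operatorname{eff}}(\eta)$. By the triangle inequality it suffices to show that, with probability at least $1-\delta$,
\begin{align*}
\Big\|\frac1n\sum_{i=1}^n \bZ_i\bZ_i^\top-\bm M\Big\|\le 1 .
\end{align*}

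The key structural input is that $\bZ$ is subgaussian with respect to its own covariance. By Lemma~\ref{lemma: subgauss covariates}, for any $\bv$, $\bZ^\top\bv=\bX^\top\geta^{-1/2}\bv$ is subgaussian with variance proxy $\bv^\top\geta^{-1/2}\bGamma\geta^{-1/2}\bv=\bv^\top\bm M\bv$. This follows because the MGF bound in that lemma is exactly $\exp(\tfrac{t^2}{2}\|\cdot\|_{\bGamma}^2)$. Hence $\bZ$ is a (non-centred but symmetric, since $\wt Y$ is symmetric) subgaussian vector whose $\psi_2$-norm in direction $\bv$ is bounded by a constant times $\|\bv\|_{\bm M}$.

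For such vectors, the standard covariance estimation bound in terms of effective rank applies: Koltchinskii--Lounici, or Vershynin's HDP Theorem 9.2.4 together with the matrix deviation inequality / generic chaining. It gives, with probability at least $1-\delta$,
\begin{align*}
\Big\|\frac1n\sum_i \bZ_i\bZ_i^\top-\bm M\Big\|\le \C'\|\bm M\|\left(\sqrt{\frac{\mathbf{r}(\bm M)+\log(1/\delta)}{n}}+\frac{\mathbf{r}(\bm M)+\log(1/\delta)}{n}\right),
\end{align*}
where $\mathbf{r}(\bm M)=\operatorname{tr}(\bm M)/\|\bm M\|$. Since $\|\bm M\|\le 1$, we have $\|\bm M\|\,\mathbf r(\bm M)=\operatorname{tr}(\bm M)=r^{\bGamma}_{\operatorname{eff}}(\eta)$, and $\|\bm M\|\log(1/\delta)\le\log(1/\delta)$. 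The right side is therefore at most
\begin{align*}
\C'\left(\sqrt{\tfrac{r^{\bGamma}_{\operatorname{eff}}(\eta)+\log(1/\delta)}{n}}+\tfrac{r^{\bGamma}_{\operatorname{eff}}(\eta)+\log(1/\delta)}{n}\right),
\end{align*}
where we use $\|\bm M\|\le1$ to absorb the $\|\bm M\|$ factor against the $\sqrt{\cdot}$ term (split by whether $\|\bm M\|\,\mathbf r\ge\|\bm M\|\log(1/\delta)$ if needed). Choosing $\C$ large enough that this quantity is at most $1$ whenever $n\ge\C(r^{\bGamma}_{\operatorname{eff}}(\eta)+\log(1/\delta))$ completes the argument.

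The main obstacle is justifying the effective-rank covariance bound with the correct dependence for this non-centred, non-Gaussian mixture vector, rather than the cruder $d$-dependent bound. If a citable version is not directly applicable, I would prove it by a direct chaining/PAC-Bayes argument: use the Gaussian-smoothing (variational) bound of Oliveira or Zhivotovskiy for $\sup_{\|\bv\|=1}|\frac1n\sum(\bZ_i^\top\bv)^2-\bv^\top\bm M\bv|$. Its complexity term is $\operatorname{tr}(\bm M)$ and it requires only the directional subgaussian MGF bound above, which handles the mixture structure without centring.
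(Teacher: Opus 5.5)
Your proposal is correct and follows the paper's route. Both whiten by $\geta^{-1/2}$, check subgaussianity, apply the Koltchinskii--Lounici effective-rank bound (their Theorem~9) using $\|\bm M\|\le 1$ and $\operatorname{tr}(\bm M)=r_{\operatorname{eff}}^{\bGamma}(\eta)$, and conclude by the triangle inequality with $\C$ large.

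The only difference is in the subgaussian check, and there your version is the sharper one. The paper relaxes to the isotropic proxy $\E\exp(t\bu^\top\geta^{-1/2}\bX)\le\exp(t^2\|\bu\|^2)$. You instead read off from Lemma~\ref{lemma: subgauss covariates} the proxy $\bv^\top\bm M\bv$, i.e.\ $\|\bZ^\top\bv\|_{\psi_2}\lesssim\|\bZ^\top\bv\|_{L_2}$. That is precisely the hypothesis in Koltchinskii--Lounici's definition of subgaussianity; the isotropic bound alone does not give it in directions where $\bm M$ is small.

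One correction: $\bZ$ is in fact centred, since $\E\bX=\bzero$. Your worry about non-centring is therefore unfounded, and the PAC-Bayes fallback is unnecessary.
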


\begin{proof}
    We split this proof into three steps:

    \paragraph{Step 1:} The random vector $\geta^{-1/2}\bm X\in \bbR^d$ has mean zero. In this step, we   show that it is also subgaussian with variance proxy $2$, i.e.,
    \begin{align}
    \label{geta 1/2 x subgaussian}
        \mathbb{E}[\exp(t\bm u^\top \geta^{-1/2}\bm X)]\leq \exp\left({t^2\norm{\bm u}^2}\right)\quad\text{for all } \bm u\in\mathbb{R}^d,t\in\mathbb{R}.
    \end{align}

    Fix arbitrary $\bm u\in\mathbb{R}^d$ and $t\in\mathbb{R}$ and note that 
    \begin{align*}
        \bm u^\top \geta^{-1/2}\bm X=\underbrace{\widetilde{Y} \bm u^\top \geta^{-1}\bmu}_{\text{signal term}}+\underbrace{\bm u^\top \geta^{-1/2}\bm \varepsilon}_{\text{noise term}}\,.
    \end{align*}
    The two terms are independent, so we can bound their moment generating functions (MGFs) individually.

    For the signal term, as $\widetilde{Y}$ is symmetric $\pm 1$, using $\cosh(z)\leq \exp(z^2/2)$, we have 
    \begin{align*}
        \mathbb{E}[\exp(t\widetilde{Y}\bm u^\top \geta^{-1/2}\bmu)]&=\cosh{(t \bm u^\top \geta^{-1/2} \bmu)}\leq \exp\left(\frac{t^2 (\bm u^\top \geta^{-1/2}\bmu)^2}{2}\right).
    \end{align*}
    We next bound $(\bm u^\top \geta^{-1/2}\bmu)^2$. By Cauchy-Schwarz and the fact that $\geta=\bm \Gamma+\eta \bm I_d\succeq \bm\Gamma$,
    \begin{align*}
        (\bm u^\top \geta^{-1/2}\bmu)^2\leq \norm{\bm u}^2 \bmu^\top \geta^{-1}\bmu\leq \norm{\bm u}^2 \bmu^\top \bm \Gamma^{-1}\bmu\,,
    \end{align*}
    and moreover, recall that $\bm\Gamma=\bm\Sigma+\bmu\bmu^\top$, so  by the Sherman-Morrison formula,
    \begin{align*}
        \bmu^\top \bGamma^{-1}\bmu&=\bmu^\top\left(\bm\Sigma+\bmu \bmu^\top\right)^{-1}\bmu=\bmu^\top \left(\bm\Sigma^{-1}-\frac{\bm\Sigma^{-1}\bmu\bmu^\top \bm\Sigma^{-1}}{1+\bmu^\top \bm\Sigma^{-1}\bmu}\right)\bmu=\frac{\bmu^\top \bm\Sigma^{-1}\bmu}{1+\bmu^\top \bm\Sigma^{-1}\bmu} \leq 1\,.
    \end{align*}
Thus, putting together the last three displays yields
    \begin{align}\label{eq:signal_sg_proxy}
        \mathbb{E}[\exp(t\widetilde{Y} \bm u^\top \geta^{-1/2}\bmu)]\leq \exp\left(\frac{t^2}{2}\norm{\bm u}^2\right)\,.
    \end{align}
    
    For the noise term, as $\bm \varepsilon\sim \mc{N}(\bzero,\bSigma)$, Gaussian MGF gives
    \begin{align*}
        \mathbb{E}[\exp(t\bm u^\top \geta^{-1/2}\bm\varepsilon)]=\exp\left(\frac{t^2}{2}\bm u^\top\geta^{-1/2}\bm\Sigma\geta^{-1/2}\bm u\right)\,,
    \end{align*}
    and together with the fact  $\geta\succeq \bm \Sigma$ which implies $\bm I_d\succeq \geta^{-1/2}\bm\Sigma\geta^{-1/2}$, we obtain 
    \begin{align}
        \mathbb{E}[\exp(t\bm u^\top \geta^{-1/2}\bm\varepsilon)]\le\exp\left(\frac{t^2}{2}\norm{\bm u}^2\right)\label{eq:noise_sg_proxy}\,.
    \end{align}
    
    Hence, combining \eqref{eq:signal_sg_proxy} and \eqref{eq:noise_sg_proxy} yields \eqref{geta 1/2 x subgaussian}, concluding step 1 of the proof.

    \paragraph{Step 2:} Let $\bm Z_i=\geta^{-1/2} \bm X_i$ for $1\leq i\leq n$, so their empirical covariance takes the form
    \begin{align*}
        \frac{1}{n}\sum_{i=1}^n \bm Z_i\bm Z_i^\top = \geta^{-1/2}\widehat{\bGamma}\geta^{-1/2}\,,
    \end{align*}
    and define its population covariance as
    \begin{align*}
        \bm \Upsilon=\mathbb{E}[\bm Z_i\bm Z_i^\top]=\geta^{-1/2}\bm \Gamma \geta^{-1/2}=\bGamma\geta^{-1}.
    \end{align*}
    where note $\norm{\bm \Upsilon}\leq 1$ and $\operatorname{tr}(\bm\Upsilon)=r_{\operatorname{eff}}^{\bm \Gamma}(\eta)$. 

    In this step of the proof, we will apply  \cite[Theorem 9]{koltchinskii2017concentration} to bound the spectral norm deviation of the sample covariance from the population covariance $\|\bGamma_\eta^{-1/2}\wh{\bGamma}\bGamma_\eta^{-1/2} - \bm \Upsilon\|$. To verify the conditions in the hypothesis of this theorem, note that the $\bm Z_i$'s are:
    \begin{itemize}[leftmargin=*]
        \item weakly-square integrable, i.e., $\mathbb{E}[(\bm Z_i^\top \bm u)^2]<\infty$ for all $\bm u\in\mathbb{R}^d$, which holds as $\norm{\bm \Upsilon}\leq 1$;
        \item centred and have covariance $\bm\Upsilon$ and empirical covariance $\geta^{-1/2}\widehat{\bm \Gamma}\geta^{-1/2}$;
        \item $2$-subgaussian;
        \item pregaussian, i.e., there exists a centred Gaussian random vector with the same covariance $\bm \Upsilon$. Indeed $\mc{N}(\bzero,\bm \Upsilon)$ satisfies this condition.
    \end{itemize}
    Then,  by \cite[Theorem 9]{koltchinskii2017concentration},  there exists a universal constant $\c'>0$ such that, for all $t\ge1$, with probability at least $1-e^{-t}$
    \begin{align*}
        \norm{\geta^{-1/2}\widehat{\bm \Gamma}\geta^{-1/2} - \bm\Upsilon}\leq \c' \norm{\bm \Upsilon}\max\left(\sqrt{\frac{\widetilde r(\bm \Upsilon)}{n}},\frac{\widetilde r(\bm \Upsilon)}{n},\sqrt{\frac{t}{n}},\frac{t}{n}\right),
    \end{align*}
    where, using the notation from \cite[Definition 1]{koltchinskii2017concentration}, 
    \begin{align*}
        \widetilde{r}(\bm \Upsilon)=\frac{\operatorname{tr}(\bm \Upsilon)}{\norm{\bm \Upsilon}}=\frac{r_{\operatorname{eff}}^{\bm \Gamma}(\eta)}{\norm{\bm \Upsilon}}.
    \end{align*}
    Since $\norm{\bm \Upsilon}\leq 1$, we have 
    \begin{align*}
        \norm{\geta^{-1/2}\widehat{\bm \Gamma}\geta^{-1/2} - \bm\Upsilon}\leq \c' \max\left(\sqrt{\frac{r_{\operatorname{eff}}^{\bm \Gamma}(\eta)}{n}},\frac{r_{\operatorname{eff}}^{\bm \Gamma}(\eta)}{n},\sqrt{\frac{t}{n}},\frac{t}{n}\right).
    \end{align*}
   \paragraph{Step 3:} The final step establishes  $\P(\lvert\lvert{\geta^{-1/2}\widehat{\bGamma}\geta^{-1/2}}\rvert\rvert\leq 2)\ge 1-\delta$. Combining the last display with the triangle inequality and setting $t=\log(1/\delta)$, we obtain with probability at least $1-\delta$, 
    \begin{align*}
        \norm{\geta^{-1/2}\widehat{\bm \Gamma}\geta^{-1/2}}&\leq \norm{\bm\Upsilon}+\norm{\geta^{-1/2}\widehat{\bm \Gamma}\geta^{-1/2} - \bm \Upsilon}\\
        &\leq 1+ \c' \max\left(\sqrt{\frac{r_{\operatorname{eff}}^{\bm \Gamma}(\eta)}{n}},\frac{r_{\operatorname{eff}}^{\bm \Gamma}(\eta)}{n},\sqrt{\frac{\log(1/\delta)}{n}},\frac{\log(1/\delta)}{n}\right)\,.
    \end{align*}
   Hence,   if  $n\geq \C(r_{\operatorname{eff}}^{\bm \Gamma}(\eta)+\log(1/\delta))$ for a sufficiently large universal constant $\c$, then 
    \begin{align*}
        \c' \max\left(\sqrt{\frac{r_{\operatorname{eff}}^{\bm \Gamma}(\eta)}{n}},\frac{r_{\operatorname{eff}}^{\bm \Gamma}(\eta)}{n},\sqrt{\frac{\log(1/\delta)}{n}},\frac{\log(1/\delta)}{n}\right)\leq 1,
    \end{align*}
    and this concludes the proof.
\end{proof}

\subsection{Localisation and Concentrations}\label{appx:localisation_concentration}

For brevity, we   adopt standard notation from  empirical process theory. For each $\bw\in \bbR$, define $\gw:\mathbb{R}^d\times\{\pm 1\}\rightarrow\mathbb{R}$ by
\begin{align*}
    g_{\bm w}(\bm x,y):=\ell(y\bm x^\top \bm w)-\ell(y\bm x^\top \wpop),
\end{align*}
and let $P$ and $P_n$ denote the population and empirical expectation operators, so that
\begin{align*}
    P\gw&:=\mathbb{E}[g_{\bm w}(\bm X,Y)]=\mc L(\bm w)-\mc{L}(\wpop),\\
    P_n\gw&:=\frac{1}{n}\sum_{i=1}^n \gw(\bm X_i,Y_i)=\lle(\bm w)-\lle(\wpop).
    \end{align*}
Then, 
\begin{align*}
    (P-P_n)\gw&=P\gw-P_n\gw =\left(\mc{L}(\bw)-\mc{L}(\wpop)\right) - \left(\wh{\mc{L}}(\bw) - \wh{\mc{L}}(\wpop)\right)\,
\end{align*}
is the term that we need to bound in the risk decomposition in \eqref{eq:risk_decomposition}. 
For any $r>0$, we  work over the localised function class \begin{align*}
    \mc{G}(r)=\{\gw: \bm w\in \mathcal{B}\text{ and } P\gw\leq r\}\,,
\end{align*}
which contains $g_{\wpop}$ for every $r>0$ because $Pg_{\bw_{\mc{L}}^*}=0$.  This class consists of functions indexed by parameters $\bw$ lying in a localised shell within the ball $\mc{B}$.

Notation-wise, recall the ball   $\mc{B}=\{\bw\in \bbR^d: \|\bw\|\le R\}$ with   $R=4\norm{\wb}+1$ from \eqref{eq:def_Ball_R}, and  the  local strong-convexity parameter $\alpha=\frac{1}{2}\ell''(3R\sqrt{\norm{\bm \Gamma}})$ of $\mc L(\cdot)$ over $\mathcal{B}$ from   Eq.~\eqref{eq: curvature}.

\subsubsection{Useful Properties of   $\mc{G}(r)$}

We first present some useful properties of  the function class $\mc{G}(r)$, which we will use in our proofs later in this section.

\begin{lemma}
\label{lemma: simple facts}
    If $\gw\in \mc{G}(r)$, then $\norm{\bm w-\wpop}\leq 2R$ and
    \begin{align*}
        \norm{\bm w-\wpop}^2_{\bm\Gamma}\leq \frac{2r}{\alpha}.
    \end{align*}
\end{lemma}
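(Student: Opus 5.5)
Both claims follow directly from the definition of $\mc G(r)$ together with earlier results, so the plan is a short two-part argument.

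\textbf{Euclidean bound.} Take $\gw\in\mc G(r)$. Then $\bw\in\mc B$, so $\norm{\bw}\le R$. By Lemma~\ref{lemma: w_pop}, $\wpop=\rho^*\wb$ with $\rho^*\in(0,2]$. Hence
\[
\norm{\wpop}\le 2\norm{\wb}\le 4\norm{\wb}+1=R,
\]
so $\wpop\in\mc B$, as already noted after \eqref{eq:def_Ball_R}. The triangle inequality then gives $\norm{\bw-\wpop}\le\norm{\bw}+\norm{\wpop}\le 2R$.

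\textbf{$\bGamma$-norm bound.} Since $\bw\in\mc B$, we may apply Lemma~\ref{lemma: local strong convexity}:
\[
\frac{\alpha}{2}\norm{\bw-\wpop}_{\bGamma}^2\le \mc L(\bw)-\mc L(\wpop)=P\gw\le r,
\]
where the last inequality is the defining constraint of $\mc G(r)$. Rearranging yields $\norm{\bw-\wpop}_{\bGamma}^2\le 2r/\alpha$.

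There is essentially no obstacle here. The only point to check is that Lemma~\ref{lemma: local strong convexity} needs both $\bw$ and $\wpop$ (and hence the segment between them, which is convex) to lie in $\mc B$. This is exactly what the first part establishes.
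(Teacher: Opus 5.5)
Your proposal is correct and matches the paper's proof: the Euclidean bound comes from $\bw,\wpop\in\mc B$ together with the triangle inequality, and the $\bGamma$-norm bound comes from rearranging Lemma~\ref{lemma: local strong convexity} and using $P\gw\le r$. Your explicit check that $\norm{\wpop}\le 2\norm{\wb}\le R$, so that the lemma applies, is the one detail the paper leaves implicit.
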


\begin{proof}
    The first inequality is simply by the fact that $\bm w,\wpop\in \mc B$ and the triangle inequality. The second inequality is obtained by rearranging the conclusion of Lemma \ref{lemma: local strong convexity} and using $P\gw\leq r$.
\end{proof}

\begin{lemma}
\label{lemma: gw variance}
    For every $\gw\in \mc{G}(r)$, $|\gw(\bm X,Y)|\leq |\bm X^\top (\bm w-\wpop)|$.
    As a consequence, 
    \begin{align*}
        \mathbb{E}[\gw(\bm X,Y)^2]\leq \frac{2r}{\alpha}.
    \end{align*}
\end{lemma}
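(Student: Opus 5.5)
The proof has two parts: a pointwise Lipschitz bound on $\gw$, and then a second-moment bound obtained from the localisation in Lemma~\ref{lemma: simple facts}.

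\textbf{Pointwise bound.} The logistic loss satisfies $|\ell'(z)|=1/(1+e^{z})\le 1$ for all $z$, so $\ell$ is $1$-Lipschitz on $\bbR$. Applying the mean value theorem to $\ell$ at the two arguments $yX^\top\bw$ and $y\bX^\top\wpop$ gives
\begin{align*}
|\gw(\bx,y)| = |\ell(y\bx^\top\bw)-\ell(y\bx^\top\wpop)| \le |y\bx^\top\bw - y\bx^\top\wpop| = |\bx^\top(\bw-\wpop)|\,,
\end{align*}
where the last step uses $|y|=1$. This holds for every $(\bx,y)$, so in particular for $(\bX,Y)$. It does not even need the constraint $\gw\in\mc G(r)$.

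\textbf{Second moment.} Squaring the pointwise bound and taking expectations gives
\begin{align*}
\E[\gw(\bX,Y)^2]\le \E[(\bX^\top(\bw-\wpop))^2] = (\bw-\wpop)^\top \E[\bX\bX^\top](\bw-\wpop)\,.
\end{align*}
The covariates have mean zero, since $\E[\wt Y]=0$, $\E[\bvareps]=\bzero$ and the two are independent. Hence $\E[\bX\bX^\top]=\operatorname{Cov}(\bX)=\bGamma$, and the right-hand side equals $\norm{\bw-\wpop}_{\bGamma}^2$.

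Finally, Lemma~\ref{lemma: simple facts} (which relies on the local strong convexity in Lemma~\ref{lemma: local strong convexity}) gives $\norm{\bw-\wpop}_{\bGamma}^2\le 2r/\alpha$ for every $\gw\in\mc G(r)$. This yields $\E[\gw(\bX,Y)^2]\le 2r/\alpha$.

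There is no real obstacle here. The only points to check are that $\E[\bX\bX^\top]$ is exactly $\bGamma$, which follows from the centring above, and that the localisation lemma applies because $\bw\in\mc B$ and $P\gw\le r$ by the definition of $\mc G(r)$.
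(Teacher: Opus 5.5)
Your proof is correct and follows the paper's argument. The pointwise bound comes from the $1$-Lipschitz property of $\ell$ together with $|Y|=1$, and then $\E[(\bX^\top(\bw-\wpop))^2]=\norm{\bw-\wpop}_{\bGamma}^2\le 2r/\alpha$ by local strong convexity (Lemma~\ref{lemma: local strong convexity}, as packaged in Lemma~\ref{lemma: simple facts}); your explicit check that $\E[\bX\bX^\top]=\bGamma$ is a detail the paper leaves implicit.
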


\begin{proof}
    The first inequality is a consequence of the fact that $\ell$ is $1$-Lipshitz and $Y\in\{\pm 1\}$, whereas for the second inequality we can apply Lemma \ref{lemma: local strong convexity} to obtain
    \begin{align*}
        \mathbb{E}[\gw(\bm X,Y)^2]\leq \mathbb{E}[(\bm X^\top (\bm w-\wpop))^2]=\norm{\bm w-\wpop}_{\bm \Gamma}^2\leq \frac{2 Pg_{\bw}}{\alpha}\leq \frac{2r}{\alpha}.
    \end{align*}
\end{proof}

\subsubsection{Empirical Excess Logistic Risk Concentration on $\mc{G}(r)$}

On the event $E_{\operatorname{cov}}$, defined  in~\eqref{eq:def_E_cov_event} and characterised in Lemma~\ref{lemma: E cond}, we provide the first  uniform concentration bound: it controls how much  the empirical   excess logistic risk $P_ng_{\bw}$ deviates from the population   excess logistic risk  $Pg_{\bw}$ uniformly over all  functions $g_{\bw}\in \mc{G}(r)$. 
We introduce an auxiliary parameter $\lambda>0$ that we will later optimise over.

\begin{lemma}[Concentration on $\mc{G}(r)$]
\label{lemma: uniform shell bound}
    Assume $E_{\operatorname{cov}}$ holds. Fix $r,n,\lambda>0$ and define
    \begin{align*}
        \varrho_r=\frac{2r}{\alpha}+4\eta R^2.
    \end{align*}
    Then, for any $\delta\in(0,1)$, with probability at least $1-\delta$,
    \begin{align*}
        \underset{\gw\in \mc{G}(r)}{\sup}(P-P_n)\gw\leq \frac{r_{\operatorname{eff}}^{\bm \Gamma}(\eta)}{\lambda n}+4\lambda\varrho_r+5\sqrt{\frac{\varrho_r\log(1/\delta)}{n}}
    \end{align*}
\end{lemma}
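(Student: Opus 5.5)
The plan is to bound the supremum by its expectation plus a deviation term. The expectation will come out as a Rademacher-type complexity of a linear class in $\bGamma_\eta$-geometry, and the deviation will be controlled using $E_{\operatorname{cov}}$.

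\textbf{Step 1: reduce $\mc G(r)$ to an ellipsoid.} Write $\bv=\bw-\wpop$. For $g_{\bw}\in\mc G(r)$, Lemma~\ref{lemma: simple facts} gives $\norm{\bv}_{\bGamma}^2\le 2r/\alpha$ and $\norm{\bv}^2\le 4R^2$. Hence
\begin{align*}
\norm{\bv}_{\bGamma_\eta}^2=\norm{\bv}_{\bGamma}^2+\eta\norm{\bv}^2\le \varrho_r .
\end{align*}
So every $g_{\bw}\in\mc G(r)$ has the form $g_{\bw}(\bx,y)=\ell(y\bx^\top(\wpop+\bv))-\ell(y\bx^\top\wpop)$, with $\bv$ in the ellipsoid $\mc V:=\{\bv:\norm{\bv}_{\bGamma_\eta}^2\le\varrho_r\}$. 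By Lemma~\ref{lemma: gw variance}, each such function is $1$-Lipschitz in the scalar $y\bx^\top\bv$ and vanishes at $\bv=\bzero$.

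\textbf{Step 2: bound the expected supremum.} Symmetrise and apply the Ledoux–Talagrand contraction inequality, using that $\ell$ is $1$-Lipschitz and $g$ vanishes at $\bv=\bzero$. The expected supremum is then at most a constant times
\begin{align*}
\E\sup_{\bv\in\mc V}\frac1n\sum_{i}\epsilon_i\bX_i^\top\bv\le \sqrt{\varrho_r}\,\E\Big\|\frac1n\sum_i\epsilon_i\bGamma_\eta^{-1/2}\bX_i\Big\|\le\sqrt{\varrho_r}\sqrt{\frac{\operatorname{tr}(\bGamma\bGamma_\eta^{-1})}{n}}=\sqrt{\frac{\varrho_r r_{\operatorname{eff}}^{\bGamma}(\eta)}{n}} .
\end{align*}
The first inequality is Cauchy–Schwarz in the $\bGamma_\eta$ inner product, and the second is Jensen. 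Applying AM–GM in the form $\sqrt{ab}\le \lambda a+b/(4\lambda)$ with $a=\varrho_r$ and $b=r_{\operatorname{eff}}^{\bGamma}(\eta)/n$ produces the terms $r_{\operatorname{eff}}^{\bGamma}(\eta)/(\lambda n)+4\lambda\varrho_r$, once the symmetrisation and contraction constants are absorbed.

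If conditioning on the design is needed, I would instead run Step 2 conditionally on $(\bX_i)$, with Rademacher or label randomness only. The trace would then be empirical, and on $E_{\operatorname{cov}}$ one would use the operator-norm bound together with $\operatorname{tr}$ control. This is a place where the bookkeeping may differ from the above.

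\textbf{Step 3: the deviation term (main obstacle).} This is the $5\sqrt{\varrho_r\log(1/\delta)/n}$ term. The difficulty is that the $g_{\bw}$ are unbounded, so Bousquet/Talagrand in bounded form does not apply directly. My first attempt is to view the supremum as a function of the Gaussian noise $(\bvareps_i)$ and the signs, and apply Gaussian concentration. For fixed labels, the map $(\bvareps_i)_i\mapsto\sup_{\bv\in\mc V}(P-P_n)g$ is Lipschitz, since $|\ell(a)-\ell(b)|\le|a-b|$. Its Lipschitz constant is at most
\begin{align*}
\sup_{\bv\in\mc V}\frac{1}{n}\Big(\sum_i(\bvareps_i'^\top\bv)^2\Big)^{1/2}\Big/\cdots ,
\end{align*}
which reduces to $\sqrt{\varrho_r\,\norm{\bGamma_\eta^{-1/2}\wh\bGamma\bGamma_\eta^{-1/2}}/n}\le\sqrt{2\varrho_r/n}$ on $E_{\operatorname{cov}}$. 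That bound gives $\sqrt{2\varrho_r\cdot 2\log(1/\delta)/n}$, which fits within the constant $5$.

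The subtle point is that the Lipschitz bound holds only on $E_{\operatorname{cov}}$. I would handle this by either (a) a Lipschitz extension off the event, or (b) using the variance bound $\E g^2\le 2r/\alpha\le\varrho_r$ from Lemma~\ref{lemma: gw variance} with Bernstein-type concentration for sub-Gaussian (Orlicz $\psi_2$) envelopes, via Adamczak's version of Talagrand's inequality. Option (b) would replace $E_{\operatorname{cov}}$ by sub-Gaussianity of $\bX^\top\bv$ with proxy $\norm{\bv}_{\bGamma}^2$ (Lemma~\ref{lemma: subgauss covariates}). I expect that getting exactly the stated constants, and reconciling the conditioning on $E_{\operatorname{cov}}$ with the remaining randomness, will take the most care.
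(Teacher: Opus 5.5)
Your Steps 1 and 2 are sound and essentially match the paper's Rademacher bound. The paper's inequality $\bm V^\top\bm h\le \tfrac{1}{4\lambda}\|\bGamma_\eta^{-1/2}\bm V\|^2+\lambda\|\bGamma_\eta^{1/2}\bm h\|^2$ is your Cauchy--Schwarz, Jensen and AM--GM combined into one line.

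The gap is Step 3. Because the class is unbounded, this step is the real content of the lemma, and the Gaussian-concentration argument does not go through as written.

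\textbf{The Lipschitz computation is wrong.} The function $g_{\bw}(\bx,y)=\ell(y\bx^\top\bw)-\ell(y\bx^\top\wpop)$ depends on $\bx$ through both $\bx^\top\bw$ and $\bx^\top\wpop$. Perturbing $\bvareps_i$ moves both arguments. So $|\ell(a)-\ell(b)|\le|a-b|$ only gives a constant of order $(\|\bw\|_{\bSigma}+\|\wpop\|_{\bSigma})/\sqrt n$ in whitened coordinates, with no factor $\sqrt{\varrho_r}$. To gain $\sqrt{\varrho_r}$ you must use the curvature of $\ell$:
\[
\nabla_{\bx}g_{\bw}=y\ell'(y\bx^\top\bw)\bv+y\big(\ell'(y\bx^\top\bw)-\ell'(y\bx^\top\wpop)\big)\wpop .
\]
After whitening, the second part has size up to $\tfrac14|\bx^\top\bv|\,\|\wpop\|_{\bSigma}$. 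The resulting constant is of order $(1+\|\wpop\|_{\bSigma})\sqrt{\varrho_r/n}$. This is not universal, since $\|\wpop\|_{\bSigma}=\rho^*\|\bmu\|_{\bSigma^{-1}}$. It also holds only locally on $E_{\operatorname{cov}}$, because it involves $\widehat{\bGamma}$.

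\textbf{The conditioning does not match Step 2.} Gaussian concentration with the labels held fixed centres the supremum at $\E[\,\cdot\mid\wt Y_{1:n},Y_{1:n}]$. Step 2 bounds the unconditional expectation instead. Conditionally on the labels, the summands are no longer centred at $Pg_{\bw}$, so the label fluctuation needs a separate argument that you do not supply.

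\textbf{Option (b) proves a weaker statement.} It sidesteps $E_{\operatorname{cov}}$, but the envelope $\sup_{\bv}|\bX^\top\bv|=\sqrt{\varrho_r}\|\bGamma_\eta^{-1/2}\bX\|$ is of size $\sqrt{\varrho_r\,r^{\bGamma}_{\operatorname{eff}}(\eta)}$. Adamczak's inequality therefore adds a term of order $\sqrt{\varrho_r}\big(\sqrt{r^{\bGamma}_{\operatorname{eff}}(\eta)}+\sqrt{\log n}\big)\log(1/\delta)/n$, with non-explicit constants. That term can be absorbed only under extra sample-size conditions.

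\textbf{The missing idea is truncation}, which is exactly where the paper uses $E_{\operatorname{cov}}$.
\begin{itemize}
\item Clip each $g_{\bw}$ at $m_r=\sqrt{n\varrho_r/\log(1/\delta)}$. The clipped class lies in $[-m_r,m_r]$ and has variance at most $\varrho_r$.
\item A Bousquet-type inequality (Theorem 2.1 of Bartlett, Bousquet and Mendelson) then applies to the clipped class. Its range term $m_r\log(1/\delta)/n$ equals $\sqrt{\varrho_r\log(1/\delta)/n}$, and its expectation term is your Step 2 after contraction.
\item The residual is controlled by $(|g|-m_r)_+\le g^2/(4m_r)$. In population, Lemma~\ref{lemma: gw variance} gives $\sup_{\bw}P t_{\bm w}\le\varrho_r/(4m_r)$.
\item Empirically, on $E_{\operatorname{cov}}$ and uniformly in $\bv$, $P_n t_{\bm w}\le \bv^\top\widehat{\bGamma}\bv/(4m_r)\le 2\varrho_r/(4m_r)$. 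This uses $\bv^\top\widehat{\bGamma}\bv\le\|\bGamma_\eta^{1/2}\bv\|^2\,\|\bGamma_\eta^{-1/2}\widehat{\bGamma}\bGamma_\eta^{-1/2}\|$.
\end{itemize}
All three pieces are universal multiples of $\sqrt{\varrho_r\log(1/\delta)/n}$, with no Lipschitz or conditioning argument needed.
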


\begin{proof}The proof follows a standard  truncation argument. We clip each function $g_{\bw}\in \mc{G}(r)$ at a threshold $m_r$ into a bounded part $\overline{g_{\bw}}$ and a tail residual $s_{\bw}$. The bounded part is controlled via Rademacher complexity and a Bernstein-type inequality (Steps 1-2), while the tail is bounded in Step 3.

    Introduce the clipping threshold
    \begin{align*}
        m_r=\sqrt{\frac{n\varrho_r}{\log(1/\delta)}},
    \end{align*}
    and use this to clip each function $g_{\bw}$ into $\overline\gw$ with a  tail residue $s_{\bw}$, respectively defined as
    \begin{align*}
        \overline \gw:=\max(-m_r,\min(\gw,m_r)),\quad \text{and}\quad s_{\bm w}:=\gw - \overline{\gw}\quad \text{with }|s_{\bm w}|\leq\tw:=(|\gw|-m_r)_+\ .
    \end{align*}
     Denote the corresponding  class of clipped functions  by
     \begin{align*}
         \overline{\mc{G}}(r)=\{\overline \gw:\gw\in \mc{G}(r)\}.
     \end{align*}

    \paragraph{Step 1: Bounding $\mathbb{E}\texttt{Rad}_n\overline{\mc{G}}(r)$:} 
    Recall that for a class $\mathcal{F}$ of functions $f:\mathbb{R}^d\times \{\pm 1\}\rightarrow\mathbb{R}$, its Rademacher complexity with respect to the training samples $(\bm X_i, Y_i)_{i=1}^n$ is defined as
    \begin{align*}
        \texttt{Rad}_n\mathcal{F}:=\underset{f\in\mathcal{F}}{\sup}\;\frac{1}{n}\sum_{i=1}^n \epsilon_i f(\bm X_i,Y_i),
    \end{align*}
    where $\epsilon_i$ are $\pm 1$ Rademacher random variables, 
    and let $\bm \epsilon=(\epsilon_1, \dots, \epsilon_n)$.
    
    Since the map $g\mapsto \max(-m_r,\min(g,m_r))$ is 1-Lipschitz, a standard contraction result (for example, Theorem A.6 in \cite{Bartlett_2005}) gives that, conditional on the samples, the expectations over the Rademacher variables $\bm \epsilon$ satisfy
    \begin{align*}
        \mathbb{E}_{\bm \epsilon} \texttt{Rad}_n \overline{\mc{G}}(r)\leq \mathbb{E}_{\bm\epsilon} \texttt{Rad}_n \mc{G}(r)\,.
    \end{align*} 
    Taking an expectation over all samples,
    \begin{align*}
        \mathbb{E} \texttt{Rad}_n \overline{\mc{G}}(r)\leq \mathbb{E} \texttt{Rad}_n \mc{G}(r),
    \end{align*}
    and we next upper bound the right hand side.

    To do so,   note that, since $\wpop$ does not depend on $g_{\bw}$, we have
    \begin{align*}
        \texttt{Rad}_n \mc{G}(r)&=\underset{\gw\in \mc{G}(r)}{\sup}\frac{1}{n}\sum_{i=1}^n \epsilon_i\left(\ell(Y_i\bm X_i^\top \bm w)-\ell(Y_i\bm X_i^\top \wpop)\right)\\
        &=-\frac{1}{n}\sum_{i=1}^n \epsilon_i\ell(Y_i\bm X_i^\top \wpop) +\underset{\gw\in \mc{G}(r)}{\sup}\frac{1}{n}\sum_{i=1}^n \epsilon_i\ell(Y_i\bm X_i^\top \bm w),
    \end{align*}
    where  taking expectation over $\bm\epsilon$ (and then over the samples) makes the first term vanish, resulting in
    \begin{align*}
        \mathbb{E}\texttt{Rad}_n \mc{G}(r)=\mathbb{E}\underset{\gw\in \mc{G}(r)}{\sup}\frac{1}{n}\sum_{i=1}^n \epsilon_i\ell(Y_i\bm X_i^\top \bm w).
    \end{align*}
    Now, since $\ell$ is $1$-Lipschitz, applying contraction again and using the fact that $\epsilon_i Y_i$ are still Rademacher random variables, we obtain
    \begin{align*}
        \mathbb{E}\texttt{Rad}_n \mc{G}(r)\leq \mathbb{E}\underset{\gw\in \mc{G}(r)}{\sup}\frac{1}{n}\sum_{i=1}^n \epsilon_iY_i\bm X_i^\top\bm w=\mathbb{E}\underset{\gw\in \mc{G}(r)}{\sup}\frac{1}{n}\sum_{i=1}^n \epsilon_i\bm X_i^\top \bm w.
    \end{align*}
    Next, we can subtract  $\sum_{i=1}^n\epsilon_i\bm X_i^\top \wpop$ because in expectation this is $0$, so  we arrive at
    \begin{align*}
        \mathbb{E}\texttt{Rad}_n \mc{G}(r)\leq \mathbb{E}\underset{\gw\in \mc{G}(r)}{\sup}\frac{1}{n}\sum_{i=1}^n \epsilon_i\bm X_i^\top (\bm w-\wpop)=\mathbb{E}\underset{\gw\in \mc{G}(r)}{\sup} \bm V^\top(\bm w-\wpop).
    \end{align*}
    where we have introduced $\bm V=\frac{1}{n}\sum_{i=1}^n \epsilon_i\bm X_i\in \bbR^d$.

    Next, recall $\geta=\bm \Gamma+\eta \bm I_d$ and note that for any $\bm h\in\mathbb{R}^d$ and any $\lambda>0$,
    \begin{align*}
        \bm V^\top \bm h=\bm V^\top \bm h-\lambda \norm{\geta^{1/2}\bm h}^2+\lambda\norm{\geta^{1/2}\bm h}^2\leq \underset{\bm u\in\mathbb{R}^d}{\sup}\left(\bm V^\top \bm u-\lambda\norm{\geta^{1/2}\bm u}^2\right)+\lambda\norm{\geta^{1/2}\bm h}^2,
    \end{align*}
    therefore, setting $\bm h=\bm w-\wpop$ and taking supremum over $g_{\bw}\in \mc{G}(r)$ yields
    \begin{align*}
        \underset{\gw\in \mc{G}(r)}{\sup}\bm V^\top (\bm w-\wpop)\leq \underset{\bm u\in\mathbb{R}^d}{\sup}\left(\bm V^\top \bm u-\lambda\norm{\geta^{1/2}\bm u}^2\right)+\lambda\underset{\gw\in \mc G(r)}{\sup}\norm{\geta^{1/2}(\bm w-\wpop)}^2.
    \end{align*}
    Optimising over $\bm u$ in the first supremum we get
    \begin{align*}
        \underset{\bm u\in\mathbb{R}^d}{\sup}\left(\bm V^\top \bm u-\lambda\norm{\geta^{1/2}\bmu}^2\right)=\frac{1}{4\lambda}\norm{\geta^{-1/2}\bm V}^2,
    \end{align*}
    whose expectation is
    \begin{align*}
        \frac{1}{4\lambda}\mathbb{E}\norm{\geta^{-1/2}\bm V}^2&=\frac{1}{4\lambda}\mathbb{E}[\bm V^\top \geta^{-1}\bm V]=\frac{1}{4\lambda}\mathbb{E}[\operatorname{tr}(\bm V\bm V^\top \geta^{-1})]=\frac{1}{4\lambda}\operatorname{tr}(\mathbb{E}[\bm V\bm V^\top]\geta^{-1})\\
        &\stackrel{\text{(i)}}{=}\frac{1}{4\lambda n}\operatorname{tr}(\bm \Gamma \geta^{-1})\stackrel{\text{(ii)}}{=}\frac{r_{\operatorname{eff}}^{\bm \Gamma}(\eta)}{4\lambda n}\,,
    \end{align*}
    where (i) uses $ \E[\bm V\bm V^\top]=\bGamma/n$ and (ii) applies the definition $r_{\operatorname{eff}}^{\bGamma}(\eta):=\operatorname{tr}(\bGamma\bGamma_{\eta}^{-1})$.  
    Finally, for the second supremum, using the fact that $\geta=\bm \Gamma+\eta \bm I_d$,
    \begin{align}
    \label{eq: norm bw-w}
        \lambda\underset{\gw\in \mc G(r)}{\sup}\norm{\geta^{1/2}(\bm w-\wpop)}^2&=\lambda\underset{\gw\in \mc G(r)}{\sup}(\norm{\bm w-\wpop}_{\bm \Gamma}^2+\eta \norm{\bm w-\wpop}^2)\nonumber \\
        &\leq\lambda\left(\frac{2r}{\alpha}+4\eta R^2\right)=\lambda \varrho_r,
    \end{align}
    where   the inequality holds because of the two inequalities from Lemma \ref{lemma: simple facts}. Putting everything together, we arrive at
    \begin{align}
    \label{eq: e rademacher}
        \mathbb{E}\texttt{Rad}_n\overline{\mc{G}}(r)\leq \frac{r_{\operatorname{eff}}^{\bm \Gamma}(\eta)}{4\lambda n}+\lambda \varrho_r\,.
    \end{align}

    \paragraph{Step 2: High-probability bound for $\overline{g_{\bw}}$:} Each $\overline \gw$ takes values in $[-m_r,m_r]$, an interval of  length  $2m_r$. Additionally, clipping cannot increase the variance, hence by Lemma \ref{lemma: gw variance},
    \begin{align*}
        \underset{\overline \gw\in\overline{\mc{G}}(r)}{\sup}\operatorname{Var}(\overline \gw)\leq \underset{\gw\in\mc{G}(r)}{\sup}\operatorname{Var}(\gw)\leq \underset{\gw\in \mc{G}(r)}{\sup}\mathbb{E}[(\gw)^2]\leq\frac{2r}{\alpha}\leq \varrho_r.
    \end{align*}
    We can now apply Theorem 2.1 in \cite{Bartlett_2005} (with $\mathcal{F}=\overline{\mc{G}}(r), x=\log(1/\delta),\alpha=1,a=-m_r,b=m_r, r=\varrho_r$) to get that, with probability at least $1-\delta$,
    \begin{align*}
        \underset{\overline \gw\in\overline{\mc{G}}(r)}{\sup}(P-P_n)\overline \gw\leq 4\mathbb{E} \texttt{Rad}_n\overline{\mc{G}}(r)+\sqrt{\frac{2\varrho_r\log(1/\delta)}{n}}+\frac{8}{3}m_r\frac{\log(1/\delta)}{n}.
    \end{align*}
    Using the bound (\ref{eq: e rademacher}) together with $m_r\log(1/\delta)/n=\sqrt{\varrho_r\log(1/\delta)/n}$, we have that, with probability at least $1-\delta$,
    \begin{align}
        \label{eq: clipped hp}
        \underset{\overline \gw\in\overline{\mc{G}}(r)}{\sup}(P-P_n)\overline{g_{\bw}}\leq\frac{r_{\operatorname{eff}}^{\bm \Gamma}(\eta)}{\lambda n}+4\lambda \varrho_r+(\sqrt{2}+8/3)\sqrt{\frac{\varrho_r\log(1/\delta)}{n}}.
    \end{align}

    \textbf{Step 3: Bounding the tail residual:} Starting from $(|g|-m_r)_+\leq g^2/4m_r$ valid for all $g\in\mathbb{R}$, Lemma \ref{lemma: gw variance} gives
    \begin{align}
    \label{eq: pop tail}
         \underset{\gw\in \mc{G}(r)}{\sup} P(|\gw|-m_r)_+=\underset{\gw\in \mc{G}(r)}{\sup} P\tw\leq\frac{\varrho}{4m_r}=\frac{1}{4}\sqrt{\frac{\varrho_r\log(1/\delta)}{n}}.
    \end{align}
    One the empirical side, the same inequality and Lemma \ref{lemma: gw variance} gives
    \begin{align*}
        \tw(\bm X_i,Y_i)=(|\gw(\bm X_i,Y_i)|-m_r)_+\leq \frac{\gw(\bm X_i,Y_i)^2}{4m_r}\leq \frac{(\bm w-\wpop)^\top \bm X_i\bm X_i^\top (\bm w-\wpop)}{4m_r},
    \end{align*}
    so that
    \begin{align*}
        P_n\tw\leq \frac{1}{4m_r}(\bm w-\wpop)^\top \widehat \Gamma (\bm w-\wpop).
    \end{align*}
    Now, since $(\bm w-\wpop)^\top \widehat{\bm \Gamma}(\bm w-\wpop)\leq \norm{\geta^{1/2}(\bm w-\wpop)}^2\norm{\geta^{-1/2}\widehat{\bm \Gamma}\geta^{-1/2}}$, with the assumption that the event $E_{\operatorname{cov}}$ holds and the fact that, by  (\ref{eq: norm bw-w}), $\norm{\geta^{1/2}(w-\wpop)}^2\leq \varrho_r$,
    \begin{align}
        \label{eq: emp tail}
        \underset{\gw\in \mc{G}(r)}{\sup}P_n\tw\leq \frac{2\varrho_r}{4m_r}=\frac{1}{2}\sqrt{\frac{\varrho_r\log(1/\delta)}{n}}.
    \end{align}

    \textbf{Step 4: Conclusion:} Using the decomposition $\gw=\overline \gw+s_{\bm w}$ with $|s_{\bm w}|\leq \tw$, we have
    \begin{align*}
        (P-P_n)\gw=(P-P_n)\overline\gw+Ps_{\bm w}-P_ns_{\bm w}\leq (P-P_n)\overline\gw+P\tw+P_n\tw.
    \end{align*}
    Taking a supremum over $\gw\in \mc{G}(r)$ and using (\ref{eq: clipped hp}), (\ref{eq: pop tail}), and (\ref{eq: emp tail}), we get that, with probability $1-\delta$,
    \begin{align*}
        \underset{\gw\in\mc{G}(r)}{\sup}(P-P_n)\gw\leq \frac{r_{\operatorname{eff}}^{\Gamma}(\eta)}{\lambda n}+4\lambda\varrho_r+(\sqrt{2}+35/12+1/2)\sqrt{\frac{\varrho_r\log(1/\delta)}{n}}.
    \end{align*}
    The conclusion follows by noting that $\sqrt{2}+35/12+1/2<5$.
\end{proof}

\bigskip

We now can pick $\lambda$ conveniently in Lemma \ref{lemma: uniform shell bound} and process it further to have only $n^{-1}$ rates:

\begin{cor}
\label{cor: uniform shell bound}
    Assume $E_{\operatorname{cov}}$ holds. Fix any confidence $\delta>0$ and any $r,\eta>0$.
    Then, with probability $1-\delta$,
    \begin{align*}
        \underset{\gw\in \mc{G}(r)}{\sup}(P-P_n)\gw\leq \frac{128 (r_{\operatorname{eff}}^{\bm \Sigma}(\eta)+1)}{\alpha n}+\frac{1}{8}r+\frac{3}{16}\alpha\eta R^2+600\frac{\log(1/\delta)}{\alpha n}.
    \end{align*}
\end{cor}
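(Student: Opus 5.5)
We start from the bound of Lemma~\ref{lemma: uniform shell bound}, which on $E_{\operatorname{cov}}$ gives with probability at least $1-\delta$
\begin{align*}
\underset{\gw\in \mc{G}(r)}{\sup}(P-P_n)\gw\leq \frac{r_{\operatorname{eff}}^{\bm \Gamma}(\eta)}{\lambda n}+4\lambda\varrho_r+5\sqrt{\frac{\varrho_r\log(1/\delta)}{n}},\qquad \varrho_r=\frac{2r}{\alpha}+4\eta R^2 .
\end{align*}
The whole argument consists of three elementary manipulations of this display. We pick $\lambda$ proportional to $\alpha$, split the square-root term with AM--GM, and replace $r_{\operatorname{eff}}^{\bm\Gamma}$ by $r_{\operatorname{eff}}^{\bm\Sigma}+1$ using Lemma~\ref{lemma: effective ranks}. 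The free parameter $\lambda$ is deterministic and may be chosen before invoking the lemma, so the probability statement is unaffected.

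\textbf{Choice of $\lambda$.} Substituting $\varrho_r$ gives $4\lambda\varrho_r = 8\lambda r/\alpha + 16\lambda\eta R^2$. To make the $r$-coefficient a small constant, we take $\lambda=c\alpha$ for a small absolute $c$; then $8\lambda r/\alpha=8cr$ and $16\lambda\eta R^2=16c\alpha\eta R^2$, while the first term becomes $r_{\operatorname{eff}}^{\bm\Gamma}(\eta)/(c\alpha n)$. A natural choice is $c=1/128$. This yields $r/16$, the term $\alpha\eta R^2/8$, and $128\,r_{\operatorname{eff}}^{\bm\Gamma}(\eta)/(\alpha n)$, which is exactly the constant $128$ in the target.

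\textbf{Square-root term.} We use $5\sqrt{ab}\le \epsilon a + \tfrac{25}{4\epsilon} b$ with $a=\varrho_r$ and $b=\log(1/\delta)/n$, choosing $\epsilon$ proportional to $\alpha$, say $\epsilon=\alpha/64$. Then
\begin{align*}
\epsilon\varrho_r=\frac{r}{32}+\frac{\alpha\eta R^2}{16},\qquad \frac{25}{4\epsilon}\cdot\frac{\log(1/\delta)}{n}=\frac{400\log(1/\delta)}{\alpha n}\le \frac{600\log(1/\delta)}{\alpha n}.
\end{align*}
Summing the contributions gives an $r$-coefficient of $1/16+1/32\le 1/8$ and an $\alpha\eta R^2$ coefficient of $1/8+1/16=3/16$, matching the statement.

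\textbf{Effective rank.} Finally, Lemma~\ref{lemma: effective ranks} gives $r_{\operatorname{eff}}^{\bm\Gamma}(\eta)\le r_{\operatorname{eff}}^{\bm\Sigma}(\eta)+1$, which completes the bound. There is no real obstacle; the only care needed is tracking the constants so they fit the stated ones, which the choices above do.
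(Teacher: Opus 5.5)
Your proof is correct and follows the paper's route: the same choice $\lambda=\alpha/128$, AM--GM on the square-root term, and $r_{\operatorname{eff}}^{\bm\Gamma}(\eta)\le r_{\operatorname{eff}}^{\bm\Sigma}(\eta)+1$. The only difference is that you apply a single weighted AM--GM directly to $\sqrt{\varrho_r\log(1/\delta)/n}$, whereas the paper first splits it via $\sqrt{a+b}\le\sqrt{a}+\sqrt{b}$ and applies AM--GM to each piece; your version even gives slightly smaller constants ($3r/32$ and $400$ in place of $r/8$ and $600$).
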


\begin{proof}
    First, take $\lambda=\alpha/128$ in Lemma \ref{lemma: uniform shell bound} and then use $\sqrt{a+b}\leq \sqrt{a}+\sqrt{b}$ to arrive at
    \begin{align}
    \label{ineq: cor partial upper}
        \underset{\gw\in \mc{G}(r)}{\sup}(P-P_n)\gw\leq \frac{128 r_{\operatorname{eff}}^{\bm \Gamma}(\eta)}{\alpha n}+\frac{1}{16}r+\frac{1}{8}\alpha\eta R^2+\sqrt{\frac{50r\log(1/\delta)}{\alpha n}}+\sqrt{\frac{100\eta R^2\log(1/\delta)}{n}},
    \end{align}
    Next, applying the inequality $\sqrt{ab}\leq a/16+4b$ to each of the square root terms we have
    \begin{align*}
        \sqrt{\frac{50r\log(1/\delta)}{\alpha n}}&\leq \frac{1}{16}r+200\frac{\log(1/\delta)}{\alpha n},\\
        \sqrt{\frac{100\eta R^2\log(1/\delta)}{n}}&=\sqrt{\frac{100\alpha\eta R^2\log(1/\delta)}{\alpha n}}\leq \frac{1}{16}\alpha\eta R^2+400\frac{\log(1/\delta)}{\alpha n}.
    \end{align*}
    Substituting these into (\ref{ineq: cor partial upper}), we arrive at
    \begin{align*}
        \underset{\gw\in \mc{G}(r)}{\sup}(P-P_n)\gw\leq \frac{128 r_{\operatorname{eff}}^{\bm\Gamma}(\eta)}{\alpha n}+\frac{1}{8}r+\frac{3}{16}\alpha\eta R^2+600\frac{\log(1/\delta)}{\alpha n},
    \end{align*}
    and the conclusion follows by using $r_{\operatorname{eff}}^{\bm \Gamma}\leq r_{\operatorname{eff}}^{\bm \Sigma}+1$ (c.f.\ Lemma \ref{lemma: effective ranks}).

\end{proof}

\bigskip

Now, having proved empirical logistic risks deviation bounds on the shell $\mc{G}(r)$, we next extend them to the whole of $\mc B=\{\bm w:\norm{\bm w}\leq R\}$ using a peeling argument over $r$.

\subsubsection{Empirical Excess Logistic Risk Concentration on $\mc B$}\label{sec:empirical_excess_risk_deviation}

\begin{lemma}[Concentration on $\mc{B}$]
\label{lemma: final concentration bound}
    Fix $\delta>0$ and $\eta>0$ and assume that $n\geq \c_0(r_{\operatorname{eff}}^{\bm \Sigma}(\eta)+\log(1/\delta))$ for some universal constant $\c_0>0$. With probability at least $1-2\delta$, for any $\bm w\in \mc B$:
    \begin{align*}
        (P-P_n)\gw\leq\frac{1}{4}P\gw+\c_1'\left(\frac{r_{\operatorname{eff}}^{\bm \Sigma}(\eta)+\log(\pi^2/(6\delta))+\log\log(n)+\c_2'}{n\alpha}+\alpha\eta R^2\right),
    \end{align*}
    where $\c_1'$ and $\c_2'$ have the following form:
    \begin{align*}
        \c_1' &= \frac{21251}{12},\\
        \c_2'&=1+\log\left(1+(\log_2((\log2+R\sqrt{\norm{\bm\Gamma}}) \alpha))_+ + \frac{1}{\log 2}\right).
    \end{align*}
\end{lemma}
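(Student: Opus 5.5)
We will prove the statement by peeling over the excess-risk level $r$, applying Corollary~\ref{cor: uniform shell bound} on each shell and taking a union bound. First, by Lemma~\ref{lemma: E cond} combined with Lemma~\ref{lemma: effective ranks} ($r_{\operatorname{eff}}^{\bGamma}\le r_{\operatorname{eff}}^{\bSigma}+1$), the assumption $n\ge \c_0(r_{\operatorname{eff}}^{\bSigma}(\eta)+\log(1/\delta))$ with $\c_0$ large enough guarantees $\P(E_{\operatorname{cov}})\ge 1-\delta$. This accounts for one of the two $\delta$'s in the failure probability $2\delta$.

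Next we bound the range of $P\gw$ over $\mc B$. Since $\ell$ is 1-Lipschitz, Lemma~\ref{lemma: gw variance} gives
\[
P\gw\le \E|\bX^\top(\bw-\wpop)|\le 2R\sqrt{\norm{\bGamma}}=:r_{\max}.
\]
Set a floor level $r_0:=1/(n\alpha)$, a geometric grid $r_k:=2^k r_0$ for $k=0,1,\dots,K$, and let $K$ be the smallest integer with $r_K\ge r_{\max}$. Then
\[
K\lesssim \log_2\bigl(n\alpha R\sqrt{\norm{\bGamma}}\bigr)+1 .
\]
On $E_{\operatorname{cov}}$, apply Corollary~\ref{cor: uniform shell bound} with $r=r_k$ and confidence $\delta_k:=6\delta/(\pi^2(k+1)^2)$. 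Since $\sum_k\delta_k\le\delta$, with probability at least $1-\delta$ all $K+1$ bounds hold simultaneously. Each bound contains the term $\log(1/\delta_k)=\log(\pi^2/(6\delta))+2\log(k+1)$. Because $k\le K$, the quantity $2\log(K+1)$ is of order $\log\log n+\log\bigl(1+(\log_2((\log 2+R\sqrt{\norm{\bGamma}})\alpha))_++1/\log 2\bigr)$. This is precisely how the constant $\c_2'$ and the $\log\log(n)$ term arise. The only care needed here is the bookkeeping of $\log(K+1)$, separating the $\log n$ dependence from the problem-dependent constant.

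Finally, fix $\bw\in\mc B$.
\begin{itemize}
\item If $P\gw\le r_0$, then $\gw\in\mc G(r_0)$, and the $k=0$ bound gives $(P-P_n)\gw\le r_0/8+(\text{rest})$, where $r_0/8=1/(8n\alpha)$ is absorbed into the $1/(n\alpha)$ term.
\item Otherwise, pick $k\ge 1$ with $r_{k-1}<P\gw\le r_k$. Then $\gw\in\mc G(r_k)$, and the corresponding bound gives $(P-P_n)\gw\le r_k/8+(\text{rest})$. Since $r_k\le 2P\gw$, this yields $(P-P_n)\gw\le \tfrac14 P\gw+(\text{rest})$.
\end{itemize}
In both cases the ``rest'' is
\[
\frac{128(r_{\operatorname{eff}}^{\bSigma}(\eta)+1)}{\alpha n}+\frac{3}{16}\alpha\eta R^2+600\,\frac{\log(1/\delta_k)}{\alpha n}.
\]
Substituting the bound on $\log(1/\delta_k)$ and collecting all numerical constants into $\c_1'$ gives the claim. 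This holds on the intersection of $E_{\operatorname{cov}}$ and the union-bound event, which has probability at least $1-2\delta$.
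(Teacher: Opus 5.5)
Your proof is correct and is essentially the paper's peeling argument: the same weights $\delta_k=6\delta/(\pi^2(k+1)^2)$, the same application of Corollary~\ref{cor: uniform shell bound} on $E_{\operatorname{cov}}$, the same $r_k\le 2P\gw$ step, and the same $\log(k+1)\lesssim\log\log n$ bookkeeping driven by the floor $1/(n\alpha)$. The only difference is organisational: you peel upward over the finite grid $2^k/(n\alpha)$, so small $P\gw$ is absorbed directly by the $k=0$ shell, whereas the paper peels downward from $M_{\bGamma}=\log 2+R\sqrt{\norm{\bGamma}}$ over infinitely many shells and therefore needs the extra case split at $\Xi(\eta)$ with a minimal-index argument.

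One small correction to your bookkeeping. Your range bound $2R\sqrt{\norm{\bGamma}}$ and the ceiling in $K$ give $2+(\log_2(2\alpha R\sqrt{\norm{\bGamma}}))_+ +1/\log 2$ inside the logarithm, not the stated expression. The stated $(\c_1',\c_2')$ still hold as written, because your argument yields the prefactor $1200<21251/12$, and that slack covers the larger log constant.
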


\begin{proof}
    Given the assumption on the sample size $n$ and Lemma \ref{lemma: E cond}, we will work on the event $E_{\operatorname{cov}}$, which holds with probability at least $1-\delta$. Thus, by a Union Bound, we need show that the upper-bound on $(P-P_n)\gw$ form the Lemma holds with probability $1-\delta$.

    \textbf{Step 1: Bounding $P\gw$ for $\bm w\in\mc B$:} First of all, note that if $\bm w\in\mc B$, i.e. $\norm{\bm w}\leq R$, then
    \begin{align*}
        \norm{\bm w}_{\Gamma}\leq \sqrt{\norm{\bm w}\norm{\bm \Gamma}}\leq R\sqrt{\norm{\bm \Gamma}}:=R_{\bm \Gamma}.
    \end{align*}
    
    Let $M_{\bm \Gamma}=\log2+R_{\bm \Gamma}$ and note that $M_{\bm \Gamma}\geq \underset{\bm w\in\mc B}{\sup}P\gw=\underset{\bm w\in\mc B}{\sup}(\mc L(\bm w)-\mc L(\wpop))$. Indeed, using $0\leq \ell(z)\leq |z|+\log 2$ and Jensen's inequality,
    \begin{align*}
        \underset{\bm w\in\mc B}{\sup}(\mc L(\bm w)-\mc L(\wpop))&\leq\underset{\bm w\in\mc B}{\sup} \mc L(\bm w)=\underset{\bm w\in\mc B}{\sup} \mathbb{E}[\ell(Y\bm X^\top \bm w)]\leq \log 2+ \underset{\bm w\in\mc B}{\sup}\mathbb{E}[|Y\bm X^\top \bm w|]\\
        &\leq \log 2\!+\!\underset{\bm w\in\mc B}{\sup}\sqrt{\mathbb{E}[(\bm X^\top \bm w)^2]}
        =\log2\!+\!\underset{\bm w\in\mc B}{\sup}\norm{\bm w}_{\bm \Gamma}\leq \log2+R_{\bm \Gamma}=M_{\bm \Gamma}.
    \end{align*}

    \textbf{Step 2: Peeling:} Now, for $k\geq 0$, set $r_k=2^{-k}M_{\bm \Gamma}$ and $\delta_k=6\delta/(\pi^2(k+1)^2)$, so that $\sum_{k\geq 0}\delta_k=\delta$.
    Note that
    \begin{align*}
        \log(1/\delta_k)=2\log(k+1)+\log(\pi^2/(6\delta)).
    \end{align*}
    Applying Corollary \ref{cor: uniform shell bound} for all $r_k$ and $\delta_k$ (with the same $\eta$), we have, by a Union Bound, that with probability $1-\delta$, for any $k\geq 0$,
    \begin{align}
    \label{partial shell bound}
        \underset{\gw\in \mc{G}(r)}{\sup}(P-P_n)\gw\leq \frac{128 (r_{\operatorname{eff}}^{\bm \Sigma}(\eta)+1)}{\alpha n}+\frac{1}{8}r_k+\frac{3}{16}\alpha\eta R^2+600\frac{2\log(k+1)+\log(\pi^2/(6\delta))}{\alpha n}.
    \end{align}

    \textbf{Step 3: Bounding $k$:} We now claim there exists $\c_3'$, depending on $R_{\bm \Gamma}$ (through $M_{\bm \Gamma})$  and $\alpha$ only, such that
    \begin{align}
    \label{ineq: shell log(k+1)}
        \text{if }r_k=2^{-k}M_{\bm \Gamma}\geq\frac{1}{n\alpha},\quad\text{then }\log(k+1)\leq \c_3'+\log\log(n).
    \end{align}
    Indeed, in this case, $k\leq \log_2(n\alpha M_{\bm \Gamma})$, so that
    \begin{align*}
        k+1\leq 1+(\log_2(M_{\bm \Gamma} \alpha))_+ +\log_2(n).
    \end{align*}
    But, as $\log_2(n)=\log(n)/\log(2)$ and, assuming for simplicity $n\geq 3>e$ (as otherwise we can replace $n$ by $n+e$ everywhere, $\log n\geq 1$,
    \begin{align*}
        k+1\leq \left(1+(\log_2(M_{\bm \Gamma} \alpha))_+ + \frac{1}{\log 2}\right)\log(n).
    \end{align*}
    By taking $\log$ again, we arrive at (\ref{ineq: shell log(k+1)}) with 
    \begin{align*}
        \c_3'=\log\left(1+(\log_2(M_{\bm \Gamma} \alpha))_+ + \frac{1}{\log 2}\right)
    \end{align*}

\bigskip

    For the remaining of the proof, we fix $\bm w\in\mc B$, define $\c_2'=\c_3'+1$, and introduce the notation
    \begin{align*}
        \Xi(\eta):=\c_1'\left(\frac{r_{\operatorname{eff}}^{\bm \Sigma}(\eta)+\log(\pi^2/(6\delta))+2\log\log(n)+\c_2'}{n\alpha}+\alpha\eta R^2\right).
    \end{align*}
    Note that, up to constants, this is the same as $\Psi(\eta)$ in the hypothesis of Theorem \ref{thm: upper bound final}.
    
    \textbf{Step 4: The case $P\gw>\Xi(\eta)$:} Since $P\gw\leq M_{\bm \Gamma}=r_0$, there exists some $k\geq 0$ such that $r_{k+1}<P\gw\leq r_k$. In this case $\gw\in G(r_k)$, so that (\ref{partial shell bound}) applies. Additionally, note that
    \begin{align}
    \label{eq: r_k n mu}
        r_k\geq P\gw>\Xi(\eta)\geq \c_1'\frac{\c_2'}{n\alpha}\geq \frac{1}{n\alpha},
    \end{align}
    as $\c_1',\c_2'>1$. So, (\ref{ineq: shell log(k+1)}) also holds in this case. And, also using $r_k\leq 2P\gw$, we get:
    \begin{align*}
        (P-P_n)\gw &\leq \frac{128 (r_{\operatorname{eff}}^{\bm \Sigma}(\eta)+1)}{\alpha n}+\frac{1}{4}P\gw+\frac{3}{16}\alpha\eta R^2+600\frac{2\c_3'+2\log(\log(n))+\log(\pi^2/(6\delta))}{\alpha n}\\
        &\leq \frac{1}{4}P\gw+1200\left(\frac{r_{\operatorname{eff}}^{\bm \Sigma}(\eta)+1+\log(\pi^2/(6\delta))+\c_3'+\log\log(n)}{n\alpha}+\alpha\eta R^2\right)\\
        &=\frac{1}{4}P\gw+\c_1'\left(\frac{r_{\operatorname{eff}}^{\bm \Sigma}(\eta)+\log(\pi^2/(6\delta))+\log\log(n)+\c_2'}{n\alpha}+\alpha\eta R^2\right),
    \end{align*}
    where the last inequality from $\c_2'=\c_3'+1$ and $\c_1'\geq 1200$.
    
    \textbf{Step 5: The case $P\gw\leq \Xi(\eta)$:} In this case, let $k$ be the smallest index with $r_k\leq 2\Xi(\eta)$. If $k=0$, then $\log(k+1)\leq \c_3'+\log\log(n)$ clearly holds. Otherwise, by minimality, $r_{k-1}>2\Xi(\eta)$, and then $r_k=r_{k-1}/2>\Xi(\eta)$, so that again, by  (\ref{ineq: shell log(k+1)}) and (\ref{eq: r_k n mu}), $\log(k+1)\leq \c_3'+\log\log(n)$.

    Moreover, since $P\gw\leq\Xi(\eta)<r_k$, we have $\gw\in G(r_k)$ and so (\ref{partial shell bound}) applies. Using $r_k\leq 2\Xi(\eta)$:
    \begin{align*}
        (P-P_n)\gw\leq \frac{128 (r_{\operatorname{eff}}^{\bm\Sigma}(\eta)+1)}{\alpha n}+\frac{1}{4}\Xi(\eta)+\frac{3}{16}\alpha\eta R^2+600\frac{2\c_3'+2\log\log(n)+\log(\pi^2/(6\delta))}{\alpha n}.
    \end{align*}
    By the definition of $\Xi(\eta)$ and the fact that $\c_2'=\c_3'+1>1$, we have the following:
    \begin{align*}
        \frac{128(r_{\operatorname{eff}}^{\bm \Sigma}(\eta)+1)}{\alpha n}&\leq \frac{128}{\c_1'}\Xi(\eta),\\
        \frac{3}{16}\alpha\eta R^2&\leq \frac{3}{16\c_1'}\Xi(\eta),\\
        600\frac{2\c_3'+2\log\log(n)+\log(\pi^2/(6\delta))}{\alpha n}&\leq \frac{1200}{\c_1'}\Xi(\eta),
    \end{align*}
    and thus, using the value of $\c_1'$, we have
    \begin{align*}
        (P-P_n)\gw&\leq \left(\frac{1}{4}+\frac{128}{\c_1'}+\frac{3}{16\c_1'}+\frac{1200}{\c_1'}\right)\Xi(\eta)=\left(\frac{1}{4}+\frac{21251}{16\c_1'}\right)\Xi(\eta)\\
        &= \left(\frac{1}{4}+\frac{3}{4}\right)\Xi(\eta)=\Xi(\eta)\leq \frac{1}{4}P\gw+\Xi(\eta)\\
        &= \frac{1}{4}P\gw+\c_1'\left(\frac{r_{\operatorname{eff}}^{\bm \Sigma}(\eta)+\log(\pi^2/(6\delta))+\log\log(n)+\c_2'}{n\alpha}+\alpha\eta R^2\right)\,.
    \end{align*}
    This concludes the proof.
\end{proof}

We are  now ready to put everything together to prove the excess zero-one risk upper bound in Theorem~\ref{thm: upper bound final}.

\subsection{Proof of Theorem \ref{thm: upper bound final}}\label{sec:theorem_1_proof}

\upperbound*

\begin{proof}
    Fix $\eta_n>0$. Recall that $R=4\norm{\wb}+1$. Note that by Lemma \ref{ESproperties} (i) we have $\bm w_{\tau}\in \mc B$. By Lemma \ref{lemma: w_pop}, recall that $\wpop\in \mc B$. 
    By Lemma \ref{ESproperties} (ii), $P_n g_{\bm w_\tau}\leq 0$, so we observe that
    \begin{align*}
        Pg_{\bm w_\tau}=(P-P_n)g_{\bm w_\tau}+ P_ng_{\bm w_\tau}\leq (P-P_n)g_{\bm w_\tau}\,.
    \end{align*}
    Now, for $\c_1'$ and $\c_2'$ as in Lemma \ref{lemma: final concentration bound}, consider
    \begin{align*}
        \Xi(\eta_n)=\frac{\c_1'}{\alpha}\left(\frac{r_{\operatorname{eff}}^{\bm \Sigma}(\eta_n)+\log(\pi^2/(3\delta))+\log\log(n)+\c_2'}{n}+\alpha^2 R^2\eta_n\right).
    \end{align*}
    Then, by Lemma \ref{lemma: final concentration bound} (applied with $\delta/2$ instead of $\delta$), with probability at least $1-\delta$,
    \begin{align*}
        Pg_{\bm w_\tau}\leq (P-P_n)g_{\bm w_\tau}\leq \frac{1}{4}Pg_{\bm w_\tau}+\Xi(\eta_n),
    \end{align*}
    which can be rearranged into
    \begin{align}
    \label{eq: excess logistic psi}
        \mc L(\bm w_{\tau})-\mc L(\wpop)=Pg_{\bm w_\tau}\leq \frac{4}{3}\Xi(\eta_n).
    \end{align}
    Assume that the following inequality holds, which will coincide with the assumption \eqref{thm condition} for appropriate $\c_2$, $\c_3$, and $\c_4$, stated at the end of the proof: 
    \begin{align*}
        \Xi(\eta_n)\leq \frac{3}{32}\alpha(\rho^*)^2\norms{\wb}^2\ .
    \end{align*}
    Then, by (\ref{eq: excess logistic psi}) we have
    \begin{align*}
        \mc L(\bm w_{\tau})-\mc L(\wpop)\leq  \frac{\alpha}{8} (\rho^*)^2\norms{\wb}^2,
    \end{align*}
    so that we can apply Lemma \ref{cor:calibration} and (\ref{eq: excess logistic psi}) to arrive at the conclusion that, with probability $1-\delta$,
    \begin{align*}
        \ee (\bm w_{\tau})-\ee^*\leq \frac{8(1-2p)}{(\rho^*)^2\alpha\sqrt{2\pi}\norms{\wb}}(\mc L(\bm w_\tau)-\mc L(\wpop))\leq \frac{32(1-2p)}{3(\rho^*)^2\alpha\sqrt{2\pi}\norms{\wb}}\Xi(\eta_n).
    \end{align*}
    Unfolding the definition of $\Xi(\eta_n)$ and absorbing additional constants into $\c_1'$ and $\c_2'$, we have
    \begin{align*}
        \ee (\bm w_{\tau})-\ee^* \leq \c_1\frac{1-2p}{(\rho^*)^2\norms{\wb}}\left(\frac{r_{\operatorname{eff}}^{\bm \Sigma}(\eta_n)+\log(1/\delta)+\log\log(n)+\c_2}{n}+\c_3\eta_n\right).
    \end{align*}
    The last factor corresponds exactly to $\Psi(\eta_n)$, since $r_{\operatorname{eff}}(\eta_n)=r_{\operatorname{eff}}^{\bSigma}(\eta_n)$.
     The missing definitions of the constants in the Theorem statement are given by
    \begin{align*}
        \bm \Gamma &= \bmu\bmu^\top + \bm \Sigma,\\
        R&=4\norm{\wb}+1=4\norm{\bm \Sigma^{-1}\bmu}+1,\\
        \alpha &= \frac{1}{2}\ell''(3R\sqrt{\norm{\bm \Gamma}}),\\
        \c_1 &= \frac{170008}{9\alpha^2\sqrt{2\pi}},\\
        \c_2&=1+\log(\pi^2/3)+\log\left(1+(\log_2((\log2+R\sqrt{\norm{\bm\Gamma}}) \alpha))_+ + \frac{1}{\log 2}\right),\\
        \c_3&=\alpha^2 R^2,\\
        \c_4&=\frac{9\alpha^2}{170008}.
    \end{align*}
\end{proof}

\section{Proof of the Statistical Lower Bound}
\label{app: stat lower bound proof}
Recall that for the purposes of the statistical lower bound we keep the noise covariance $\bSigma$ and the noise-flip probability $p$ fixed, so that the class of Gaussian Mixtures introduced in (\ref{eq:GMM}) is effectively parametrised by $\bmu$ only. To ease the notation, we will denote denote the Bayes linear classifier by
\begin{align*}
    \wb_{\bmu}=\bm \Sigma^{-1}\bmu,
\end{align*}
so that, with a slight abuse of notation we will write $h^*_{\bmu}(\bm x)=\operatorname{sign}(\bm x^\top\wb_{\bmu})$. 

With a slight abuse of notation, we will denote by $\mathbb{P}_{\bmu}$ and $\mathbb{E}_{\bmu}$ the probability and expectation, respectively, when the underlying randomness comes from the marginal of $\bm X$ only.

Unlike in the proof Theorem \ref{thm: upper bound final}, all spectral quantities will involve the spectrum of $\bSigma$. In particular, the geometry of $\operatorname{cov}(\bm X)=\bGamma=\bSigma+\bmu\bmu^\top$ does not enter this statistical lower bound proof.

\subsection{A Triangle Inequality for  Excess Zero-One Risk}

\begin{lemma}
\label{conversion e-> d}
For any $\bmu\in\mathbb{R}^d$ with $\snr=1$ and any $h:\mathbb{R}^d\rightarrow\mathbb{R}$,
\begin{align*}
    \ee_{\bmu}(h)-\ee_{\bmu}(\wb_{\bmu})\geq \frac{(1-2p)\sqrt{2\pi}}{16}\left(\mathbb{P}_{\bmu}(h(\bm X)\neq h^*_{\bmu}(\bm X))\right)^2.
\end{align*}
\end{lemma}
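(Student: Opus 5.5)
Throughout, write $D:=\{\bx: h(\bx)\neq h^*_{\bmu}(\bx)\}$ and $q:=\mathbb{P}_{\bmu}(D)$; the claim is a standard margin-type argument. I start from the identity \eqref{eq:excess_zero_one} of Lemma~\ref{lemma: posteriors and excess}, which holds verbatim for an arbitrary classifier $h$ (the proof there only conditions on $\bX$). It gives
\begin{align*}
\ee_{\bmu}(h)-\ee_{\bmu}(\wb_{\bmu})=(1-2p)\,\mathbb{E}_{\bmu}\big[|\tanh(T^*)|\,\mathbbm{1}\{\bX\in D\}\big],\qquad T^*=\bX^\top\wb_{\bmu}.
\end{align*}
The key observation is that the weight $|\tanh(T^*)|$ is small only on a thin slab $\{|T^*|\le s\}$. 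Since $\snr=1$, $T^*\mid\wt Y\sim\mc N(\wt Y,1)$, so, as in the proof of Lemma~\ref{lemma: conditional excess risk bound}, the density of $T^*$ is at most $1/\sqrt{2\pi}$. Hence $\mathbb{P}_{\bmu}(|T^*|\le s)\le 2s/\sqrt{2\pi}$.

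Next I lower bound the expectation by restricting to the complement of the slab. On $\{|T^*|>s\}$ we have $|\tanh(T^*)|\ge \tanh(s)$, so
\begin{align*}
\mathbb{E}_{\bmu}\big[|\tanh(T^*)|\mathbbm{1}_D\big]\ge \tanh(s)\big(q-\mathbb{P}_{\bmu}(|T^*|\le s)\big)\ge \tanh(s)\Big(q-\tfrac{2s}{\sqrt{2\pi}}\Big).
\end{align*}
I choose $s=\sqrt{2\pi}\,q/4$, so that the bracket is at least $q/2$. Since $q\le 1$, we have $s\le\sqrt{2\pi}/4<1$, and concavity of $\tanh$ on $[0,1]$ gives $\tanh(s)\ge s\tanh(1)\ge s/2$ (indeed $\tanh 1\approx0.76$). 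Combining,
\begin{align*}
\ge \frac{s}{2}\cdot\frac{q}{2}=\frac{\sqrt{2\pi}}{16}q^2,
\end{align*}
and multiplying by $(1-2p)$ gives the claim.

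The only delicate point is the elementary bound $\tanh(s)\ge s/2$ on the relevant range, together with the choice of $s$ that makes the constant $\sqrt{2\pi}/16$ come out exactly. The density bound is already available from the earlier proof, so I do not expect a real obstacle. One minor care point is that $h$ maps into $\mathbb{R}$ in the statement; I would read $h(\bX)\ne h^*_{\bmu}(\bX)$ as disagreement of signs, which is how the excess-risk identity applies to it.
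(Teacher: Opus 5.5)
Your proposal is correct and matches the paper's proof step for step. The shared argument restricts the excess-risk identity to $\{|T^*|>t\}$ with $t=\tfrac{\sqrt{2\pi}}{4}\,\mathbb{P}_{\bmu}(h(\bX)\neq h^*_{\bmu}(\bX))$, bounds the slab mass by $2t/\sqrt{2\pi}$ using the density bound, and then uses $\tanh(t)\ge t/2$ on $[0,1]$. Your concavity justification of that last inequality is slightly more explicit than the paper's.
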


\begin{proof}
    Similarly to (\ref{eq:excess_zero_one}), we have
    \begin{align*}
        \ee_{\bmu}(h)-\ee_{\bmu}(\wb_{\bmu})=(1-2p)\mathbb{E}_{\bmu}\left[|\tanh(\bm X^\top\wb_{\bmu})|\mathbbm{1}\{h(\bm X)\neq h^*_{\bmu}(\bm X)\}\right].
    \end{align*}
    Consider the thresholding
    \begin{align*}
        t:=\frac{\sqrt{2\pi}}{4}\mathbb{P}_{\bmu}(h(\bm X)\neq h^*_{\bmu}(\bm X))\leq 1.
    \end{align*}
    We proceed as follows:
    \begin{align*}
        \ee_{\bmu}(h)-\ee_{\bmu}(\wb_{\bmu})&=(1-2p)\mathbb{E}_{\bmu}\left[|\tanh(\bm X^\top\wb_{\bmu})|\mathbbm{1}\{h(\bm X)\neq h^*_{\bmu}(\bm X)\}\right]\\
        &\geq(1-2p)\mathbb{E}_{\bmu}\left[|\tanh(\bm X^\top\wb_{\bmu})|\mathbbm{1}\{h(\bm X)\neq h^*_{\bmu}(\bm X)\}\mathbbm{1}\{|\bm X^\top \wb_{\bmu}|>t\}\right]\\
        &\geq (1-2p)\mathbb{E}_{\bmu}\left[\tanh(t)\mathbbm{1}\{h(\bm X)\neq h^*_{\bmu}(\bm X),|\bm X^\top \wb_{\bmu}|>t\}\right]\\
        &=(1-2p)\tanh(t)\mathbb{P}_{\bmu}(h(\bm X)\neq h^*_{\bmu}(\bm X),|\bm X^\top \wb_{\bmu}|>t),
    \end{align*}
    where the second inequality follows from the fact that $\tanh(t)$ is even and increasing on $[0,\infty)$. Further, by a Union Bound, we get
    \begin{align}
    \label{eq: excess zero one lower bound partial}
        \ee_{\bmu}(h)-\ee_{\bmu}(\wb_{\bmu})\geq (1-2p)\tanh(t)\left(\mathbb{P}_{\bmu}(h(\bm X)\neq h^*_{\bmu}(\bm X))-\mathbb{P}_{\bmu}(|\bm X^\top \wb_{\bmu}|\leq t)\right).
    \end{align}
    Now, note that $\bm X^\top \wb_{\bmu}\mid \wt Y\sim \mathcal{N}(\wt Y\snr^2,\snr^2))\equiv \mathcal{N}(\wt Y,1)$, so that the density of $\bm X^\top \wb_{\bmu}$ is bounded above by $1/\sqrt{2\pi}$ and thus, by the definition of $t$,
    \begin{align*}
        \mathbb{P}_{\bmu}(|\bm X^\top \wb_{\bmu}|\leq t)\leq \frac{2t}{\sqrt{2\pi}}=\frac{1}{2}\mathbb{P}_{\bmu}(h(\bm X)\neq h^*_{\bmu}(\bm X)).
    \end{align*}
    Plugging this back into (\ref{eq: excess zero one lower bound partial}) and using the fact that $\tanh(t)\geq t/2$ (because $0\leq t\leq 1$) and the definition~of~$t$,
    \begin{align*}
        \ee_{\bmu}(h)-\ee_{\bmu}(\wb_{\bmu})\geq \frac{(1-2p)t}{4}\mathbb{P}_{\bmu}(h(\bm X)\neq h^*_{\bmu}(\bm X))=\frac{(1-2p)\sqrt{2\pi}}{16}\left(\mathbb{P}_{\bmu}(h(\bm X)\neq h^*_{\bmu}(\bm X))\right)^2.
    \end{align*}
\end{proof}

\bigskip

\begin{lemma}\label{lemma: triangle d}
    For any $\bmu_1,\bmu_2\in\mathbb{R}^d$ with $\normso{\bmu_1}=\normso{\bmu_2}=1$ and any classifier $h:\mathbb{R}^d\rightarrow\mathbb{R}$,
    \begin{align*}
        \mathbb{P}_{\bmu_1}(h(\bm X)\neq h^*_{\bmu_1}(\bm X))+\mathbb{P}_{\bmu_2}(h(\bm X)\neq h^*_{\bmu_2}(\bm X))\geq \frac{1}{\pi\sqrt{e}}\normso{\bmu_1-\bmu_2}.
    \end{align*}
\end{lemma}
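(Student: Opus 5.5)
The plan is to pass to whitened coordinates, bound both mixture distributions from below by a single reference Gaussian, and then use the triangle inequality for disagreement probabilities under that common reference. The constant $1/(\pi\sqrt e)$ suggests exactly this: a factor $e^{-1/2}$ from a density comparison, and $1/\pi$ from the angle formula for Gaussian half-spaces.

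\textbf{Step 1 (whitening).} Set $\bZ=\bSigma^{-1/2}\bX$ and $\bv_j=\bSigma^{-1/2}\bmu_j$ for $j=1,2$. By assumption $\|\bv_j\|=\normso{\bmu_j}=1$, and $\|\bv_1-\bv_2\|=\normso{\bmu_1-\bmu_2}$. Under $\mathbb{P}_{\bmu_j}$, $\bZ$ is the equal-weight mixture of $\mc N(\pm\bv_j,\bI_d)$. Moreover $\bX^\top\wb_{\bmu_j}=\bZ^\top\bv_j$, so $h^*_{\bmu_j}(\bX)=\sign(\bZ^\top\bv_j)$. The classifier $h$ becomes a measurable function $\tilde h$ of $\bZ$, and every event in the statement is an event about $\bZ$.

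\textbf{Step 2 (common reference measure).} Let $Q=\mc N(\bzero,\bI_d)$ with density $\phi_d$. The density of the mixture under $\mathbb{P}_{\bmu_j}$ is
\begin{align*}
\tfrac12\phi_d(\bm z-\bv_j)+\tfrac12\phi_d(\bm z+\bv_j)=\phi_d(\bm z)\,e^{-\|\bv_j\|^2/2}\cosh(\bm z^\top\bv_j)\ge e^{-1/2}\phi_d(\bm z),
\end{align*}
using $\cosh\ge1$ and $\|\bv_j\|=1$. Hence $\mathbb{P}_{\bmu_j}(A)\ge e^{-1/2}Q(A)$ for every measurable event $A$ and each $j$.

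\textbf{Step 3 (triangle inequality and the angle formula).} Applying Step 2 to each term gives
\begin{align*}
\sum_{j=1}^2 \mathbb{P}_{\bmu_j}\big(\tilde h(\bZ)\ne \sign(\bZ^\top \bv_j)\big)\ge e^{-1/2}\sum_{j=1}^2 Q\big(\tilde h(\bZ)\ne \sign(\bZ^\top\bv_j)\big)\ge e^{-1/2}\,Q\big(\sign(\bZ^\top\bv_1)\ne\sign(\bZ^\top\bv_2)\big).
\end{align*}
The last inequality holds because the disagreement set of $\sign(\bZ^\top\bv_1)$ and $\sign(\bZ^\top\bv_2)$ is contained in the union of the two sets on which $\tilde h$ disagrees with each of them. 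Under an isotropic Gaussian, the probability that two half-spaces through the origin disagree equals $\theta/\pi$, where $\theta\in[0,\pi]$ is the angle between $\bv_1$ and $\bv_2$. For unit vectors, $\|\bv_1-\bv_2\|=2\sin(\theta/2)\le\theta$. Chaining these gives a lower bound of $e^{-1/2}\theta/\pi\ge\normso{\bmu_1-\bmu_2}/(\pi\sqrt e)$, which is the claim.

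None of the steps is a serious obstacle. The only points needing care are the following. First, the density lower bound in Step 2 must hold uniformly in $\bm z$, which the $\cosh$ factorisation ensures. Second, the angle formula should be justified by projecting onto $\mathrm{span}(\bv_1,\bv_2)$: the projection of $\bZ$ onto this plane is a standard Gaussian in two dimensions, so its direction is uniform on the circle. Third, $\tilde h$ is only required to be measurable; if $h$ is real-valued rather than $\{\pm1\}$-valued, the disagreement events should be read as $\{h(\bX)\neq h^*_{\bmu_j}(\bX)\}$, and the set inclusion in Step 3 still holds.
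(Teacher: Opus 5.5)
Your proposal is correct and follows essentially the same route as the paper: a change of measure to the centred Gaussian using the density-ratio bound $e^{-1/2}$ from the $\cosh$ factorisation, then a union bound that reduces to the disagreement of the two Bayes half-spaces, the formula $\theta/\pi$ for that disagreement, and finally $\theta\ge 2\sin(\theta/2)$. The only cosmetic differences are that you whiten at the outset rather than in the last step, and you justify the $\theta/\pi$ formula by rotational invariance in the plane spanned by the two directions, whereas the paper computes it from the identity $\int_0^\infty \phi(t)\Phi(at)\,dt=\tfrac14+\arctan(a)/(2\pi)$.
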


\begin{proof}
    \textbf{Step 1:} First, we will prove that we can do a change of measure to the $\bmu_1,\bmu_2$-independent $Q\equiv \mc N(\bm 0_d,\bSigma)$ in the sense that for $\bmu\in\{\bmu_1,\bmu_2\}$ (with $\snr=1$),
    \begin{align}
        \label{conversion to universal Q}
        \mathbb{P}_{\bmu}(h(\bm X)\neq h^*_{\bmu}(\bm X))\geq e^{-1/2} \underset{\bm X\sim Q}{\mathbb{P}}(h(\bm X)\neq h^*_{\bmu}(\bm X)).
    \end{align}
    To do so, note that The Cameron-Martin Theorem, together with $\normso{\bmu_1}=1$, give that the Radon-Nikodym derivative of $\mc N(\bmu,\bSigma)$ with respect to $Q$ is
    \begin{align*}
        \frac{\text{d}\mc N(\bmu,\bSigma)}{\text{d}Q}(\bm x)=\exp\left(\bm x^\top \bSigma^{-1}\bmu-\frac{1}{2}\snr^2\right)=\exp\left(\bm x^\top \wb_{\bmu}-\frac{1}{2}\right).
    \end{align*}
    Thus, since $\bm X\mid \wt Y\sim \mc N(\wt Y\bmu,\bSigma)$, taking expectation over $\widetilde{Y}=\pm 1$, we have
    \begin{align*}
        \frac{\text{d}\mathbb{P}_{\bmu}}{\text{d}Q}(\bm x )=\frac{1}{2}\exp(-1/2)\left(\exp(\bm x^\top \wb_{\bmu})+\exp(-\bm x^\top \wb_{\bmu})\right)\geq \exp(-1/2)
    \end{align*}
    which proves (\ref{conversion to universal Q}).

    \textbf{Step 2:} Now, using (\ref{conversion to universal Q}) for $\bmu=\bmu_1,\bmu_2$ and a Union Bound, we can lower bound the left-hand of the inequality in the conclusion of the Lemma as
    \begin{align*}
        \text{LHS}&\geq\frac{1}{\sqrt{e}}\Big( \underset{\bm X\sim Q}{\mathbb{P}}\big(h(\bm X)\neq h^*_{\bmu_1}(\bm X)\big)+\underset{\bm X\sim Q}{\mathbb{P}}\big(h(\bm X)\neq h^*_{\bmu_2}(\bm X)\big)\Big)\\
        &\geq \frac{1}{\sqrt{e}}\underset{\bm X\sim Q}{\mathbb{P}}\big(h^*_{\bmu_1}(\bm X)\neq h^*_{\bmu_2}(\bm X)\big)\\
        &=\frac{1}{\sqrt{e}}\underset{\bm X\sim Q}{\mathbb{P}}\big(\operatorname{sign}(\bm X^\top \wb_{\bmu_1})\neq \operatorname{sign}(\bm X^\top \wb_{\bmu_2})\big).
    \end{align*}

    \textbf{Step 3:} Thus, it remains to show that
    \begin{align}
    \label{disagreement -> distance}
        \underset{\bm X\sim Q}{\mathbb{P}}(\operatorname{sign}(\bm X^\top \wb_{\bmu_1})\neq \operatorname{sign}(\bm X^\top \wb_{\bmu_2}))\geq \frac{1}{\pi}\normso{\bmu_1-\bmu_2}.
    \end{align}
    To do so, we will work with the whitened $\bm X\sim \mc N(\bm 0_d,\bm \Sigma)$. Recall the notation $\bm X=\bSigma^{1/2}\bm Z$ so that $\bm Z\sim \mc N(\bm 0_d,\bm I_d)$, and additionally consider $\bm v_1^*:=\bSigma^{1/2}\wb_{\bmu_1}=\bSigma^{-1/2}\bmu_1$, and $\bm v_2^*:=\bSigma^{1/2}\wb_{\bmu_2}=\bSigma^{-1/2}\bmu_2$ with $\norm{\bm v_1^*}=\norm{\bm v_2^*}=1$. So, (\ref{disagreement -> distance}) reduces to
    \begin{align}
    \label{whitened disagreement -> distance}
        \mathbb{P}(\operatorname{sign}(\bm Z^\top \bm v_1^*)\neq \operatorname{sign}(\bm Z^\top \bm v_2^*))\geq \frac{\norm{\bm v_1^*-\bm v_2^*}}{\pi},
    \end{align}
    where the probability is taken over $\bm Z\sim \mc N(\bm 0_d,\bm I_d)$ and we suppress this for convenient. To prove (\ref{whitened disagreement -> distance}) , note that
    \begin{align*}
        \mathbb{P}(\operatorname{sign}(\bm Z^\top \bm v_1^*)\neq \operatorname{sign}(\bm Z^\top \bm v_2^*))=1-2\mathbb{P}(\bm Z^\top \bm v_1^*>0,\bm Z^\top \bm v_2^*>0).
    \end{align*}
    Let $\theta\in [0,\pi]$ be the angle between $\bm v_1^*$ and $\bm v_2^*$, i.e. $\bm (v_1^*)^\top \bm v_2=\cos\theta$. Then, the scalar random variables $\bm Z^\top \bm v_1^*$ and $\bm Z^\top \bm v_2^*$ are both mean $0$, variance 1, and have correlation $\cos\theta$, so we can write
    \begin{align*}
        \bm Z^\top \bm v_2^*=\bm Z^\top \bm v_1^*\cos\theta+|\sin\theta| N,
    \end{align*}
    where $N\sim \mc N(0,1)$ is independent of $\bm Z$. Then, recalling that $\phi$ and $\Phi$ are the p.d.f. and c.d.f, respectively, of $\mc N(0,1)$, we have 
    \begin{align*}
        \mathbb{P}(\bm Z^\top \bm v_1^*>0,\bm Z^\top \bm v_2^*>0)=\int_{0}^{\infty}\phi\left(t\right)\mathbb{P}\left(N>-\frac{t\cos\theta}{|\sin\theta|}\right)\textup{d}t=\int_{0}^{\infty} \phi(t)\Phi\left(\frac{t\cos\theta}{|\sin\theta|}\right)\textup{d}t.
    \end{align*}
    Note that this also is valid at $\theta\in \{0,\pi\}$: if $\theta=0$, then $\bm Z^\top \bm v_1^*=\bm Z^\top \bm v_2^*$ and the previous equality reads
    \begin{align*}
        \mathbb{P}(\bm Z^\top \bm v_1^*>0)=\int_{0}^\infty \phi(t)\text{d}t,
    \end{align*}
    whereas for $\theta=\pi$, $\bm Z^\top \bm v_1^*=-\bm Z^\top \bm v_2^*$ and
    \begin{align*}
        \mathbb{P}(\bm Z^\top \bm v_1^*>0,\bm Z^\top \bm v_2^*>0)=0=\int_{0}^\infty \phi(t)\Phi(-\infty)\text{d}t.
    \end{align*}
    So, using the identity
    \begin{align*}
        \int_{0}^\infty\phi(t)\Phi(at)\textup{d}t=\frac{1}{4}+\frac{\arctan a}{2\pi},
    \end{align*}
    valid for any $a\in\mathbb{R}$, we get
    \begin{align*}
        \mathbb{P}(\operatorname{sign}(\bm Z^\top \bm v_1^*)\neq \operatorname{sign}(\bm Z^\top \bm v_2^*))=1-2\left(\frac{1}{4}+\frac{\arctan(\cos\theta/|\sin\theta|)}{2\pi}\right)=\frac{\theta}{\pi}.\end{align*}
    We arrive at (\ref{whitened disagreement -> distance}) by using $\theta\geq 2\sin(\theta/2)$ (recall that $\theta\geq 0$) and $\norm{\bm v_1^*}=\norm{\bm v_2^*}=1$:
    \begin{align*}
        \frac{\theta}{\pi}\geq\frac{2\sin(\theta/2)}{\pi}=\frac{\sqrt{2-2\cos\theta}}{\pi}=\frac{\sqrt{2-2(\bm v_1^*)^\top \bm v_2^*}}{\pi}=\frac{\norm{\bm v_1^*-\bm v_2^*}}{\pi}.
    \end{align*}

\end{proof}

\bigskip

Putting together Lemmas \ref{conversion e-> d} and \ref{lemma: triangle d}, we arrive at:

\begin{lemma}[``Triangle inequality'' for zero-one excess risk]
\label{lemma: triangle e}
    For any $\bmu_1,\bmu_2\in\mathbb{R}^d$ with $\normso{\bmu_1}=\normso{\bmu_2}=1$ and any classifier $h:\mathbb{R}^d\rightarrow\{-1,1\}$, we have
    \begin{align*}
        \left(\ee_{\bmu_1}(h)-\ee_{\bmu_1}(\wb_{\bmu_1})\right)+\left(\ee_{\bmu_2}(h)-\ee_{\bmu_2}(\wb_{\bmu_2})\right)\geq \c_1 (1-2p)\normso{\bmu_1-\bmu_2}^2,
    \end{align*}
    where
    \begin{align*}
        \c_1=\frac{\sqrt{2}}{32\pi^{3/2}e}\,.
    \end{align*}
\end{lemma}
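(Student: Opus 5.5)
The plan is to combine Lemma~\ref{conversion e-> d} and Lemma~\ref{lemma: triangle d} directly. For $j\in\{1,2\}$ write $D_j:=\mathbb{P}_{\bmu_j}(h(\bm X)\neq h^*_{\bmu_j}(\bm X))$. Since each $\bmu_j$ has $\normso{\bmu_j}=1$, Lemma~\ref{conversion e-> d} applies to each of them separately and gives
\begin{align*}
\ee_{\bmu_j}(h)-\ee_{\bmu_j}(\wb_{\bmu_j})\geq \frac{(1-2p)\sqrt{2\pi}}{16}D_j^2 .
\end{align*}
Summing over $j=1,2$, the left-hand side of the claim is at least $\frac{(1-2p)\sqrt{2\pi}}{16}(D_1^2+D_2^2)$.

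The key step is to turn the quadratic quantity $D_1^2+D_2^2$ into the square of $D_1+D_2$, which is what Lemma~\ref{lemma: triangle d} controls. The elementary inequality $a^2+b^2\geq \tfrac12(a+b)^2$ gives $D_1^2+D_2^2\geq \tfrac12(D_1+D_2)^2$. Lemma~\ref{lemma: triangle d} gives $D_1+D_2\geq \frac{1}{\pi\sqrt e}\normso{\bmu_1-\bmu_2}$, and both sides of this are nonnegative, so squaring is legitimate. Hence
\begin{align*}
D_1^2+D_2^2\geq \frac{1}{2\pi^2 e}\normso{\bmu_1-\bmu_2}^2 .
\end{align*}

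Finally we collect constants:
\begin{align*}
\frac{\sqrt{2\pi}}{16}\cdot\frac{1}{2\pi^2 e}=\frac{\sqrt2\,\pi^{1/2}}{32\pi^2 e}=\frac{\sqrt2}{32\pi^{3/2}e},
\end{align*}
which is exactly the stated $\c_1$.

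There is no real obstacle here; the only points needing care are these:
\begin{itemize}
\item Lemma~\ref{conversion e-> d} and Lemma~\ref{lemma: triangle d} are stated for $h$ taking values in $\mathbb{R}$, so they apply in particular to $\{-1,1\}$-valued classifiers, which is the case in the present statement.
\item The normalisation $\normso{\bmu_j}=1$ is exactly the hypothesis both lemmas require.
\end{itemize}
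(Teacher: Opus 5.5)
Your proposal is correct and follows the paper's proof exactly: apply Lemma~\ref{conversion e-> d} to each centre, use $a^2+b^2\geq \tfrac12(a+b)^2$, then square the bound from Lemma~\ref{lemma: triangle d}. Your constant bookkeeping also reproduces the stated $\c_1$.
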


\begin{proof}
    Using Lemma \ref{conversion e-> d}, the inequality $a^2+b^2\geq (a+b)^2/2$, and Lemma \ref{lemma: triangle d}, we have that the left hand side $\left(\ee_{\bmu_1}(h)-\ee_{\bmu_1}(\wb_{\bmu_1})\right)+\left(\ee_{\bmu_2}(h)-\ee_{\bmu_2}(\wb_{\bmu_2})\right)$ is at least
    \begin{align*}
        &\geq \frac{(1-2p)\sqrt{2\pi}}{16}\left(\left(\mathbb{P}_{\bmu_1}(h(\bm X)\neq h^*_{\bmu_1}(\bm X))\right)^2+\left(\mathbb{P}_{\bmu_2}(h(\bm X)\neq h^*_{\bmu_2}(\bm X))\right)^2\right)\\
        &\geq \frac{(1-2p)\sqrt{2\pi}}{32}\left(\mathbb{P}_{\bmu_1}(h(\bm X)\neq h^*_{\bmu_1}(\bm X))+\mathbb{P}_{\bmu_2}(h(\bm X)\neq h^*_{\bmu_2}(\bm X))\right)^2\\
        &\geq \frac{(1-2p)\sqrt{2}}{32\pi^{3/2}e}. \normso{\bmu_1-\bmu_2}^2\ .
    \end{align*}
\end{proof}

\subsection{Information Theory Tools}

\begin{lemma}
\label{lemma: kl}
    For any $\bmu_1,\bmu_2\in\mathbb{R}^d$,
    \begin{align*}
        \operatorname{KL}(P_{\bmu_1}\parallel P_{\bmu_2})\leq \frac{1}{2}\normso{\bmu_1-\bmu_2}^2\ .
    \end{align*}
\end{lemma}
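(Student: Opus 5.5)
The argument decomposes $P_{\bmu}$ into its $\bX$-marginal and the conditional law of $Y$ given $\bX$, and applies the chain rule for KL divergence. The conditional law of $Y$ given $\bX$ is a garbling of the latent label $\wt Y$. So it is cleaner to work with the joint law of $(\wt Y,\bX,Y)$ and then marginalise out $\wt Y$. Data processing gives
\begin{align*}
\operatorname{KL}(P_{\bmu_1}\parallel P_{\bmu_2})\le \operatorname{KL}(\mathrm{Law}_{\bmu_1}(\wt Y,\bX,Y)\parallel \mathrm{Law}_{\bmu_2}(\wt Y,\bX,Y)).
\end{align*}

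Next we factorise the joint law as (law of $\wt Y$) $\times$ (law of $\bX\mid\wt Y$) $\times$ (law of $Y\mid \wt Y$). The first factor is uniform on $\{\pm1\}$ under both parameters. The third factor is the $p$-label-flip kernel, which does not depend on $\bmu$ or $\bX$. By the chain rule, only the middle factor contributes:
\begin{align*}
\operatorname{KL}\le \E_{\wt Y}\left[\operatorname{KL}\big(\mc N(\wt Y\bmu_1,\bSigma)\parallel \mc N(\wt Y\bmu_2,\bSigma)\big)\right].
\end{align*}

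The Gaussian KL with common covariance is explicit:
\begin{align*}
\tfrac12(\wt Y\bmu_1-\wt Y\bmu_2)^\top\bSigma^{-1}(\wt Y\bmu_1-\wt Y\bmu_2)=\tfrac12\normso{\bmu_1-\bmu_2}^2,
\end{align*}
because $\wt Y^2=1$. Averaging over $\wt Y$ gives exactly the claimed bound.

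There is no real obstacle. The only care needed is at the first step: the data-processing inequality must be applied to the augmented variable $(\wt Y,\bX,Y)$, rather than computing $\operatorname{KL}$ of the two-component mixtures $\tfrac12\mc N(\bmu,\bSigma)+\tfrac12\mc N(-\bmu,\bSigma)$ directly, which has no closed form. Equivalently, one can invoke joint convexity of KL over the mixture weights $\wt Y=\pm1$, pairing the components with the same sign of $\wt Y$.
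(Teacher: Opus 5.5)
Your proof is correct and follows essentially the same route as the paper. Both compute the Gaussian KL conditionally on $\wt Y$, which gives $\tfrac12\normso{\bmu_1-\bmu_2}^2$ because $\wt Y^2=1$ and the law of $\wt Y$ does not depend on $\bmu$, and both then use data processing through the $\bmu$-independent label-flip mechanism to pass to the law of $(\bX,Y)$. The only cosmetic difference is that you augment to the triple $(\wt Y,\bX,Y)$ and use the chain rule to see that the flip kernel contributes nothing, whereas the paper applies data processing directly to the Markov kernel $(\bX,\wt Y)\mapsto(\bX,Y)$.
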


\begin{proof}
    By standard KL-facts, for any $\wt y\in\{\pm 1\}$,
    \begin{align*}
        \operatorname{KL}(\mc N(\widetilde y\bmu_1,\bSigma)\parallel \mc N(\widetilde y\bmu_2,\bSigma))=\frac{1}{2}\norm{\widetilde{y}(\bmu_1-\bmu_2)}_{\bSigma^{-1}}^2=\frac{1}{2}\norm{\bmu_1-\bmu_2}_{\bSigma^{-1}}^2.
    \end{align*}
    Hence, denoting by $\widetilde{P}_{\bmu_1}$ and $\wt P_{\bmu_2}$ the laws of $(X,\widetilde{Y})$ under $\bmu_1$ and $\bmu_2$, respectively, since $\widetilde{Y}$ has the same law irrespective of $\bmu_1$ or $\bmu_2$:
    \begin{align*}
        \operatorname{KL}(\widetilde{P}_{\bmu_1}\parallel \widetilde{P}_{\bmu_2})={\mathbb{E}}_{\widetilde{Y}}\left[\operatorname{KL}(\mc N(\widetilde{Y}\bmu_1,\bSigma)\parallel \mc N(\widetilde{Y} \bmu_2,\bSigma))\right]=\frac{1}{2}\norm{\bmu_1-\bmu_2}_{\bSigma^{-1}}^2.
    \end{align*}
    Finally, when including the random classification noise $\bm\varepsilon$, the data processing inequality gives
    \begin{align*}
        \operatorname{KL}({P}_{\bmu_1}\parallel {P}_{\bmu_2})\leq\operatorname{KL}(\widetilde{P}_{\bmu_1}\parallel \widetilde{P}_{\bmu_2})\leq \frac{1}{2}\norm{\bmu_1-\bmu_2}_{\Sigma^{-1}}^2\ .
    \end{align*}
\end{proof}

\bigskip

Now, recall that $\bSigma$ is a symmetric positive semidefinite matrix and let $(\bm u_i,\lambda_i)_{i=1}^d$ be its eigendecomposition, with $\lambda_1\geq\lambda_2\ldots\geq \lambda_d>0$ and $\norm{\bm u_i}=1$. We will show that the lower bound holds with respect to $\bmu\in \mc U\subseteq\mathbb{R}^d$ defined as follows:
\begin{align*}
    \mc U=\left\{\bmu:\snr=1,\norm{\bSigma^{-1}\bmu}\leq {\sqrt{\frac{2}{\lambda_1}}}\right\}.\end{align*}
Additionally, recall that for ${\wt\eta} \in (0,\lambda_2]$ we consider the eigenvalue counting function
\begin{align*}
    N({\wt\eta})=|\{i:1\leq i\leq d, \lambda_i\geq {\wt\eta}\}|,
\end{align*}
and note that $N({\wt\eta})\geq 2$.

We are now ready to present the packing that will be used throughout our statistical lower bound.

\begin{lemma}[Dense packing of $\mc U$]
\label{lemma: packing}
    There exists a universal constant $\frac{1}{2}<\c_2<1$ such that the following holds: 
    
    Fix ${\wt\eta}\in (0,\lambda_2/2]$ and assume $N(2{\wt\eta})\geq 9$. Consider any $\kappa>0$ with $\kappa^2\leq 2{\wt\eta}/\lambda_1$. Then, there exist a subset $\mc{U}_\kappa\subset \mc{U}$ with $\log|\mc{U}_{\kappa}|\geq \c_2(N(2{\wt\eta})-1)$ and for any $\bmu_1,\bmu_2\in \mc{U}_{\kappa}$ with $\bmu_1\neq\bmu_2$,
    \begin{align*}
        \frac{\kappa}{4}\leq\norm{\bmu_1-\bmu_2}_{\bSigma^{-1}}\leq 2\kappa.
    \end{align*}
\end{lemma}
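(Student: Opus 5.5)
We build the packing in whitened coordinates. Set $\bm v:=\bSigma^{-1/2}\bmu$, so that $\normso{\bmu}=\norm{\bm v}$ and $\norm{\bSigma^{-1}\bmu}^2=\sum_i v_i^2/\lambda_i$, where $v_i=\bm u_i^\top\bm v$ in the eigenbasis of $\bSigma$. The constraints defining $\mc U$ then read $\norm{\bm v}=1$ and $\sum_i v_i^2/\lambda_i\le 2/\lambda_1$. Let $m:=N(2\wt\eta)$; since $\wt\eta\le\lambda_2/2$ we have $m\ge2$, and by assumption $m\ge 9$. All perturbations will live in $\operatorname{span}\{\bm u_2,\dots,\bm u_m\}$, on which every eigenvalue satisfies $\lambda_i\ge 2\wt\eta$.

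The candidates are
\[
\bm v_{\bm s}=\sqrt{1-\kappa^2}\,\bm u_1+\frac{\kappa}{\sqrt{m-1}}\sum_{i=2}^m s_i\bm u_i,
\]
indexed by sign vectors $\bm s$ in a subset of $\{\pm1\}^{m-1}$ to be chosen. We need $\kappa<1$; this follows from $\kappa^2\le 2\wt\eta/\lambda_1\le\lambda_2/\lambda_1\le1$, and the boundary case $\kappa=1$ can be removed by shrinking $\kappa$ by a constant factor if necessary. Each candidate satisfies $\norm{\bm v_{\bm s}}=1$. For the second constraint,
\[
\sum_i\frac{v_i^2}{\lambda_i}\le\frac{1-\kappa^2}{\lambda_1}+\frac{\kappa^2}{m-1}\sum_{i=2}^m\frac1{\lambda_i}\le\frac{1}{\lambda_1}+\frac{\kappa^2}{2\wt\eta}\le\frac{2}{\lambda_1},
\]
where the last step uses $\kappa^2\le2\wt\eta/\lambda_1$. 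Hence $\bmu_{\bm s}:=\bSigma^{1/2}\bm v_{\bm s}\in\mc U$.

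For the distances, $\norm{\bmu_{\bm s}-\bmu_{\bm s'}}_{\bSigma^{-1}}=\norm{\bm v_{\bm s}-\bm v_{\bm s'}}=\frac{2\kappa}{\sqrt{m-1}}\sqrt{d_H(\bm s,\bm s')}$, where $d_H$ is the Hamming distance. The upper bound $2\kappa$ holds automatically because $d_H\le m-1$. For the lower bound $\kappa/4$ it suffices to have $d_H\ge(m-1)/64$. We take the sign set from the Varshamov--Gilbert lemma: for $m-1\ge8$ it gives $\mc S\subset\{\pm1\}^{m-1}$ with pairwise Hamming distance at least $(m-1)/8$ and $\log|\mc S|\ge (m-1)\log 2/8$. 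This separation is far more than needed. Setting $\mc U_\kappa=\{\bmu_{\bm s}:\bm s\in\mc S\}$ gives $\log|\mc U_\kappa|\ge\c(m-1)$.

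The remaining obstacle is the constant: the statement asks for $\c_2\in(1/2,1)$, whereas plain Varshamov--Gilbert only yields $\log 2/8$. Since the required separation $d_H\ge (m-1)/64$ is weak, I would use the Gilbert--Varshamov bound with a small relative distance $\beta=1/64$, which gives $\log|\mc S|\ge (m-1)(\log 2-H(\beta))$ with $H$ the entropy in nats. Here $H(1/64)\approx 0.08$, so the exponent is about $0.61(m-1)>\tfrac12(m-1)$, and $m\ge 9$ absorbs the integer rounding. If this calculation came out too tight, I would fall back to a smaller $\c_2$; only the order $N(2\wt\eta)-1$ matters downstream in the Fano argument.
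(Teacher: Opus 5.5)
Your proposal is correct and is essentially the paper's proof. It uses the same construction $\bmu=\sqrt{\lambda_1(1-\kappa^2)}\,\bm u_1+\kappa\sum_{i=2}^{N(2\wt\eta)}v_{i-1}\sqrt{\lambda_i}\,\bm u_i$ (your $\bSigma^{1/2}\bm v_{\bm s}$), the same check of the two constraints defining $\mc U$, and the same Gilbert--Varshamov code with relative distance $1/64$, which gives the constant $(1-H_2(1/64))\log 2\approx 0.61$. Your worry about $\kappa=1$ is superfluous: the construction only needs $\kappa\le 1$, which you already have, and shrinking the given $\kappa$ would not be allowed by the statement anyway.
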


\begin{proof}
    We start with the Gilbert-Varshamov bound, which states that there exists a constant $1>\c_2>1/2$ (in fact, we can be taken as $\c_2=(1-H_2(1/64))\log2\approx 0.61$) such that, if $N(2{\wt\eta})\geq 9$, then there is a packing subset $\mathcal{P}\subset\{\pm 1\}^{N(2{\wt\eta})-1}$ with $\log|\mathcal P|\geq \c_2(N(2{\wt\eta})-1)$ and $d_H(\bm a,\bm a')\geq (N(2{\wt\eta})-1)/64$ for any $\bm a,\bm a'\in \mathcal{P}$, where $d_H$ denotes the Hamming distance. Define the set of $N(2{\wt\eta})-1$ dimensional set of unit vectors
    \begin{align*}
        \mathcal{V}=\left\{\frac{1}{\sqrt{N(2{\wt\eta})-1}}\bm a:\bm a\in\mathcal{P}\right\}.
    \end{align*}
    Then $\log|\mc V|\geq \c_2(N(2{\wt\eta})-1)$ and for any $\bm v=\bm a/\sqrt{(N(2{\wt\eta})-1)}, \bm v'=\bm a'/\sqrt{(N(2{\wt\eta})-1)}\in \Theta$ (with $\bm a,\bm a'\in \mathcal{P}$),
    \begin{align}
    \label{norm v v'}
        \norm{\bm v-\bm v'}^2=\frac{4}{N(2{\wt\eta})-1}d_{H}(\bm a,\bm a')\geq\frac{4}{N(2{\wt\eta})-1}\cdot\frac{N(2{\wt\eta})-1}{64}=\frac{1}{16},\quad \text{i.e. } \norm{\bm v-\bm v'}\geq\frac{1}{4}.
    \end{align}
    
    We next convert elements of $\mc V\subset \mathbb{R}^{N(2{\wt\eta})-1}$ to elements of $\mathcal{U}$. Each $\bm v\in\mathcal{V}$ has $N(2{\wt\eta})-1$ coordinates, and we denote its $i$'th coordinate by $v_i$. Since ${\wt\eta}\leq \lambda_2/2$, we have
    \begin{align*}
        N(2{\wt\eta})-1=|\{i:1\leq i\leq d,\lambda_i\geq 2{\wt\eta}\}|-1=|\{i:2\leq i\leq d,\lambda_i\geq 2{\wt\eta}\}|\in\{2,\ldots,d\}.
    \end{align*}
    and, recalling the $d$ eigenvectors $\bm u_i$ of $\bSigma$, we can define
    \begin{align*}
        \bmu_{\bm v}=\sqrt{\lambda_1({1-\kappa^2})}\bm u_1+\kappa \sum_{i=2}^{N(2{\wt\eta})}{v_{i-1}}{\sqrt{\lambda_i}}\bm u_i\in\mathbb{R}^d.
    \end{align*}
    We consider all these vectors to define
    \begin{align*}
        \mc U_{\kappa}=\{\bmu_{\bm v}:\bm v\in\mc V\}.
    \end{align*}
    
    Firstly, this satisfies
    \begin{align*}
        \log|\mc U_{\kappa}|=\log|\mc V|\geq \c_2(N(2{\wt\eta})-1),
    \end{align*}
    and we next show that $\mc U_{\kappa}\subseteq \mc U$. 
    
    Indeed, for any such $\bmu_{\bm v}\in\mc U_{\kappa}$ we have \begin{align*}
        \normso{\bmu_{\bm v}}^2=(1-\kappa^2)+\kappa^2\norm{\bm v}^2=1,
    \end{align*}
    and, using the fact that $\lambda_i\geq 2{\wt\eta}$ for $i\in\{2,\ldots,N(2{\wt\eta})$\} and the fact that $\kappa^2\leq 2{\wt\eta}/\lambda_1$,
    \begin{align*}
        \norm{\bSigma^{-1}\bmu_{\bm v}}^2=\frac{1-\kappa^2}{\lambda_1}+\kappa^2\sum_{i=2}^{N(2{\wt\eta})}\frac{v_{i-1}^2}{\lambda_i}\leq \frac{1}{\lambda_1}+\frac{\kappa^2}{2{\wt\eta}}\sum_{i=2}^{N(2{\wt\eta})}v_{i-1}^2=\frac{1}{\lambda_1}+\frac{\kappa^2}{2{\wt\eta}}\leq \frac{2}{\lambda_1}. 
    \end{align*}

    Finally, let $\bmu_{\bm v_1}, \bmu_{\bm v_2}\in \mc U_{\kappa}$. Note that, using (\ref{norm v v'}),
    \begin{align*}
        \normso{\bmu_{\bm v_1}-\bmu_{\bm v_2}}=\normso{\kappa\sum_{i=2}^{N(2{\wt\eta})}(v_{1,i-1}-v_{2,i-1})\sqrt{\lambda_i}\bm u_i}=\kappa \norm{\bm v_1-\bm v_2}\geq \frac{\kappa}{4},
    \end{align*}
    and, since $\bm v_1,\bm v_2$ are unit vectors, using the triangle inequality,
    \begin{align*}
        \normso{\bmu_{\bm v_1}-\bmu_{\bm v_2}}=\kappa \norm{\bm v_1-\bm v_2}\leq 2\kappa.
    \end{align*}
     This concludes the proof.
\end{proof}

The final ingredient that  we will need in the lower bound proof is:

\begin{lemma}[Fano's inequality]
    \label{lemma:fano}  Let $U$, $S$, and $\widehat{U}$ be three random variables where $U$ and $\widehat{U}$ are conditionally independent, given $S$, and $U$ and $\widehat{U}$ take values in a finite set of size $M$. Then:
    \begin{align*}
        \mathbb{P}(U\neq \widehat{U})\geq 1-\frac{I(U;S)+\ln 2}{\ln(M)},
    \end{align*}
    where $I(U;S)$ denotes the mutual information between $U$ and $S$.
\end{lemma}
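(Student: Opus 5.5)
We need to prove Fano's inequality in the stated form: for $U$, $\widehat U$ conditionally independent given $S$ and taking values in a set of size $M$, $\mathbb{P}(U\neq\widehat U)\geq 1-\frac{I(U;S)+\ln 2}{\ln M}$. Strictly, this form is the one for $U$ uniform on the set, which is how it is used in the reduction over $\mc U_\kappa$. The plan is the classical two-step route: Fano's entropy inequality, then data processing.

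\textbf{Step 1 (entropy form of Fano).} Let $E=\mathbbm{1}\{U\neq\widehat U\}$ and $P_e=\mathbb{P}(E=1)$. Expand $H(E,U\mid\widehat U)$ by the chain rule in two orders. On one hand,
\begin{align*}
H(E,U\mid\widehat U)=H(U\mid\widehat U)+H(E\mid U,\widehat U)=H(U\mid\widehat U),
\end{align*}
since $E$ is a function of $(U,\widehat U)$. On the other hand,
\begin{align*}
H(E,U\mid\widehat U)=H(E\mid\widehat U)+H(U\mid E,\widehat U)\leq \ln 2+P_e\ln M .
\end{align*}
Here we use $H(E\mid\widehat U)\le H(E)\le\ln 2$. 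When $E=0$, $U=\widehat U$ is determined, so that term contributes nothing. When $E=1$, $U$ ranges over at most $M$ values, so $H(U\mid E=1,\widehat U)\le\ln M$; keeping $\ln(M-1)$ here would be sharper, but it is not needed. Combining the two expansions gives $H(U\mid\widehat U)\le \ln 2+P_e\ln M$.

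\textbf{Step 2 (data processing).} Conditional independence of $U$ and $\widehat U$ given $S$ means $U\to S\to\widehat U$ is a Markov chain. The data processing inequality then gives $I(U;\widehat U)\le I(U;S)$. Hence
\begin{align*}
H(U\mid\widehat U)=H(U)-I(U;\widehat U)\ge H(U)-I(U;S).
\end{align*}

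\textbf{Step 3 (uniform prior).} With $U$ uniform we have $H(U)=\ln M$. Combining Steps 1 and 2 yields $\ln M-I(U;S)\le \ln2+P_e\ln M$, which rearranges to the claim.

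The only point requiring care is the hypothesis on $U$. For a general law of $U$ the argument instead gives $P_e\ge \frac{H(U)-I(U;S)-\ln 2}{\ln M}$. The displayed bound therefore needs $U$ uniform, which is the case in the application, where the index is drawn uniformly from $\mc U_\kappa$. I would state this assumption explicitly, since that step is the only genuine subtlety; the rest is standard.
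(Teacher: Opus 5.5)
Your argument is correct and is the standard textbook proof (the entropy form of Fano followed by data processing along $U\to S\to\widehat U$); the paper itself states this lemma without proof as a known tool. Your caveat is also right and worth keeping: as written, the lemma is false for non-uniform $U$ (e.g.\ a constant $U$ with $M\ge 3$ and $\widehat U=U$ gives $\mathbb{P}(U\neq\widehat U)=0<1-\ln 2/\ln M$), but the paper only applies it with $\bm U$ uniform on $\mc U_\kappa$, so the statistical lower bound proof is unaffected.
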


\subsection{Proof of Theorem \ref{thm: stat lower bound}}

\lowerbound*

\begin{proof}
Fix ${\wt\eta}\in(0,\lambda_2/2]$ and recall the constant $1/2<\c_2<1$ from Lemma \ref{lemma: packing}. We will apply this Lemma with
    \begin{align}
    \label{def kappa}
        \kappa=\frac{\sqrt{\c_2}}{4}\min\left\{\sqrt{\frac{2{\wt\eta}}{\lambda_1}},\sqrt{\frac{N(2{\wt\eta})-1}{n}}\right\}.
    \end{align}
    Note that, as $\c_2< 1$, we have $\kappa^2\leq 2{\wt\eta}/\lambda_1$, so that Lemma \ref{lemma: packing} can be indeed applied. Thus, there exists a subset $\mc U_\kappa$ of $\mc U$ with $\log|\mc U_{\kappa}|\geq \c_2(N(2{\wt\eta})-1)$ and $\kappa/4\leq \normso{\bmu-\bmu'}\leq 2\kappa$ for any distinct $\bmu,\bmu'\in \mc U_{\kappa}$.

    Using Lemma \ref{lemma: triangle e}, for any $\bmu_1,\bmu_2\in\mc U_\kappa$,
    \begin{align}
    \label{eq: separable}
        \left(\ee_{\bmu_1}(h)-\ee_{\bmu_1}(\wb_{\bmu_1})\right)+\left(\ee_{\bmu_2}(h)-\ee_{\bmu_2}(\wb_{\bmu_2})\right)&\geq \c_1 (1-2p)\normso{\bmu_1-\bmu_2}^2 \nonumber\\
        &\geq \frac{\c_1(1-2p)\kappa^2}{16}\mathbbm{1}\{\bmu_1\neq\bmu_2\}.
    \end{align}

    Now, fix some $\bmu\in \mc U_{\kappa}$ and draw $S\sim P_{\bmu}^{\otimes n}$. Let $\widehat{h}$ be any procedure and $\widehat{h}_S: \bm X\in\mathbb{R}^d\mapsto Y\in\{\pm1\}$ be the induced sample-based estimator. Define the decoder
    \begin{align}
    \label{def: mu s}
        \wh\bmu_S=\underset{\bmu'\in \mc U_{\kappa}}{\arg\min}\left(\ee_{\bmu'}(\widehat{h}_S)-\ee_{\bmu'}(\wb_{\bmu'})\right),
    \end{align}
    which is a function of $S$ and an element of $\mc U_{\kappa}$. Note that, using (\ref{eq: separable}) with $\bmu_1:=\bmu$ and $\bmu_2=\wh \bmu_S$, this further implies that
    \begin{align*}
        \ee_{\bmu}(\wh h_S)-\ee_{\bmu}(\wb_{\bmu})&\geq \frac{1}{2}\left(\left(\ee_{\bmu}(\wh h_S)-\ee_{\bmu}(\wb_{\bmu})\right)+\left(\ee_{\wh\bmu_S}(\wh h_S)-\ee_{\wh\bmu_{S}}(\wb_{\wh\bmu_S})\right)\right)\\
        &\geq \frac{\c_1(1-2p)\kappa^2}{32}\mathbbm{1}\{\bmu\neq\wh\bmu_S\}.
    \end{align*}
    Thus, taking expectations over $S\sim P_{\bmu}^{\otimes n}$ and then supremum over $\bmu\in \mc U$, we arrive at
    \begin{align}
        \label{eq: pre-fano}
        \underset{\bmu\in \mc U_\kappa}{\sup}\underset{S\sim P_{\bmu}^{\otimes n}}{\mathbb{E}}\left[\ee_{\bmu}(\wh h_S)-\ee_{\bmu}(\wb_{\bmu})\right]\geq \frac{c_1(1-2p)\kappa^2}{32}\underset{\bmu\in\mc U_\kappa}{\sup}\underset{S\sim P_{\bmu}^{\otimes n}}{\mathbb{P}}(\bmu\neq \wh\bmu_S).
    \end{align}
   
    We are now ready to apply Fano's inequality (c.f. Lemma \ref{lemma:fano}). Consider a uniform random variable $\bm U$ over $\mc U_{\kappa}$ and, conditional on $\bm U=\bmu$, draw $S\sim P_{\bmu}^{\otimes n}$, and let $\wh \bmu_S$ the estimator based on the sample $S$ defined in (\ref{def: mu s}). Note that $\wh \bmu_S$ and $\bm U$ are conditionally independent given $S$. Thus,
    \begin{align*}
        \underset{\bm U, S}{\mathbb{P}}(\bm U\neq \wh \bmu_S)\geq 1-\frac{I(\bm U;S)+\log 2}{\log|\mc U_{\kappa}|}.
    \end{align*}
    By the law of total probability we have
    \begin{align*}
        \underset{\bm U,S}{\mathbb{P}}(\bm U\neq \wh \bmu_S)=\frac{1}{|\mc U_{\kappa}|}\sum_{\bmu\in \mc U_{\kappa}}\underset{S\sim P_{\bmu}^{\otimes n}}{\mathbb{P}}(\bmu \neq \wh \bmu_S),
    \end{align*}
    so that
    \begin{align}
    \label{eq: partial fano}
        \underset{\bmu\in \mc U_\kappa}{\sup}\underset{S\sim P_{\bmu}^{\otimes n}}{\mathbb{P}}(\bmu\neq \wh\bmu_S)\geq \underset{\bm U,S}{\mathbb{P}}(\bm U\neq \wh \bmu_S)\geq 1-\frac{I(\bm U;S)+\log 2}{\log|\mc U_{\kappa}|}.
    \end{align}
    We now work on the right-hand side term of (\ref{eq: partial fano}). For that, define the average law
    \begin{align*}
        \bar P=\frac{1}{|\mc U_{\kappa}|}\sum_{\bmu\in \mc U_{\kappa}}P_{\bmu}^{\otimes n}.
    \end{align*}
    Then, 
    \begin{align*}
        I(\bm U;S)&\overset{\text{(i)}}{=}\frac{1}{|\mc U_{\kappa}|}\sum_{\bmu\in \mc U_{\kappa}}\operatorname{KL}(P_{\bmu}^{\otimes n}\parallel \bar{P})\\
        &\overset{\text{(ii)}}{\leq}\frac{1}{|\mc U_{\kappa}|}\sum_{\bmu\in \mc U_{\kappa}}\frac{1}{|\mc U_{\kappa}|}\sum_{\bmu'\in \mc U_{\kappa}}\operatorname{KL}(P_{\bmu}^{\otimes n}\parallel P_{\bmu'}^{\otimes n})\\
        &{\leq} \ \underset{\substack{\bmu,\bmu'\in \mc U_{\kappa}\\ \bmu\neq \bmu'}}{\max}\operatorname{KL}(P_{\bmu}^{\otimes n}\parallel P_{\bmu'}^{\otimes n})\\
        &\overset{\text{(iii)}}{\leq}\frac{n}{2}\ \underset{\substack{\bmu,\bmu'\in \mc U_{\kappa}\\ \bmu\neq \bmu'}}{\max}\normso{\bmu-\bmu'}^2\\
        &\overset{\text{(iv)}}\leq2n\kappa^2\\
        &\overset{\text{(v)}}{\leq} \frac{1}{8}\c_2 (N(2{\wt\eta})-1)\\
        &{\leq}\frac{1}{8}\log|\mc U_{\kappa}|\,,
    \end{align*}
    where (i) uses the definition of mutual information together with conditioning on $\bm U$, (ii) uses convexity of the KL in the second argument, (iii) uses KL tensorisation and Lemma \ref{lemma: kl}, (iv) follows from the fact that $\normso{\bmu-\bmu'}\leq 2\kappa$ for any $\bmu,\bmu'\in \mc U_{\kappa}$, and (v) uses the definition of $\kappa$ in (\ref{def kappa}).
    
    Thus, plugging this into (\ref{eq: partial fano}),
    \begin{align*}
        \underset{\bmu\in \mc U_\kappa}{\sup}\underset{S\sim P_{\bmu}^{\otimes n}}{\mathbb{P}}(\bmu\neq \wh\bmu_S)\geq 1-\frac{\frac{1}{8}\log|\mc U_{\kappa}|+\log 2}{\log|\mc U_{\kappa}|}=\frac{7}{8}-\frac{\log 2}{\log|\mc U_{\kappa}|}.
    \end{align*}
    Since $\c_2>1/2$ and $N(2{\wt\eta})\geq 9$, we have $\log|\mc U_{\kappa}|\geq \c_2(N(2{\wt\eta})-1)>1/2\cdot 8=4$, and thus we arrive at
    \begin{align*}
        \underset{\bmu\in \mc U_\kappa}{\sup}\underset{S\sim P_{\bmu}^{\otimes n}}{\mathbb{P}}(\bmu\neq \wh\bmu_S)\geq \frac{7}{8}-\frac{\log 2}{4}\geq\frac{3}{8},
    \end{align*}
    and then (\ref{eq: pre-fano}) gives
    \begin{align*}
        \underset{\bmu\in \mc U_\kappa}{\sup}\underset{S\sim P_{\bmu}^{\otimes n}}{\mathbb{E}}\left[\ee_{\bmu}(\wh h_S)-\ee_{\bmu}(\wb_{\bmu})\right]\geq \frac{3\c_1(1-2p)}{256}\kappa^2.
    \end{align*}
    Finally, as $\widehat{h}$ was chosen arbitrarily and using the definition of $\kappa$ in (\ref{def kappa}),
    \begin{align*}
        \underset{\widehat{h}}{\inf}\ \underset{\bmu\in \mc U_\kappa}{\sup}\underset{S\sim P_{\bmu}^{\otimes n}}{\mathbb{E}}\left[\ee_{\bmu}(\wh h_S)-\ee_{\bmu}(\wb_{\bmu})\right]\geq \c (1-2p)\min\left\{\frac{2{\wt\eta}}{\lambda_1},\frac{N(2{\wt\eta})-1}{n}\right\},
    \end{align*}
    where we recall all universal constants:
    \begin{align*}
        \c_1&=\frac{\sqrt{2}}{32\pi^{3/2}e}\\
        \c_2&\in (1/2,1)\\
        \c &= \frac{3\c_1\c_2}{4096}.
    \end{align*}
    The conclusion of the theorem follows from the fact that $\mc U_\kappa\subseteq \{\bmu\in\mathbb{R}^d:\snr=1\}$.
\end{proof}

\section{Proofs for Fast and Continuously Decaying Spectrum}

\label{appx: fcd spectra}

In this section we provide the proofs of Theorem \ref{thm: matching bounds} and Corollary \ref{cor: polyexp}.

\matchingbounds*

\begin{proof}
    We split the proof into multiple steps:
    
    \textbf{Step 1:} We prove that $r_{\operatorname{eff}}({\eta})\asymp N({\eta})$ for $0<{\eta}\leq {\eta}_n^*$. If $\lambda_i\geq {\eta}$, then
    \begin{align*}
        \frac{\lambda_i}{\lambda_i+{\eta}}\geq\frac{1}{2},
    \end{align*}
    and so we can lower bound $r_{\operatorname{eff}}({\eta})$ as
    \begin{align*}
        r_{\operatorname{eff}}({\eta})=\sum_{i=1}^d\frac{\lambda_i}{\lambda_i+{\eta}}=\sum_{i:\lambda_i<{\eta}}\frac{\lambda_i}{\lambda_i+{\eta}}+\sum_{i:\lambda_i\geq {\eta}}\frac{\lambda_i}{\lambda_i+{\eta}}\geq 0+\sum_{i:\lambda_i\geq {\eta}}\frac{1}{2}=\frac{1}{2}N({\eta}),
    \end{align*}
    i.e. $r_{\operatorname{eff}}(\eta)\lesssim N(\eta)$. To upper bound $r_{\operatorname{eff}}({\eta})$, note that
    \begin{align*}
        r_{\operatorname{eff}}({\eta})=\sum_{i:\lambda_i\geq{\eta}}\frac{\lambda_i}{\lambda_i+{\eta}}+\sum_{i:\lambda_i< {\eta}}\frac{\lambda_i}{\lambda_i+{\eta}}\leq\sum_{i:\lambda_i\geq {\eta}}1+ \frac{1}{{\eta}}\sum_{i:\lambda_i<{\eta}}\lambda_i=N({\eta})+\frac{1}{{\eta}}\sum_{\lambda_i<{\eta}}\lambda_i.
    \end{align*}
    Since, by the \eqref{eq:fcd} assumption, $\frac{1}{\eta}\sum_{\lambda_i<{\eta}}\lambda_i\lesssim N({\eta})$, we arrive at
        $r_{\operatorname{eff}}({\eta})\lesssim N({\eta})$.
        
    Thus, we have $r_{\operatorname{eff}}({\eta})\asymp N({\eta})$ for all $\eta\in (0,\eta_n^*]$.

\medskip

    \textbf{Step 2:} We now show that for all $0<{\wt\eta}\leq {\eta}_n^*/2$,
    \begin{align}
    \label{eq: n eta}
        N({\wt\eta})\lesssim N(2{\wt\eta}).
    \end{align}
    Indeed, $N({\wt\eta})-N(2{\wt\eta})=|\{i:{\wt\eta}\leq \lambda_i< 2{\wt\eta}\}|$, so that
    \begin{align*}
        {\wt\eta}(N({\wt\eta})-N(2{\wt\eta}))={\wt\eta} |\{i:{\wt\eta}\leq \lambda_i< 2{\wt\eta}\}|\leq \sum_{{\wt\eta}\leq \lambda_i<2{\wt\eta}}\lambda_i\leq \sum_{\lambda_i<2{\wt\eta}}\lambda_i\lesssim{\wt\eta} N(2{\wt\eta}),
    \end{align*}
    where the last inequality uses the \eqref{eq:fcd} assumption for $2\wt\eta\in (0,\eta_n^*]$. Thus, (\ref{eq: n eta}) follows.

\medskip

    \textbf{Step 3:} We show that $N({\eta}_n^*)\asymp n{\eta_n^*}$, so that  $r_{\operatorname{eff}}({\eta}_n^*)\asymp n{\eta}_n^*$ also holds by Step 1. 
    
    The definition of ${\eta}_n^*=\sup\{{\eta}>0:N({\eta})\geq n{\eta}\}$ implies that $N(2\eta_n^*)<2n\eta_n^*$. Thus, according to (\ref{eq: n eta}),
    \begin{align*}
        N(\eta_n^*)\lesssim N(2\eta_n^*)\lesssim n\eta_n^*,
    \end{align*}
    Since $\eta_n^*=\sup\{{\wt\eta}>0:N({\wt\eta})\geq n{\wt\eta}\}$, we have $N(\eta_n^*)\geq n\eta_n^*$, and thus $N(\eta_n^*)\asymp n\eta_n^*$.

\medskip

    \textbf{Step 4:} We first evaluate the upper bound in Theorem \ref{thm: upper bound final} at ${\eta}=\eta_n^*$, after which we will check its two conditions that $\eta_n^*$ must satisfy.  Note that, as $r_{\operatorname{eff}}(\eta_n^*)\asymp n\eta_n^*$ and $\log\log(n)/n\lesssim \eta_n^*$ by assumption, this upper bound is
    \begin{align}
    \label{derivation for condition}
        \ee(\bw_\tau)-\ee(\wb)\lesssim \frac{r_{\operatorname{eff}}(\eta_n^*)+\log\log(n)}{n}+\eta_n^*\asymp \frac{n\eta_n^*+\log\log(n)}{n}+\eta_n^*\lesssim \eta_n^*\,.
    \end{align}
    
    Now, note that the first condition $n\gtrsim r_{\operatorname{eff}}(\eta_n)$ holds because $r_{\operatorname{eff}}(\eta_n)$ and $\eta_n\rightarrow 0$ as $n\rightarrow\infty$.  
    
    To verify condition (\ref{thm condition}), note that its left hand side is $\lesssim \eta_n^*$ by the derivation in \eqref{derivation for condition}, and its right hand side is independent of $n$. Since $\eta_n^*\rightarrow 0$ as $n\rightarrow\infty$ by assumption, the condition \eqref{thm condition} is also verified.

\medskip

    \textbf{Step 5:} Finally, we evaluate the statistical lower bound of Theorem \ref{thm: stat lower bound} at ${\wt\eta}=\eta_n^*/2$. For its condition $N(2\eta)>9$, note that, by the result of Step 3 and the assumption $\log\log(n)/n\lesssim \eta_n^*$,
    \begin{align*}
        N(2\eta_n^*/2)=N(\eta_n^*)\asymp n\eta_n^*\gtrsim \log\log(n), 
    \end{align*}
    so that the lower bound condition holds. At ${\wt\eta}=\eta_n^*/2$, since $N(\eta_n^*)\asymp n\eta_n^*$, the lower bound is
    \begin{align*}
        \underset{\widehat{h}}{\inf}\ \underset{\bmu}{\sup}\underset{S\sim P_{\bmu}^{\otimes n}}{\mathbb{E}}\left[\ee_{\bmu}(\wh h_S)-\ee_{\bmu}(\wb_{\bmu})\right]\gtrsim \min\left\{\frac{\eta_n^*}{2},\frac{N(\eta_n^*)}{n}\right\}\asymp \eta_n^*,
    \end{align*}
    This concludes that the upper and statistical lower bounds of Theorems \ref{thm: upper bound final} and \ref{thm: stat lower bound}, respectively, are matching with common rate $\asymp\eta_n^*$.

\end{proof}

\corpolyexp*

\begin{proof}

The proof for both spectra will verify the conditions \eqref{eq:fcd}, $\eta_n^*\rightarrow 0$ as $n\rightarrow\infty$, and $\log\log(n)/n\lesssim \eta_n^*$ of Theorem \ref{thm: matching bounds}, and then apply its conclusion. For this, we will compute in both cases $N(\eta)$ and, as a result, deduce $\eta_n^*$. Note that the existence of an appropriate $\bmu\in\mathbb{R}^d$ with $\norm{\bSigma^{-1}\bmu}\leq \sqrt{2/\lambda_1}$ for which the presented rates hold is due to Theorem \ref{thm: stat lower bound}.

\medskip

    \textbf{Polynomially decaying spectrum:} Suppose $\c_-i^{-2a}\leq \lambda_i\leq \c_+i^{-2a}$ for some constants $\c_+\geq \c_->0$. Then $i\leq (\c_-/{\wt\eta})^{1/(2a)}$ implies that $\lambda_i\geq{\wt\eta}$, i.e. $N({\wt\eta})\gtrsim {\wt\eta}^{-1/(2a)}$; and $\lambda_i\geq {\wt\eta}$ gives $i\leq (\c_+/{\wt\eta})^{1/(2a)}$, i.e. $N({\wt\eta})\lesssim {\wt\eta}^{-1/(2a)}$; combining these two we get
    \begin{align*}
        N({\wt\eta})\asymp {\wt\eta}^{-\frac{1}{2a}}.
    \end{align*}

    Next, if $\lambda_i<{\wt\eta}$, then $i>(\c_-/{\wt\eta})^{1/(2a)}$, so that
    \begin{align*}
        \sum_{i:\lambda_i<{\wt\eta}}\lambda_i\leq \c_+\sum_{i:\lambda_i<{\wt\eta}}i^{-2a}\leq \c_+\sum_{i>(c_-/{\wt\eta})^{1/(2a)}}i^{-2a}\lesssim \int_{(c_-/{\wt\eta})^{1/(2a)}}^{\infty}t^{-2a}\text{d}t\asymp {\wt\eta}^{1-\frac{1}{2a}}.
    \end{align*}
    Thus,
    \begin{align*}
        \frac{1}{{\wt\eta}}\sum_{i:\lambda_i<{\wt\eta}}\lambda_i\lesssim {\wt\eta}^{-\frac{1}{2a}}\asymp N({\wt\eta}),
    \end{align*}
    and the \eqref{eq:fcd} requirement of Theorem \ref{thm: matching bounds} is met (for any $\wt \eta>0$).

    By Step 3 of Theorem \ref{thm: matching bounds}, $N({\eta}_n^*)\asymp n{\eta}_n^*$, which gives
    \begin{align*}
        ({\eta}_n^*)^{-\frac{1}{2a}}=n{\eta}_n^*,\quad\quad\text{i.e.}\quad {\eta}_n^*\asymp n^{-\frac{2a}{2a+1}},
    \end{align*}
    which is the common upper and statistical lower bound matching rate. We have ${\eta}_n^*\rightarrow 0 $ as $n\rightarrow\infty$ and $\log\log(n)/n\lesssim {\eta}_n^*$.

\medskip

    \textbf{Exponentially decaying spectrum:} Suppose $\c_-e^{-bi}\leq \lambda_i\leq \c_+e^{-bi}$ for some constants $\c_+\geq\c_->0$. Then, $i\leq \frac{1}{b}\log(\c_-/{\wt\eta})$ implies that $\lambda_i\geq {\wt\eta}$, i.e. $N({\wt\eta})\gtrsim \log(1/{\wt\eta})$; and $\lambda_i\geq {\wt\eta}$ gives $i\leq \frac{1}{b}\log(\c_+/{\wt\eta})$, i.e. $N({\wt\eta})\lesssim\log(1/{\wt\eta})$; combining these two we get
    \begin{align}
    \label{exp N eta log}
        N({\wt\eta})\asymp \log\left(\frac{1}{{\wt\eta}}\right)\quad\quad\text{for any}\quad \wt\eta>0.
    \end{align}

    Next, if $\lambda_i<{\wt\eta}$, then $i>\frac{1}{b}\log(\c_-/{\wt\eta})$, so that
    \begin{align*}
        \sum_{i:\lambda_i<{\wt\eta}}\leq \c_+\sum_{i:\lambda_i<{\wt\eta}}e^{-bi}\leq \c_+\sum_{i>\frac{1}{b}\log(\c_-/{\wt\eta})}e^{-bi}\lesssim\int_{\frac{1}{b}\log(\c_-/{\wt\eta})}^{\infty} e^{-bt}\text{d}t\asymp {\wt\eta}.
    \end{align*}
    Thus,
    \begin{align*}
        \frac{1}{{\wt\eta}}\sum_{i:\lambda_i<{\wt\eta}}\lambda_i\lesssim 1\lesssim \log\left(\frac{1}{{\wt\eta}}\right)\asymp N({\wt\eta}),
    \end{align*}
    and the \eqref{eq:fcd} requirement of Theorem \ref{thm: matching bounds} is met (for any $\wt \eta>0$). 

    Now, in order to identify ${\eta}_n^*$ in this case, we proceed as follows. First, note that by \eqref{exp N eta log} we have that for any $\wt\eta>0$, $\c_1\log(1/{\wt\eta})\leq N({\wt\eta})\leq \c_2 \log(1/{\wt\eta})$ for some constants $\c_2\geq \c_1>0$. Consider
    \begin{align*}
        {\eta}_n^-=\frac{\c_1}{4}\frac{\log(n)}{n}.
    \end{align*}
    Then, for $n$ larger than an appropriate universal constant, since $\c_1\log(1/{\wt\eta})\leq N({\wt\eta})$ for any $\wt \eta>0$,
    \begin{align*}
        N({\eta}_n^-)\geq \c_1\log\left(\frac{n}{(\c_1/4)\log(n)}\right)=\c_1\left(\log(n)\!-\!\log\log(n)\! - \!\log\left(\frac{4}{\c_1}\right)\right)\geq \frac{\c_1}{2}\log(n) > n{\eta}_n^-.
    \end{align*}
    Thus, by the definition of ${\eta}_n^*=\sup\{{\eta}>0:N({\eta})\geq n{\eta}\}$, we have ${\eta}_n^-\leq {\eta}_n^*$. Similarly, take
    \begin{align*}
        {\eta}_n^+=2\c_2\frac{\log(n)}{n},
    \end{align*}
    Then, for $n$ larger than an appropriate universal constant, since $N({\wt\eta})\leq \c_2 \log(1/{\wt\eta})$ for any $\wt \eta>0$,
    \begin{align*}
        N({\eta}_n^+)\leq \c_2\log\left(\frac{n}{2\c_2\log(n)}\right)=\c_2\left(\log(n) -\log\log(n)-\log(2\c_2)\right)<2\c_2\log(n) =n{\eta}_n^+.
    \end{align*}
    Thus, by the definition of ${\eta}_n^*=\sup\{{\eta}>0:N({\eta})\geq n{\eta}\}$, we have ${\eta}_n^*<{\eta}_n^+$. Thus,
    \begin{align*}
        \frac{\c_1}{4}\frac{\log(n)}{n}\leq {\eta}_n^* <2\c_2\frac{\log(n)}{n},\quad\quad\text{i.e.}\quad {\eta}_n^*\asymp\frac{\log(n)}{n},
    \end{align*}
    which is the common upper and statistical lower bound matching rate. Clearly we have ${\eta}_n^*\rightarrow 0$ as $n\rightarrow\infty$ and $\log\log(n)/n\lesssim {\eta}_n^*$.
    
\end{proof}

\section{Proof of the Interpolation Lower Bound}
\label{app: interpolation lower bound proof}

In this section, we prove Theorem~\ref{thm:interpolation}, following a  similar sequence of steps  to the proof of \cite[Theorem 4.2]{wu2025benefits}. We show that, with high probability, every interpolating linear classifier $\bx\mapsto \sign(\bx^\top\bw)$ with $\bw\in \mc{I}:=\{\bw\in \bbR^d:
  \min_{i\in[n]}Y_i\bX_i^\top \bw>0\}$ incurs an excess zero-one risk bounded below. 
  
  Throughout, we use  $ \sph^{m-1}:=\{\bx\in \bbR^m: \|\bx\|=1\}$  to denote the unit sphere in $\bbR^m$,     $(\bm a, \bm b)\in \bbR^{n_a}\times \bbR^{n_b}$ to denote the vertical concatenation of  $\bm a\in \bbR^{n_a}$ and $\bm b\in \bbR^{n_b}$, and  $\phi$  and $\Phi$ to denote the PDF and CDF, respectively,  of $\mathcal{N}(0,1)$.

  \subsection{Setup via Whitening and Rotation}

\label{interpolation proof sketch}
  
\paragraph{Whitening:} We work in a   \emph{whitened}  covariate space to simplify the presentation of the proofs. We write 
 \begin{align*}
 \bZ:=\bSigma^{-1/2}\bX\in \bbR^d,\qquad  \bvareps_0:=\bSigma^{-1/2}\bvareps\in \bbR^d,  \qquad \text{and}\qquad \bv^*:=\bSigma^{1/2}\bw^*\in \bbR^d
 \end{align*}
  for the whitened covariate,  noise, and     Bayes linear classifier, respectively. The Gaussian mixture model then takes the form
 \begin{align*}
     \bZ=\wt{Y}\bv^* +\bvareps_0 \quad \text{with}\quad \bvareps_0\sim\mc{N}(\bzero, \bI_d)\,.
 \end{align*}

 \paragraph{Rotation and contrast  with \cite{wu2025benefits}:} 
 Our interpolation  lower bound is proved after \emph{rotating} to  a convenient  coordinate system.    Unlike \cite[Theorem 4.2]{wu2025benefits}, this does \emph{not} impose any structural  assumption (e.g.,   sparsity) on $\bv^*$ itself.    Indeed, since the whitened Gaussian noise $\bvareps_0$ is rotationally invariant, for any nonzero  $\bv^*\in \bbR^d$ and \emph{any} $ \s\in[d]$, there exists  an orthogonal matrix  $\bR\in \bbR^{d\times d}$ such that 
 \begin{align*}
 \bR\bv^*=\|\bv^*\|\Big(\tfrac{1}{\sqrt{\s}}\bone_\s, \bzero_{d-\s}\Big)\,.
 \end{align*}
 Specifically, a valid choice is  $\bR=\bI_d -  \ind\{\bu\ne\bzero\}\cdot 2\bu\bu^\top/\|\bu\|^2$ where $\bu:=\bv^*-\|\bv^*\|(\frac{1}{\sqrt{\s}}\bone_\s, \bzero_{d-\s})$. 
 Thus, in the rotated coordinates, the signal is supported on the first $\s$ coordinates, while the remaining $d-\s$ coordinates are nuisance directions. Importantly, $k$   and $\bR$ are free parameters that  merely define  a  choice of basis that influences the behaviour of the lower bound, so this representation applies to \emph{any} $\bv^*$ rather than  strictly sparse  ones only. For clarity, we write
 \begin{align*}
 \bZ_{\bR}:=\bR\bZ=(\bS, \bN)\in \bbR^{\s}\times \bbR^{d-\s}\qquad \text{and} \qquad \bv_{\bR}^*:=\bR\bv^*=(\bpsi^*, \bzero_{d-k})\in \bbR^{\s}\times \bbR^{d-\s} 
 \end{align*}
  for  the signal-nuisance decomposition. For the $i$-th sample $(\bX_i, Y_i)$,  we write $\bZ_{\bR,i},\bS_i$ and $\bN_i$  for the corresponding quantities.

\paragraph{Proof sketch:} The key proof  idea is to show   that using only the $\s$ signal directions  can create large negative margins on a constant fraction of samples (Section~\ref{appx:interpolatio_signal_side}), so any interpolating classifier  must use the $d-\s$ nuisance directions to fix those bad margins (Section~\ref{appx:interpolation_nuisance_side}). This reliance on nuisance directions ultimately incurs the lower bound in Theorem \ref{thm:interpolation} (Section~\ref{appx:proof_interpolation_thm}). To begin, we prove some preliminary lemmas in Section~\ref{appx:interpolation_preliminary}.

 \subsection{Preliminary Lemmas}\label{appx:interpolation_preliminary}
\begin{lemma}[Bound on norm of signal coordinates]
\label{lem:assumption_head_bounded_norm}
For every $\delta\in(0,1)$,
\begin{align*}
   \P\left(\max_{i\in[n]}\|\bS_{i}\|\le \sqrt{\s}+ \sqrt{2\log(2n/\delta)}+ \|\bv^*\| \right)\ge 1-\tfrac{\delta}{2}\,.
\end{align*}

\end{lemma}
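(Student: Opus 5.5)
The plan is to decompose each $\bS_i$ into signal and noise parts, bound the signal part deterministically, and control the noise parts with Gaussian norm concentration plus a union bound over $i\in[n]$.

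\textbf{Decomposition.} Since $\bZ_{\bR,i}=\bR\bZ_i=\wt Y_i\bR\bv^*+\bR\bvareps_{0,i}$, the first $\s$ coordinates satisfy
\begin{align*}
\bS_i=\wt Y_i\bpsi^*+\bm g_i, \qquad \text{with}\quad \|\bpsi^*\|=\|\bR\bv^*\|=\|\bv^*\|.
\end{align*}
Here $\bm g_i$ denotes the first $\s$ coordinates of $\bR\bvareps_{0,i}$. By rotational invariance, $\bR\bvareps_{0,i}\sim\mc N(\bzero,\bI_d)$, so $\bm g_i\sim\mc N(\bzero,\bI_\s)$, and the $\bm g_i$ are i.i.d.\ across $i$ because the samples are independent and $\bR$ is deterministic, depending only on $\bv^*$ and $\s$. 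The triangle inequality then gives
\begin{align*}
\|\bS_i\|\le \|\bm g_i\|+\|\bv^*\|,
\end{align*}
so it remains to control $\max_i\|\bm g_i\|$.

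\textbf{Gaussian concentration.} The map $\bm g\mapsto\|\bm g\|$ is $1$-Lipschitz, so Gaussian concentration (Tsirelson--Ibragimov--Sudakov) gives
\begin{align*}
\P\big(\|\bm g_i\|\ge \E\|\bm g_i\|+t\big)\le e^{-t^2/2}.
\end{align*}
Jensen's inequality gives $\E\|\bm g_i\|\le\sqrt{\E\|\bm g_i\|^2}=\sqrt{\s}$. Choosing $t=\sqrt{2\log(2n/\delta)}$ makes each failure probability at most $\delta/(2n)$. A union bound over $i\in[n]$ then shows that with probability at least $1-\delta/2$ we have $\max_i\|\bm g_i\|\le\sqrt{\s}+\sqrt{2\log(2n/\delta)}$. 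Adding $\|\bv^*\|$ gives the stated bound.

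\textbf{Main obstacle.} There is no real difficulty. The only points needing care are using the Lipschitz concentration form, which gives the additive $\sqrt{\s}$ without any extra constant, and noting that $\bR$ is fixed and non-random, so rotational invariance applies.
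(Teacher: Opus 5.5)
Your proposal is correct and takes the same route as the paper. Both split off $\|\bpsi^*\|=\|\bv^*\|$ with the triangle inequality, apply Gaussian norm concentration to the first $\s$ noise coordinates at level $\delta/(2n)$, and take a union bound over $i\in[n]$. Your version also spells out two steps the paper's terse proof leaves implicit: the rotational-invariance step ($\bR\bvareps_{0,i}\sim\mc N(\bzero,\bI_d)$ with $\bR$ deterministic), and the Lipschitz-concentration form with $\E\|\bm g_i\|\le\sqrt{\s}$.
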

\begin{proof}
 Since $ \bZ_{\bR}=\wt{Y}\bv_{\bR}^* +  \bvareps_{0}$ with   $\wt{Y}\in\{\pm1\}$, $\bv^*_{\bR}=\bR\bv^*=(\bpsi^*, \bzero_{d-\s})$   and $\bvareps_{0}\sim\mc{N}(\bzero, \bI_d)$, we have
\begin{align*}
    \|\bS\|= \|\bZ_{\bR [1:\s]}\|= \|\wt{Y}\bv^*_{\bR[1:\s]} +  \bvareps_{0 [1:\s]}\|\le \|\bpsi^*\| + \|\bvareps_{0[1:\s]}\|=\|\bv^*\|+\|\bvareps_{0[1:\s]}\|,
\end{align*}
where by $\bm Z_{\bm R[1:k]}$ we denote the vector obtained from the first $k$ coordinates of $\bm Z_{\bm R}$.
The result follows from standard Gaussian concentration on $\|\bvareps_{0[1:\s]}\|$:
\begin{align*}
    \P\left(\|\bvareps_{0[1:\s]}\|> \sqrt{\s}+ \sqrt{2\log(2n/\delta)}\right) \le \tfrac{\delta}{2n},
\end{align*}
together with a union bound.
\end{proof}

The next lemma is an angular calibration result, analogous to  \cite[Lemma C.1]{wu2025benefits}, which lower bounds the excess zero-one  risk in terms of  the angle between the learned linear classifier   and the Bayes classifier. Working in the whitened space,   let $\bv=\bSigma^{1/2}\bw$ for $\bw\in \bbR^d\setminus \{\bzero\}$  and  recall    $\bv^*=\bSigma^{1/2}\bw^*$. The Euclidean angle $\angle(\bv, \bv^*)$ coincides with the $\bSigma$-angle $\angle_{\bSigma}(\bw, \bw^*)$. 

    \begin{lemma}[Angular calibration of excess zero-one risk] \label{lem:angle_calibration}Let    $\theta:=\angle(\bv, \bv^*)
\in [0, \pi]$. Then,
\begin{align*}
    \mc{E}(\bw) - \mc{E}(\bw^*)\ge \tfrac{1}{\sqrt{2\pi}} (1-2p)
   (1-\cos\theta) \norm{\bv^*}\exp (-\tfrac{1}{2}\norm{\bv^*}^2).
\end{align*}
\end{lemma}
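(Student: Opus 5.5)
We work entirely in the whitened coordinates and start from the exact excess-risk formula \eqref{eq:excess_zero_one} of Lemma~\ref{lemma: posteriors and excess}. Since $\bX^\top\bw=\bZ^\top\bv$ and $\bX^\top\bw^*=\bZ^\top\bv^*$, this gives
\begin{align*}
\mc{E}(\bw)-\mc{E}(\bw^*)=(1-2p)\,\E\big[|\tanh(\bZ^\top\bv^*)|\,\ind\{\sign(\bZ^\top\bv)\neq\sign(\bZ^\top\bv^*)\}\big].
\end{align*}
Both sides depend only on the direction of $\bv$, so we may normalise. Write $s:=\|\bv^*\|$, $\bm e_1:=\bv^*/s$, and choose a unit vector $\bm e_2\perp\bm e_1$ so that $\bv/\|\bv\|=\cos\theta\,\bm e_1+\sin\theta\,\bm e_2$; this is possible since $\theta\in[0,\pi]$. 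Set $A:=\bZ^\top\bm e_1=\wt Y s+G_1$ and $B:=\bZ^\top\bm e_2=G_2$, where $G_1,G_2$ are i.i.d.\ $\mc{N}(0,1)$ and independent of $\wt Y$, because $\bm e_2\perp\bv^*$. The expectation then reduces to the two-dimensional quantity $\E[|\tanh(sA)|\,\ind\{\sign(A\cos\theta+B\sin\theta)\neq\sign(A)\}]$.

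Next we remove $\wt Y$ by a change of measure, as in Step~1 of Lemma~\ref{lemma: triangle d}. The density of $A$ is $\tfrac12(\phi(a-s)+\phi(a+s))=\phi(a)e^{-s^2/2}\cosh(sa)$. We then use the identity $\tanh(sa)\cosh(sa)=\sinh(sa)$. Because the integrand is symmetric under $(A,B)\mapsto(-A,-B)$, we can restrict to $a>0$ and double. This gives
\begin{align*}
\mc{E}(\bw)-\mc{E}(\bw^*)=2(1-2p)e^{-s^2/2}\int_0^\infty\phi(a)\sinh(sa)\,\P\big(a\cos\theta+G_2\sin\theta<0\big)\,\de a .
\end{align*}
We then apply $\sinh(sa)\ge sa$. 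The result is the expression $2s(1-2p)e^{-s^2/2}\,\E[G_1\ind\{G_1>0,\ G_1\cos\theta+G_2\sin\theta<0\}]$, where now $G_1,G_2$ are i.i.d.\ standard normal.

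The final step evaluates this Gaussian integral in polar coordinates, $(G_1,G_2)=(r\cos\varphi,r\sin\varphi)$. The condition $G_1>0$ gives $\varphi\in(-\pi/2,\pi/2)$, and $\cos(\varphi-\theta)<0$ gives $\varphi\in(\theta+\pi/2,\theta+3\pi/2)$ modulo $2\pi$, so their intersection is the arc $\varphi\in(\pi/2+\theta-\pi,\,-\pi/2+\pi)$ intersected appropriately, i.e.\ an arc of length $\theta$ ending at $-\pi/2$ (or starting at $\pi/2$, depending on orientation). The radial integral is $\int_0^\infty r^2\phi_2(r)\,2\pi r\,\de r\cdot\frac{1}{2\pi}=\sqrt{\pi/2}/(2\pi)\cdot 2\pi/\ldots$. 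More simply, $\E[R]/(2\pi)$ with $R$ Rayleigh gives the factor $\sqrt{2\pi}/(4\pi)\cdot\ldots$. The angular integral $\int\cos\varphi\,\de\varphi$ over the arc $(-\pi/2,-\pi/2+\theta)$ equals $1-\cos\theta$. Collecting constants, $\E[G_1\ind\{\cdot\}]=\frac{1}{2\sqrt{2\pi}}(1-\cos\theta)$. Multiplying by the prefactor $2s(1-2p)e^{-s^2/2}$ yields exactly $\frac{1}{\sqrt{2\pi}}(1-2p)(1-\cos\theta)\,s\,e^{-s^2/2}$, which is the claimed bound.

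The main obstacle is bookkeeping. One must get the arc correct for $\theta>\pi/2$, where the arc of length $\theta$ extends past $0$; the integral $\int_{-\pi/2}^{-\pi/2+\theta}\cos\varphi\,\de\varphi=1-\cos\theta$ still holds. One must also check that the restriction to $a>0$ and the doubling are justified by the symmetry noted above. The inequality $\sinh\ge$ linear is the only loss in the argument.
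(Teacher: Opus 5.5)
Your argument is correct and gives exactly the stated constant, but it takes a different route from the paper. The paper never uses the $\tanh$-weighted disagreement formula \eqref{eq:excess_zero_one}. Instead it writes $\mc{E}(\bw)-\mc{E}(\bw^*)=(1-2p)\big(\P(\wt{Y}\bv^\top\bZ\le 0)-\P(\wt{Y}(\bv^*)^\top\bZ\le 0)\big)$ and notes that $\wt{Y}\bv^\top\bZ\sim\mc{N}(\bv^\top\bv^*,\|\bv\|^2)$. With $s=\|\bv^*\|$ as in your notation, this gives the closed form $\Phi(-s\cos\theta)-\Phi(-s)=\int_{-s}^{-s\cos\theta}\phi(x)\,\de x$. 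It then bounds this below by the interval length $s(1-\cos\theta)$ times the minimal density $\phi(s)$ on that interval, a one-dimensional, two-line argument.

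You keep the posterior weight and the disagreement region, which forces a two-dimensional computation. You use the mixture-density change of measure $\tfrac12(\phi(a-s)+\phi(a+s))=\phi(a)e^{-s^2/2}\cosh(sa)$, the same device as Step~1 of Lemma~\ref{lemma: triangle d}. You then use the $(A,B)\mapsto(-A,-B)$ symmetry, the bound $\sinh(sa)\ge sa$, and a polar-coordinate integral over a wedge of angle $\theta$. All of these steps check out. The two lossy steps have the same effect: each keeps only the first-order term in $s$ of the same exact quantity $e^{s^2/2}\big(\Phi(-s\cos\theta)-\Phi(-s)\big)$, which is why the constants coincide.

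The paper's route is shorter, gives an exact closed form in $(s,\theta)$, and needs no angular bookkeeping. Yours gives a geometric picture: the excess risk is a $\sinh$-weighted Gaussian mass of the wedge between the two decision boundaries.

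If you write it up, tidy the polar step. The relevant arc is $(-\pi/2,-\pi/2+\theta)$; it starts at $-\pi/2$ rather than ending there. The radial factor is simply $\E[R]/(2\pi)=\sqrt{\pi/2}/(2\pi)=1/(2\sqrt{2\pi})$ for Rayleigh $R$. Your intermediate expressions for both are garbled, although the values you finally use are correct.
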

\begin{proof}
To begin, recall that $\mc{E}(\bw)-\mc{E}(\bw^*)=(1-2p) (\P(\wt{Y}\bv^\top \bZ\le 0) - \P(\wt{Y} (\bv^*)^\top \bZ\le0))$, where  
    we  have
    \begin{align*}
       \mathbb{P}(\widetilde{Y}\bv^\top \bZ\leq 0)&=\mathbb{P}(\widetilde{Y}\bv^\top(\widetilde{Y}\bv^*+\bvareps_0)\leq 0)=\mathbb{P}\big(\bv^\top \bv^*+\mathcal{N}(\bzero,\norm{\bv}^2)\leq \bzero\big)\\
       &=\Phi\left(-\frac{\bv^\top \bv^*}{\norm{\bv}}\right)=\Phi(-\|\bv^*\|\cos\theta)\,,
    \end{align*}
   and similarly, $\P(\wt{Y}(\bv^*)^\top \bZ\le0)= \Phi\big(-\norm{\bv^*}\big)$. 
   Hence, combining the two yields
    \begin{align*}
    &\mathbb{P}(\widetilde{Y}\bv^\top \bZ\leq 0)-\mathbb{P}(\widetilde{Y}(\bv^*)^\top \bZ\leq 0)    =\Phi(-\norm{\bv^*}\cos\theta)-\Phi(-\norm{\bv^*})
    =\int_{-\norm{\bv^*}}^{-\norm{\bv^*}\cos\theta}\phi(x)\de x\\
        &\geq \int_{-\norm{\bv^*}}^{-\norm{\bv^*}\cos\theta}\underset{-\norm{\bv^*}\leq x\leq -\norm{\bv^*}\cos\theta}{\min}\phi(x)\de x 
        = \big(-\|\bv^*\| \cos\theta+\|\bv^*\|\big) \cdot \phi(-\|\bv^*\|)\,.
    \end{align*}
\end{proof}

 \subsection{Signal Component  Induces Large Negative Margins on Constant Fraction of Samples}\label{appx:interpolatio_signal_side}
 Lemma \ref{lem:pop_bad_margin}, analogous to   \cite[Lemma C.2]{wu2025benefits}, gives a uniform lower bound on the probability of  large negative margins for any fixed classifier    $\bar{\bv}\in \sph^{d-1}$, where  the bar $\bar{\cdot}$   indicates  a unit vector.

 \begin{lemma}[Bad margin occurs with constant probability]\label{lem:pop_bad_margin}
 For any  $\bar{\bv}\in \sph^{d-1}$ and    every $\C\in(0,1)$, 
\begin{align*}
    \P\left(Y \bZ_{\bR}^\top\bar{\bv}< -\C\right) \ge  \varrho(\|\bv^*\|):=2\left(\Phi(-\C) - \Phi(-1)\right)\left(\frac{1-2p}{1+\exp\big(2\|\bv^*\|(\|\bv^*\|+1)\big)}+p\right)\,.
\end{align*}
\end{lemma}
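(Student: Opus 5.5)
We condition on the latent label $\wt Y$ and on the scalar projection of the whitened noise. Since $\bR$ is orthogonal, $\bZ_{\bR}=\wt Y\bv_{\bR}^*+\bR\bvareps_0$ with $\bR\bvareps_0\sim\mc N(\bzero,\bI_d)$. Hence for a fixed unit vector $\bar\bv$ we can write
\begin{align*}
\bZ_{\bR}^\top\bar\bv=\wt Y a+G,\qquad a:=(\bv_{\bR}^*)^\top\bar\bv,\quad G\sim\mc N(0,1),
\end{align*}
with $G$ independent of $\wt Y$ and $|a|\le\|\bv^*\|$ by Cauchy--Schwarz. The label $Y$ depends on $(\wt Y,\bvareps_0)$ only through the flip, which is independent. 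So $\P(Y\bZ_{\bR}^\top\bar\bv<-\C)$ depends only on the joint law of $(Y,\wt Y a+G)$.

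The plan is to restrict to the event $\{\C<|\bZ_{\bR}^\top\bar\bv|\le 1\}$ and lower bound the conditional probability that $Y$ has the ``wrong'' sign there. By Lemma~\ref{lemma: posteriors and excess} (in whitened form), $\P(Y=y\mid\bZ)=p+(1-2p)\sigma(2y\bZ^\top\bv^*)$, but this conditions on the full $\bZ$, not just on $T:=\bZ_{\bR}^\top\bar\bv$. I would therefore avoid the posterior given $\bZ$ and compute directly with $\wt Y$. On the event $\{-1\le T<-\C\}$ we need $Y=+1$, and on $\{\C<T\le 1\}$ we need $Y=-1$. Conditional on $\wt Y=\tilde y$, the variable $T$ is $\mc N(\tilde y a,1)$, and $Y$ agrees with $\tilde y$ with probability $1-p$.

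Summing over $\tilde y$ and the two sign regions gives
\begin{align*}
\P(YT<-\C)\ \ge\ \tfrac12\sum_{\tilde y,s\in\{\pm1\}}\P(Y=-s\mid\tilde y)\int_{\C<st\le1}\phi(t-\tilde y a)\,\de t .
\end{align*}
On $|t|\le1$ we use the density ratio $\phi(t-\tilde ya)/\phi(t)=\exp(\tilde y a t-a^2/2)$, which lies in $[e^{-|a|-a^2/2},\,e^{|a|}]$. Each integral over $\{\C<st\le1\}$ is then comparable to $(\Phi(-\C)-\Phi(-1))$ times these factors.

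It is cleanest to pair terms. For the region $s=+1$ (where we need $Y=-1$), the $\tilde y=-1$ term has weight $1-p$ and the $\tilde y=+1$ term has weight $p$. Writing $q_\pm:=\int_{\C}^{1}\phi(t\mp a)\,\de t$, the contribution is $\tfrac12\big((1-p)q_-+p\,q_+\big)=\tfrac12\big(p(q_-+q_+)+(1-2p)q_-\big)$, and the region $s=-1$ gives the symmetric expression. Adding both regions yields
\begin{align*}
p\,(q_++q_-)+\tfrac{1-2p}{2}(q_++q_-)\ \ge\ (q_++q_-)\Big(p+\tfrac{1-2p}{2}\min\Big).
\end{align*}
The required form $2(\Phi(-\C)-\Phi(-1))\big(p+\tfrac{1-2p}{1+e^{2\|\bv^*\|(\|\bv^*\|+1)}}\big)$ follows from two facts. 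First, $q_++q_-\ge 2(\Phi(-\C)-\Phi(-1))$, because $x\mapsto\int_{\C}^1\phi(t-x)\,\de t$ is even and, on $[0,\infty)$, its mass near $x=0$ is minimal only if the function is convex in the relevant sense; if that fails, the fallback is the crude density-ratio bound. Second, the ratio $q_-/(q_++q_-)\ge 1/(1+e^{2|a|(|a|+1)})$, obtained from the density-ratio bounds: $q_+/q_-\le e^{|a|}/e^{-|a|-a^2/2}\le e^{2|a|(|a|+1)}$.

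The main obstacle is the first fact, getting exactly the constant $2(\Phi(-\C)-\Phi(-1))$ without a loss from the shift $a$. If evenness plus convexity does not give $q_++q_-\ge 2q_0$, I would instead bound each $q_\pm$ below by $e^{-|a|-a^2/2}q_0$ and absorb that loss into the exponent, accepting a slightly worse (but still $\|\bv^*\|$-dependent constant) form of $\varrho$.
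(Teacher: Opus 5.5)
\textbf{There is a genuine gap: your fact (1) is false, and your fallback proves a weaker inequality than the stated one.}

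\textbf{What works.} Your reduction to one dimension is correct: $\bZ_{\bR}^\top\bar{\bv}=\wt Y a+G$ with $|a|\le\|\bv^*\|$ and $G\sim\mc N(0,1)$ independent of $\wt Y$ and of the flip. Your case split over $(\tilde y,s)$ is also correct. One slip: the two regions add to $(1-p)q_-+p\,q_+=p(q_++q_-)+(1-2p)q_-$, not to the expression you display. Your ratio bound (2) is fine as well.

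\textbf{The false step.} Fact (1), $q_++q_-\ge 2(\Phi(-\C)-\Phi(-1))$, fails. Set $g(a):=q_++q_-=\P(\C<|a+G|\le 1)$. Then $g$ is even and tends to $0$ as $|a|\to\infty$. It also has a strict local \emph{maximum} at $a=0$, because $g''(0)=2(\C\phi(\C)-\phi(1))<0$: the map $x\mapsto x\phi(x)$ is increasing on $[0,1]$. Numerically, for $\C=a=\tfrac12$ you get $g(\tfrac12)\approx 0.283<0.300\approx g(0)$.

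\textbf{Why a better ratio bound cannot fix it.} As $p\uparrow\tfrac12$, the exact probability of your restricted event tends to $\tfrac12 g(a)$. Meanwhile $\varrho(\|\bv^*\|)$ tends to $\Phi(-\C)-\Phi(-1)=\tfrac12 g(0)$. So once $a\neq 0$, the event $\{\C<|T|\le 1\}$ simply carries too little mass.

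\textbf{The fallback.} Bounding $q_\pm\ge e^{-|a|-a^2/2}q_0$ multiplies even the $p$-term by $e^{-\|\bv^*\|-\|\bv^*\|^2/2}$. That gives a strictly weaker inequality, not the stated $\varrho$.

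\textbf{The fix.} The $\Phi(-1)$ in $\varrho$ should come from a union bound on a Gaussian tail, not from truncating $T$ at level $1$. The right truncation level is $|a|+1$, which is exactly where the exponent $2\|\bv^*\|(\|\bv^*\|+1)$ comes from. Set $Q_\pm:=\int_{\C}^{|a|+1}\phi(t\mp a)\,\de t$.
\begin{itemize}
\item Mass: $Q_++Q_-=\P(\C<|a+G|<|a|+1)\ge 1-\P(|a+G|\le\C)-\P(|G|\ge 1)\ge 2\Phi(-\C)-2\Phi(-1)$. This uses two facts: a symmetric interval has maximal Gaussian mass at zero shift, and $|a+G|\ge|a|+1$ forces $|G|\ge 1$.
\item Ratio: on $[\C,|a|+1]$ the pointwise ratio is $\phi(t-a)/\phi(t+a)=e^{2at}\le e^{2|a|(|a|+1)}$. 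Hence $Q_-/(Q_++Q_-)\ge 1/(1+e^{2|a|(|a|+1)})$.
\end{itemize}
Plugging these into $p(Q_++Q_-)+(1-2p)Q_-$ and using $|a|\le\|\bv^*\|$ gives $\varrho(\|\bv^*\|)$ exactly.

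\textbf{Relation to the paper.} This is a one-dimensional version of the paper's proof. Contrary to your worry, the paper does condition on the full $\bZ$, and that is legitimate because $T$ is a function of $\bZ$:
\begin{itemize}
\item $\P(YT<-\C)=\E[\P(Y=-\operatorname{sign}T\mid\bZ)\,\ind\{|T|>\C\}]$.
\item The posterior is at least $(1-2p)\sigma(-2|\bZ^\top\bv^*|)+p$.
\item One intersects $\{|T|>\C\}$ with $\{|\bZ^\top\bv^*|<\|\bv^*\|(\|\bv^*\|+1)\}$ and applies the same two probability bounds.
\end{itemize}
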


\begin{proof}  Since  $\bar{\bv}':=\bR^\top\bar{\bv}\in \sph^{d-1}$ for any $\bar{\bv}\in \sph^{d-1}$, we have       $\P(Y\bZ_{\bR}^\top \bar{\bv}<-\C)=\P(Y\bZ^\top \bar{\bv}'<-\C)$. We bound $\P(Y\bZ^\top \bar{\bv}'<-\C)$ in the rest of the proof.
  Recall $\sigma(t)=1/(1+\exp(-t))$ denotes the sigmoid function.  For the Gaussian mixture model 
$\bZ=\wt{Y} \bv^*+ \bvareps_0$, 
  we   recall from Lemma  \ref{lemma: posteriors and excess}  that 
    \begin{align*}
        \mathbb{P}(Y=y\mid \bZ)=(1-2p)\sigma(2y  \bZ^\top \bv^*)+p\qquad \text{for }y\in\{\pm1\}\,.
    \end{align*}
  Using this we can write
    \begin{align}
        &\mathbb{P}(Y\bZ^\top \bar{\bv}'<-\C)\nonumber\\
        =&\E\left[\ind\{Y=1,   \bZ^\top \bar{\bv}'<-\C\}+\ind\{Y=-1,\bZ^\top \bar{\bv}'>\C\}\right]\nonumber\\
         =&\E\left[\P(Y=1\mid \bZ)\ind\{\bZ^\top \bar{\bv}'<-\C\}+\P(Y=-1\mid \bZ)\ind\{\bZ^\top \bar{\bv}'>\C\}\right] \nonumber\\
       =&\mathbb{E}\left[\left((1-2p)\sigma (2 \bZ^\top\bv^*)+p\right)\mathbbm{1}\{\bZ^\top \bar{\bv}'<-\C\}+\left((1-2p)\sigma (-2\bZ^\top\bv^*)+p\right)\mathbbm{1}\{\bZ^\top \bar{\bv}'>\C\}\right] \nonumber\\
       \stackrel{\text{(i)}}{\ge} &\E\left[\left((1-2p) \sigma(-2|\bZ^\top \bv^*|)+ p\right) \ind\{|\bZ^\top \bar{\bv}'|>\C\}\right]\nonumber\\
        \stackrel{\text{(ii)}}{\ge} &\big((1-2p)\sigma(-\kappa)+p\big)\big(1-\underbrace{\P(|\bZ^\top \bar{\bv}'|\le \C)}_{\text{Term I}} - \underbrace{\P(|\bZ^\top \bv^*|\ge \kappa/2)}_{\text{Term II}}\big) \,\qquad \text{for any }\kappa\ge0\,,\label{eq:prob_large_negative_margin}
    \end{align}
where step (i)  holds because   $\sigma(-|t|)\le \min\big\{\sigma(t), \sigma(-t)\big\}$ for any $t\in\mathbb{R}$, and  
step (ii) holds for any constant  $\kappa\ge 0$. We next upper bound Term I and Term II separately and   choose $\kappa$ accordingly. 
 
To this end, first note that  for any   vector $\bu\in \bbR^d$, 
  $\bZ^\top \bu\mid \wt{Y}\sim\mathcal{N}\big(\wt{Y} \bu^\top \bv^*,\norm{\bu}^2\big)$, so we can write 
    \begin{align*}
      &  \mathbb{P}\big(|\bZ^\top \bu|\leq \C\big)\\
      &=
\E_{\wt{Y}}\left[ \P\big(|\bZ^\top \bu|\le \C\mid \wt{Y}\big)\right]\nonumber\\
& =\E_{\wt{Y}}\left[ \P\big(-\C\le \bZ^\top \bu\le \C\mid \wt{Y}\big) \right]\nonumber\\
        &=\mathbb{E}_{\wt{Y}}\left[\Phi\left(\frac{\C-\wt{Y} \bu^\top \bv^*}{\norm{\bu}}\right)-\Phi\left(\frac{-\C-\wt{Y}  \bu^\top \bv^*}{\norm{\bu}}\right)\right]\nonumber\\
        &=\frac{1}{2}\left[\Phi\left(\frac{\C- \bu^\top \bv^*}{\norm{\bu}}\right)-\Phi\left(\frac{-\C-\bu^\top \bv^*}{\norm{\bu}}\right)+\Phi\left(\frac{\C+\bu^\top \bv^*}{\norm{\bu}}\right)-\Phi\left(\frac{-\C+\bu^\top \bv^*}{\norm{\bu}}\right)\right]\nonumber\\
        &\stackrel{\text{(i)}}{=}\Phi\left(\frac{\C-\bu^\top \bv^*}{\norm{\bu}}\right)+\Phi\left(\frac{\C+\bu^\top \bv^*}{\norm{\bu}}\right)-1 \stackrel{\text{(ii)}}{\leq} 
        1-2\Phi\left(-\frac{\C}{\norm{\bu}}\right)\nonumber\,,\end{align*}
    where step (i) is due to  $\Phi(t)+\Phi(-t)=1$ and step (ii) uses the fact that for any $a\ge0$, $t\mapsto \Phi(a-t)+\Phi(a+t)$ is maximised at $t=0$ for $t\in \bbR$. Then taking $\bu=\bar{\bv}'$ in the last display and using $\|\bar{\bv}'\|=1$ gives
    \begin{align}
     \text{Term I}=   \mathbb{P}(|\bZ^\top \bar{\bv}'|\leq \C)\leq 1-2\Phi(-\C)
        \,.\label{eq:prob_Zv<C}
    \end{align}
    To control Term II, first note that by choosing  $\kappa=2\norm{\bv^*}(\|\bv^*\|+1)$, we have  
    \begin{align*}
        \text{Term II}=\P\big(|\bZ^\top \bv^*|\ge \kappa/2\big)= \mathbb{P}\big(|\bZ^\top \bv^*|\geq\norm{\bv^*}(\|\bv^*\|+1)\big)
       =\P\left(\frac{|\bZ^\top \bv^*|}{\|\bv^*\|}\ge \|\bv^*\|+1\right)\,,
    \end{align*}
    where $\bZ^\top \bv^*/\|\bv^*\|=\wt{Y}\|\bv^*\|+G$ with $G\sim \mc{N}(0,1)$, which by the triangle inequality implies $|\bZ^\top \bv^*|/\|\bv^*\|\le |\wt{Y}| \|\bv^*\|+|G| =\|\bv^*\|+|G|. $ Therefore, we have 
    \begin{align}
        \text{Term II}\le \P(\|\bv^*\|+|G|\ge\|\bv^*\|+1)=\P(|G|\ge1)=2\Phi(-1)\,.\label{eq:prob_ZvB>kappa/2}
    \end{align}

  Finally, substituting  \eqref{eq:prob_Zv<C} and \eqref{eq:prob_ZvB>kappa/2} into \eqref{eq:prob_large_negative_margin} along with $\kappa=2\norm{\bv^*}(\|\bv^*\|+1)$,  we obtain 
  \begin{align*}
    \mathbb{P}(Y\bZ^\top \bar{\bv}'<-\C)
        &\geq\left(\frac{1-2p}{1+\exp\big(2\|\bv^*\|(\|\bv^*\|+1)\big)}+p\right)\cdot 2\left(\Phi(-\C)-\Phi(-1)\right)\,,
    \end{align*}
 where    $2\left(\Phi(-\C) - \Phi(-1)\right)> 0$ given $\C\in(0,1)$.

    \end{proof}

Lemma~\ref{lem:pop_bad_margin}, analogous to \cite[Lemma C.3]{wu2025benefits}, gives a \emph{population} guarantee for any \emph{fixed} classifier  $\bar{\bv}\in \sph^{d-1}$, Lemma~\ref{lem:emp_bad_margin} below  lifts this to a \emph{finite-sample} statement that holds uniformly over \emph{all}   signal components that an interpolator might learn. To isolate the role of the signal component, we consider classifiers 
of the form $\bar{\bv}=(\bar{\bpsi}, \bzero_{d-\s})\in \sph^{d-1}$ with $\bar{\bpsi}\in \sph^{\s-1}$. Then   $\bZ^\top_{\bR}\bar{\bv}=\bS^\top \bar{\bpsi}$, implying the margin depends only on the signal component $\bar{\bpsi}$. 
We show that, for every candidate  signal direction $\bar{\bpsi}\in \sph^{\s-1}$,  a constant fraction of samples have large negative margins. As before, the bar $\bar{\cdot}$ denotes a unit vector.

\begin{lemma}[Bad margins occur on a constant fraction of samples]
\label{lem:emp_bad_margin} 
 Fix $\C\in(0,1)$ and define $\varrho(\|\bv^*\|):=2\left(\Phi(-\C) - \Phi(-1)\right)\big\{(1-2p)/{\big[1+\exp\big(2\|\bv^*\|(\|\bv^*\|+1)\big)\big]}+p\big\}>0$. 
For every  $\delta\in(0,1)$, if 
for a sufficiently large constant $\C_1>0$,  
\begin{align}\label{eq:n_large}
        n\ge \frac{\C_1}{\varrho(\|\bv^*\|)}\cdot \left(\s \log\left(\frac{ \sqrt{\s}+\|\bv^*\|}{\C} \right)+ \s\log\log\left(\frac{\C_1\s}{\varrho(\|\bv^*\|)\delta}\right)+ \log\tfrac{2}{\delta}\right)\,,
    \end{align}
    then the following holds: 
\begin{align*}
    \P\left(\forall \bar{\bpsi}\in \sph^{\s-1},\;  \sum_{i=1}^n\ind \left\lbrace Y_i\bS_i^\top \bar{\bpsi}\le -\tfrac{\C}{2}\right\rbrace \ge \tfrac{n}{2}\varrho(\|\bv^*\|)\right)\ge 1-\delta\,.
\end{align*}

\end{lemma}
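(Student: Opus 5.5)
The plan is a standard net-plus-Chernoff argument, using the slack between the threshold $-\C$ in Lemma~\ref{lem:pop_bad_margin} and the threshold $-\C/2$ in the statement to absorb discretisation error.

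\textbf{Step 1 (net).} Let $\mathcal N_\epsilon$ be an $\epsilon$-net of $\sph^{\s-1}$ with $|\mathcal N_\epsilon|\le(3/\epsilon)^{\s}$.

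\textbf{Step 2 (fixed net point).} Fix $\bar\bpsi\in\mathcal N_\epsilon$ and apply Lemma~\ref{lem:pop_bad_margin} to the unit vector $\bar\bv=(\bar\bpsi,\bzero_{d-\s})$. Since $\bZ_{\bR}^\top\bar\bv=\bS^\top\bar\bpsi$, each indicator $\ind\{Y_i\bS_i^\top\bar\bpsi<-\C\}$ is an i.i.d.\ Bernoulli variable with mean at least $\varrho:=\varrho(\|\bv^*\|)$. The multiplicative Chernoff bound then gives
\begin{align*}
\P\Big(\textstyle\sum_i\ind\{Y_i\bS_i^\top\bar\bpsi<-\C\}<\tfrac n2\varrho\Big)\le \exp(-n\varrho/8).
\end{align*}
A union bound over the net makes this hold simultaneously for all net points with probability at least $1-(3/\epsilon)^{\s}e^{-n\varrho/8}$.

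\textbf{Step 3 (transfer to all directions).} Intersect with the event of Lemma~\ref{lem:assumption_head_bounded_norm}, on which $\max_i\|\bS_i\|\le B:=\sqrt{\s}+\sqrt{2\log(2n/\delta)}+\|\bv^*\|$. For arbitrary $\bar\bpsi\in\sph^{\s-1}$, pick $\bar\bpsi_0\in\mathcal N_\epsilon$ with $\|\bar\bpsi-\bar\bpsi_0\|\le\epsilon$. Then $|Y_i\bS_i^\top(\bar\bpsi-\bar\bpsi_0)|\le B\epsilon$ for every $i$. Choosing $\epsilon=\C/(2B)$ gives: whenever $Y_i\bS_i^\top\bar\bpsi_0<-\C$, also $Y_i\bS_i^\top\bar\bpsi\le-\C/2$. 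The count of bad margins for $\bar\bpsi$ is therefore at least the count for $\bar\bpsi_0$, which is at least $\tfrac n2\varrho$.

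\textbf{Step 4 (sample size; main obstacle).} The remaining work is to check that \eqref{eq:n_large} forces $(6B/\C)^{\s}e^{-n\varrho/8}\le\delta/2$. This needs
\begin{align*}
n\varrho\gtrsim \s\log\big((\sqrt{\s}+\|\bv^*\|)/\C\big)+\s\log\big(1+\sqrt{\log(2n/\delta)}\big)+\log(2/\delta),
\end{align*}
using $\log(a+b)\le\log a+\log(1+b)$ for $a\ge1$ and absorbing constants into $\C_1$. The difficulty is the implicit $\s\log\log(n/\delta)$ term, since $n$ appears on both sides. I would handle it by the usual self-bounding step: if $n\le\C_1\s/(\varrho\delta)$ the term matches the one in \eqref{eq:n_large}; otherwise use that $n\varrho/2\ge\s\log\log n$ holds once $n\varrho\gtrsim\s\log\log(\s/\varrho)$, because $x\mapsto x/\log\log x$ is eventually increasing. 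After this reduction, a union bound over the two failure events, each of probability $\delta/2$, finishes the proof.
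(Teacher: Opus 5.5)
Your proposal is correct and is essentially the paper's proof: a per-direction Chernoff bound from Lemma~\ref{lem:pop_bad_margin}, a union bound over a $(3/\epsilon)^{\s}$-net, and a transfer to all of $\sph^{\s-1}$ on the event of Lemma~\ref{lem:assumption_head_bounded_norm} with $\epsilon=\C/(2B)$, using the slack between the thresholds $-\C$ and $-\C/2$. Your Step 4 case split for the implicit $\s\log\log(n/\delta)$ term is more explicit than the paper, which simply asserts that \eqref{eq:n_large} with a large enough $\C_1$ suffices.
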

\begin{proof}
Define the  binary random variables
\begin{align*}
    \xi_i(\bar{\bpsi}) :=\ind\{Y_i\bS_i^\top\bar{\bpsi}\le -\C\}
    ,\quad \text{for}\;i\in[n]\,.
\end{align*}
By Lemma~\ref{lem:pop_bad_margin},  $\E[\xi_i(\bar{\bpsi})]\ge \varrho(\|\bv^*\|)$. So by the Chernoff bound, we have
\begin{align}
   \P\left(\sum_{i=1}^n\xi_i(\bar{\bpsi}) < \tfrac{n}{2}\varrho(\|\bv^*\|)\right)
   &\le \P\left(\sum_{i=1}^n\xi_i(\bar{\bpsi}) < \tfrac{n}{2}\E[\xi_1(\bar{\bpsi})]\right)\le \exp\left(-\tfrac{n}{8}\E[\xi_1(\bar{\bpsi})]\right)\nonumber\\
   &\le \exp\left(-\tfrac{n}{8}\varrho(\|\bv^*\|)\right)\,.\label{eq:chernoff}
\end{align}
    We now use a covering argument to turn a \emph{pointwise} probability bound which holds for a fixed $\bar{\bv}=(\bar{\bpsi}, \bzero_{d-\s})$  into a \emph{uniform} bound
    over all $\bar{\bpsi}\in \sph^{\s-1}$. Since $|\sph^{\s-1}|=\infty$, we discretise  $\sph^{\s-1}$ with a finite $\epsilon$-$\ell_2$-covering $\mc{C}$, so that a union bound can be applied. 
    Since $|\mc{C}|\le \left(\frac{3}{\epsilon}\right)^\s$, applying a union bound over  $\mc{C}$ and using \eqref{eq:chernoff}, we obtain  
\begin{align}
    \P\left(\exists \bar{\bpsi}'\in\mc{C}: \sum_{i=1}^n \xi_i(\bar{\bpsi}') <\tfrac{n}{2}\varrho(\|\bv^*\|)\right)
    &\le |\mc{C}| \exp\left(-\tfrac{n}{8}\varrho(\|\bv^*\|)\right)\nonumber\\
    &\le \exp\Big(-\tfrac{n}{8}\varrho(\|\bv^*\|)+ \s\log\tfrac{3}{\epsilon}\Big)\le \tfrac{\delta}{2}\,,\label{eq:union_bound_on_C}
\end{align}
where the last inequality holds provided  $n\ge 8\varrho(\|\bv^*\|)^{-1}\cdot \left(\s \log\frac{3}{\epsilon}+ \log\frac{2}{\delta}\right)$, which is ensured by the  sample size $n$ condition \eqref{eq:n_large}  for a sufficiently large constant $\C_1>0$.

Define  $\textsf{A}:= \sqrt{\s}+\sqrt{2\log(2n/\delta)}+\|\bv^*\|$. On the high-probability event $\{\max_{i\in[n]}\|\bS_{i}\|\le \textsf{A}\}$   from  Lemma~\ref{lem:assumption_head_bounded_norm}, we choose $\epsilon=\C/(2\textsf{A})$.
Then, for every $\bar{\bpsi}\in \sph^{\s-1}$, 
 there exists  $\bar{\bpsi}'\in\mc{C}$ 
such that for all $i\in[n]$,
    \begin{align*}
        \Big|\bS_{i}^\top (\bar{\bpsi} -\bar{\bpsi}')\Big|\le \|\bS_{i}\| \big\|\bar{\bpsi}-\bar{\bpsi}'\big\|\le\textsf{A} \epsilon\le \tfrac{\C}{2}\,,
    \end{align*}
    which implies $-\frac{\C}{2}\le Y_i\bS_i^\top \bar{\bpsi}-Y_i\bS_i^\top\bar{\bpsi}'\le\frac{\C}{2}$, 
    so whenever $Y_i\bS_i^\top \bar{\bpsi}'\le -\C$, 
    we must have $ Y_i\bS^\top \bar{\bpsi}\le \frac{\C}{2}-\C = -\frac{\C}{2}$. This means that  on the event $\{\max_{i\in[n]}\|\bS_{i}\|\le \textsf{A}\}$, 
    \begin{align}
      \sum_{i=1}^n \xi_i(\bar{\bpsi}')=   \sum_{i=1}^n\ind\{Y_i\bS_i^\top \bar{\bpsi}'\le -\C\}\le  \sum_{i=1}^n\ind\{Y_i\bS_i^\top \bar{\bpsi}\le -\tfrac{\C}{2}\}\,.\label{eq:v_vprime_bound}
    \end{align}
Therefore, using \eqref{eq:v_vprime_bound} and \eqref{eq:union_bound_on_C} we can write
\begin{align*}
  & \P\left(\forall \bar{\bpsi}\in\sph^{\s-1}, \;\sum_{i=1}^n \ind\{Y_i\bS_i^\top \bar{\bpsi}\le -\tfrac{\C}{2}\} \ge \tfrac{n}{2}\varrho(\|\bv^*\|)\right)\\
   &= 1-\P\left(\exists \bar{\bpsi}\in\sph^{\s-1}: \sum_{i=1}^n \ind\{Y_i\bS_i^\top \bar{\bpsi}\le -\tfrac{\C}{2}\} < \tfrac{n}{2}\varrho(\|\bv^*\|)\right)\\
   & \ge 1 - \P\left(\max_{i\in[n]}\|\bS_{i}\|> \textsf{A}\right)-\P\left(\exists \bar{\bpsi}'\in \mc{C}: \sum_{i=1}^n \xi_i(\bar{\bpsi}')<\tfrac{n}{2}\varrho(\|\bv^*\|)\right)\\
   &\ge 1-\tfrac{\delta}{2}-\tfrac{\delta}{2}=1-\delta\,.
\end{align*}
\end{proof}

    \subsection{Nuisance Component Cannot  Fix Large Negative Margins for Affected Samples}\label{appx:interpolation_nuisance_side}

Recall the rotated Bayes classifier admits a \emph{signal-nuisance} decomposition $\bv_{\bR}^*=(\bpsi^*, \bzero_{d-\s})\in \bbR^{d}$.  Lemma~\ref{lem:emp_bad_margin}  shows that   a learned classifier  using only its  signal component, i.e., $\bar{\bv}=(\bar{\bpsi}, \bzero_{d-\s})\in \sph^{d-1}$,  produces large negative margins on a constant fraction of   samples. To   interpolate, a \emph{general} classifier $(\bpsi, \bnu)\in \bbR^\s\times \bbR^{d-\s}$ must therefore use its nuisance component $\bnu$ to  \emph{correct} these bad  margins.  Lemma~\ref{lem:repair_bad_margin}  shows that this correction is limited: even the  best nuisance direction  $\bar{\bnu}\in \sph^{d-\s-1}$  can only achieve  margins 
\emph{no larger} than $\mathfrak{m}$  over the affected subset  of samples $\mc{J}(\bar{\bpsi})$.

\begin{lemma}[Nuisance component   cannot fix bad margins]\label{lem:repair_bad_margin}
For each $\bar{\bpsi}\in \sph^{\s-1}$,
let $ \mc{J}(\bar{\bpsi})\subset [n]$ be a random subset independent of $(\bN_i)_{i=1}^n$. Suppose there exists a  function $\wt{\varrho}(n,\|\bv^*\|)\ge0$ such that 
$|\mc{J}(\bar{\bpsi})|\ge \wt{\varrho}(n,\|\bv^*\|)$
for all $\bar{\bpsi}\in \sph^{\s-1}$, then for every $\delta\in(0,1)$, \begin{align*}
    \P\left(
    \forall \bar{\bpsi}\in \sph^{\s-1},\;
    \sup_{\bar{\bnu} \in \sph^{d-\s-1}} \min_{i\in\mc{J}(\bar{\bpsi})} Y_i \bN_i^\top \bar{\bnu} \le \mathfrak{m}\right)\ge 1-\delta\,.
\end{align*}
where \begin{align*}
\mathfrak{m}(n,d,\s,\|\bv^*\|)=\frac{\sqrt{n}+\sqrt{d-\s} +\sqrt{2\log (1/\delta)}}{\sqrt{\wt{\varrho}(n, \|\bv^*\|)}}\,.   
\end{align*}
    
\end{lemma}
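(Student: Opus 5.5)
The plan is to reduce the claim to a single operator-norm bound on the nuisance design matrix that holds simultaneously for all subsets, so that the union over $\bar{\bpsi}$ costs nothing. Let $\mathbf N\in\bbR^{n\times(d-\s)}$ be the matrix whose $i$-th row is $Y_i\bN_i^\top$.

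First we establish the distribution of $\mathbf N$. In the rotated coordinates $\bZ_{\bR}=\wt{Y}\bv^*_{\bR}+\bR\bvareps_0$. Since $\bv^*_{\bR}=(\bpsi^*,\bzero_{d-\s})$, the nuisance block is $\bN_i=(\bR\bvareps_{0,i})_{[\s+1:d]}$. By rotational invariance of $\bR\bvareps_{0,i}\sim\mc N(\bzero,\bI_d)$, each $\bN_i$ is $\mc N(\bzero,\bI_{d-\s})$. Moreover, $\bN_i$ is independent of $(\wt Y_i,Y_i)$ and of the signal block $\bS_i$. Multiplying by the independent sign $Y_i$ preserves the distribution, so $\mathbf N$ has i.i.d.\ standard Gaussian entries.

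The standard Gaussian matrix bound (Gordon/Davidson--Szarek) then gives, with probability at least $1-\delta$,
\begin{align*}
\|\mathbf N\|\le \sqrt n+\sqrt{d-\s}+\sqrt{2\log(1/\delta)}.
\end{align*}

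On this event, fix any $\bar{\bpsi}\in\sph^{\s-1}$ and any $\bar{\bnu}\in\sph^{d-\s-1}$, and write $\mc J=\mc J(\bar{\bpsi})$. Bound the minimum by the root mean square and then by the full sum:
\begin{align*}
\min_{i\in\mc J}Y_i\bN_i^\top\bar{\bnu}\le \Big(\tfrac{1}{|\mc J|}\sum_{i\in\mc J}(Y_i\bN_i^\top\bar{\bnu})^2\Big)^{1/2}\le \frac{\|\mathbf N\bar{\bnu}\|}{\sqrt{|\mc J|}}\le\frac{\|\mathbf N\|}{\sqrt{\wt\varrho}}.
\end{align*}
If the minimum is negative, the bound holds trivially. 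This step is deterministic given the event, so it holds uniformly over all $\bar{\bpsi}$ and $\bar{\bnu}$ at once, and taking the supremum over $\bar{\bnu}$ yields the stated $\mathfrak m$.

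The main point to check carefully is the distributional claim in the first step: that $\mathbf N$ is exactly Gaussian and unaffected by the labels. The independence of $\mc J(\bar{\bpsi})$ from $(\bN_i)$ is not actually needed, because the operator-norm bound covers every subset simultaneously. This worst-case-over-subsets bound is why no covering argument over $\bar{\bpsi}$ is required.
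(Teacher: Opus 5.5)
Your proposal is correct and follows essentially the same route as the paper: one Gaussian spectral-norm bound on the nuisance matrix, then the deterministic chain $\min_{i\in\mc J}Y_i\bN_i^\top\bar{\bnu}\le\|\mathbf N_{\mc J}\bar{\bnu}\|/\sqrt{|\mc J|}\le\|\mathbf N\|/\sqrt{\wt{\varrho}}$, which holds uniformly over $\bar{\bpsi}$ and $\bar{\bnu}$. The differences are minor. The paper keeps the unsigned matrix $\mathbf N_{1:n}$ and notes that the signs $Y_i$ do not change Euclidean norms, so your check that $Y_i\bN_i$ is Gaussian is unnecessary. Your remark that independence of $\mc J(\bar{\bpsi})$ from $(\bN_i)$ is not needed is right, since $\|\mathbf N_{\mc J}\|\le\|\mathbf N_{1:n}\|$ holds for every row subset.
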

\begin{proof}Recall we have  $\bZ_{\bR,i}=\wt{Y}_i\bv^*_{\bR} +\bvareps_{0,i} =(\bS_i, \bN_i)$ with $\bvareps_{0,i}\sim \mc{N}(\bzero, \bI_d)$, so
\begin{align*}
    (\bN_i)_{i=1}^n\stackrel{\text{i.i.d.}}{\sim} \mc{N}(\bzero, \bI_{d-\s})\,.
\end{align*}
    Collect   $\bN_1,\dots, \bN_n$   into a matrix $\mathbf{N}_{1:n}\in \bbR^{n\times (d-\s)}$,
    then $\mathbf{N}_{1:n}$ is a standard Gaussian matrix, with the standard spectral-norm upper bound  \begin{align*}
        \P\left(\|\mathbf{N}_{1:n}\|\le \sqrt{n}+\sqrt{d-\s}+\sqrt{2\log(1/\delta)}\right)\ge 1-\delta\,.
    \end{align*}
    In the rest of the proof, we work on this event. 

    Fix $\bar{\bpsi}\in \sph^{\s-1}$ and write $\mc{J}\equiv\mc{J}(\bar{\bpsi})\subset[n]$ for simplicity. Fix  $\bar{\bnu}\in \sph^{d-\s-1}$, and stack $(Y_i\bN_i^\top \bar{\bnu})_{i\in \mc{J} }$ into a vector $\bA\in \bbR^{|\mc{J}|}$.  If $\min_{i\in \mc{J}} \bA_i\le0$, then since $\mathfrak{m}\ge0$, it holds   that  $\min_{i\in \mc{J}} \bA_i\le \mathfrak{m}$. If   $\min_{i\in \mc{J}} \bA_i>0$, then by Cauchy-Schwarz
    \begin{align}
        \|\bA \|\ge \sqrt{|\mc{J} |}\min_{i\in \mc{J} } \bA_i,\label{eq:aJ_norm}
    \end{align}
    and note also since  $Y_1, \dots, Y_n\in\{\pm1\}$ does not change Euclidean norms, $\|\bA\|=\|\mathbf{N}_{\mc{J}} \bar{\bnu}\|$ where $\mathbf{N}_{\mc{J}}\in \bbR^{ |\mc{J}|\times (d-\s)}$   is the submatrix of $\mathbf{N}_{1:n}$ formed by the rows indexed by  $\mc{J}$, a subset that  does not itself depend on $\mathbf N_{1:n}$.  Since \eqref{eq:aJ_norm} holds for every $\bar{\bnu}\in \sph^{d-\s-1}$, we have 
    \begin{align*}
        \sup_{\bar{\bnu}\in \sph^{d-\s-1}} \min_{i\in\mc{J}} \bA_i
        &\le \sup_{\bar{\bnu}\in \sph^{d-\s-1}}\frac{\|\mathbf{N}_{\mc{J}}\bar{\bnu}\|}{\sqrt{|\mc{J}|}}\stackrel{\text{(i)}}{\le}
         \frac{\|\mathbf{N}_{\mc{J}}\|}{\sqrt{|\mc{J}|}}
         \stackrel{\text{(ii)}}{\le}
        \frac{\|\mathbf{N}_{1:n}\|}{\sqrt{|\mc{J}|}}\\
        &\stackrel{\text{(iii)}}{\le} \frac{\sqrt{n}+\sqrt{d-\s} + \sqrt{2\log (1/\delta)}}{\sqrt{\wt{\varrho}(n, \|\bv^*\|)}}=:\mathfrak{m}\,,
    \end{align*}
    where step (i) is due to $\|\bar{\bv}\|=1$, step (ii) holds because   $\|\mathbf{N}_{\mc{J}}\| \le \|\mathbf{N}_{1:n}\| $     for any random subset $\mc{J}(\bar{\bpsi})\subset[n]$ that is itself  independent of $\mathbf{N}_{1:n}$, and   step (iii) is by assumption  $|\mc{J}|\ge\wt{\varrho}(n, \|\bv^*\|)$. Hence, the bound in the last display applies simultaneously to  all $\mc{J}(\bar{\bpsi})\subset[n]$, and hence to  all $\bar{\bpsi}\in \sph^{\s-1}$.
\end{proof}

\subsection{Proof of Theorem \ref{thm:interpolation}}\label{appx:proof_interpolation_thm}
Combining the lemmas above, we prove the main theorem.
\interpolationthm*
\begin{proof}Fix      $\delta\in(0,1)$. The proof follows four steps.

\paragraph{Step 1:} We first recall the key  consequences of Lemmas~\ref{lem:emp_bad_margin} and \ref{lem:repair_bad_margin}. Fix $\C\in(0,1)$. For each $\bar{\bpsi}\in \sph^{\s-1}$, define $\mc{J}(\bar{\bpsi}):=\{i\in[n]: Y_i\bS_i^\top \bar{\bpsi}\le -\tfrac{\C}{2}\}$. In our  model,   
$\bZ_{\bR,i}=\wt{Y}_i\bv^*_{\bR} +\bvareps_{0,i} =(\bS_i, \bN_i)$, where
 $\bS_i=\wt{Y_i}\bpsi^*+\bvareps_{0,i[1:\s]}$ and $\bN_i=\bvareps_{0,i[\s+1:d]}$. Thus 
$(\bS_i,Y_i)_{i=1}^n\perp (\bN_i)_{i=1}^n $,
which implies 
    \begin{align}\label{eq:J_independent_of_Ni}
        \mc{J}(\bar{\bpsi})\perp (\bN_i)_{i=1}^n\,. 
    \end{align}
    
    Under the sample size condition \eqref{eq:n_large_interpolation_thm} and since $\|\bv^*\|=\|\bSigma^{1/2}\bw^*\|=\|\bw^*\|_{\bSigma}\asymp 1$,   Lemma~\ref{lem:emp_bad_margin} yields 
\begin{align}\label{eq:emp_bad_margin_assumption}
    \P\Big(\forall \bar{\bpsi}\in \sph^{\s-1},\;  |\mc{J}(\bar{\bpsi})| \ge \tfrac{n}{2}\varrho(\|\bv^*\|)\Big)\ge 1-\delta/2\,,
\end{align}
where $\varrho(\|\bv^*\|)$   is defined as in Lemma~\ref{lem:emp_bad_margin} with $\delta/2$ instead of $\delta$, and     $\varrho(\|\bv^*\|)\asymp 1$ given $\|\bv^*\|\asymp 1$. Define 
 \begin{align*}
\mathfrak{m}(n,d,\s,\|\bv^*\|)=\frac{\sqrt{n}+\sqrt{d-\s} +\sqrt{2\log (2/\delta)}}{\sqrt{\frac{n}{2}{\varrho}( \|\bv^*\|)}}\,.
\end{align*}
Combining the independence $\mc{J}(\bar{\bpsi})\perp (\bN_i)_{i=1}^n$ in \eqref{eq:J_independent_of_Ni} with the lower bound $|\mc{J}(\bar{\bpsi})|\ge \frac{n}{2}\varrho(\|\bv^*\|)$ in \eqref{eq:emp_bad_margin_assumption},  Lemma~\ref{lem:repair_bad_margin} implies  
\begin{align} \label{eq:repair_bad_margin_condition}
    \P\left(\forall \bar{\bpsi}\in \sph^{\s-1},\; \sup_{\bar{\bnu}\in\sph^{d-\s-1}} \min_{i\in\mc{J}(\bar{\bpsi})} Y_i\bN_i^\top \bar{\bnu} \le \mathfrak{m}\right)\ge 1-\delta/2\,.
\end{align}

\paragraph{Step 2:} Define $\mc{I}_0:=\{ \bar{\bv} \in \sph^{d-1}: \min_{i\in[n]} Y_i\bZ_{\bR,i}^\top\bar{\bv}>0 \}$. The second step of the proof is to establish
\begin{align}\label{eq:overlap_bound}
    \P\left(\forall \bar{\bv}\in\mc{I}_0,\; \frac{\bar{\bv}^\top \bv_{\bR}^*}{\|\bv^*_{\bR}\|}\le {1}/{\sqrt{1+\tfrac{\C^2}{4\mathfrak{m}^2} } }
    \right)\ge 1-\delta\,.
\end{align}

 Let $E_1$ and $E_2$ denote the events that hold with probability $1-\delta/2$ in \eqref{eq:emp_bad_margin_assumption} and \eqref{eq:repair_bad_margin_condition}, respectively. We work on $E_1\cap E_2$ with $\P(E_1\cap E_2)\ge 1-\delta$. Fix any  $\bar{\bv} \in \mc{I}_0$ and write its signal-nuisance decomposition as $\bar{\bv}=(\bpsi, \bnu)$ with $\bpsi\in\bbR^{\s}$  and $\bnu\in \bbR^{d-\s}$. If $\bpsi=\bzero$, then $\bar{\bv}^\top \bv^*_{\bR}/\|\bv^*_{\bR}\|=0$ so the claimed upper bound in \eqref{eq:overlap_bound} is trivial. Thus,  we focus on $\bpsi\ne\bzero$ in the rest of the proof, and denote  
    $\bar{\bpsi}=\bpsi/\|\bpsi\|\in \sph^{\s-1}$. 
    
 On $E_1$, the classifier $\bar{\bv}=(\bar{\bpsi}, \bzero)\in \sph^{d-1}$  incorrectly classifies many samples with large negative margins.  Therefore, for a different, general classifier $\bar{\bv}=(\bpsi, \bnu)\in \sph^{d-1}$ to interpolate, we must  use $\bnu$ to fix the negative margins caused by $\bpsi$ (or $\bar{\bpsi}$), that is, we must have for every $i\in\mc{J}(\bar{\bpsi})$, 
    \begin{align*}
    0<Y_i\bZ_{\bR,i}^\top \bar{\bv} = Y_i\bS^\top_i\bpsi+ Y_i \bN_i^\top\bnu =  Y_i\bS^\top_i\bar{\bpsi}\|\bpsi\|+ Y_i \bN_i^\top\bnu.
    \end{align*}
    On $E_1$,  the last display  implies  
    $\tfrac{\C}{2}\|\bpsi\|< \min_{i\in \mc{J}(\bar{\bpsi})}Y_i \bN_i^\top  \bnu$. If $\bnu=\bzero$, this forces $\bpsi=\bzero$, contradicting our assumption that $\bpsi\ne\bzero$. So $\bnu\ne \bzero$, and we denote $\bar{\bnu}=\bnu/\|\bnu\|\in \sph^{d-\s-1}$. Now given $E_2$ holds, we can write
    \begin{align*}
        \tfrac{\C}{2}\frac{\|\bpsi\|}{\|\bnu\|}< \min_{i\in \mc{J}(\bar{\bpsi})}Y_i \bN_i^\top \bar{\bnu}\le \mathfrak{m}.
    \end{align*}
    Together with  $\|\bar{\bv}\|^2=\|\bpsi\|^2+\|\bnu\|^2=1$, we can solve for $\|\bpsi\|$ to obtain  $\|\bpsi\|^2\le \tfrac{4\mathfrak{m}^2}{\C^2}(1-\|\bpsi\|^2) \implies \|\bpsi\|\le 1/\sqrt{1+\tfrac{\C^2}{4\mathfrak{m}^2} }$. We complete the proof of \eqref{eq:overlap_bound} by noticing 
    \begin{align*}
        \frac{\bar{\bv}^\top\bv^*_{\bR}}{\|\bv^*_{\bR}\|}= \bpsi^\top \frac{\bpsi^*}{\|\bpsi^*\|}\le \|\bpsi\|.
    \end{align*}
    In the rest of the proof, we work on the event in \eqref{eq:overlap_bound}.

    \paragraph{Step 3:}

     The next step is to  transform   from the rotated-whitened coordinates to the original coordinates. 
To this end, 
for each $\bar{\bv}\in \mc{I}_0$ define $\bw:=\bSigma^{-1/2}\bR^\top \bar{\bv}$.  Then for every sample $i$, we have 
\begin{align*}
   & Y_i\bZ^\top_{\bR,i}\bar{\bv} = Y_i(\bR\bSigma^{-1/2}\bX_i)^\top \bar{\bv} = Y_i \bX_i^\top (\bSigma^{-1/2}\bR^\top \bar{\bv})= Y_i\bX_i^\top \bw\,,
\end{align*}
which implies 
that  the two interpolator sets $\mc{I}_0$ and $\mc{I}$ have a one-to-one correspondence (up to scaling) where
\begin{align*}
    \bar{\bv}\in \mc{I}_0\Longleftrightarrow \bw=\bSigma^{-1/2}\bR^\top \bar{\bv}\in \mc{I}\,.
\end{align*}
 Moreover, note that $\|\bw\|_{\bSigma}=\|\bar{\bv}\|=1$, and so
 \begin{align*}
      & \frac{\bar{\bv}^\top \bv^*_{\bR}}{\|\bv^*_{\bR}\|} = \frac{(\bR\bSigma^{1/2}\bw)^\top (\bR\bSigma^{1/2}\bw^*)}{\|\bR\bSigma^{1/2}\bw^*\|} = \frac{\bw^\top \bSigma \bw^*}{\|\bw^*\|_{\bSigma}} = \frac{\bw^\top \bSigma \bw^*}{\|\bw\|_{\bSigma}\|\bw^*\|_{\bSigma}} =\cos\theta\,,
 \end{align*}where  $\theta:=\angle_{\bSigma}(\bw, \bw^*)$.
This implies that the following events are equivalent
\begin{align*}
\Bigg\{\forall \bar{\bv}\in\mc{I}_0,\; \frac{\bar{\bv}^\top \bv_{\bR}^*}{\|\bv^*_{\bR}\|}\le {1}/{\sqrt{1+\tfrac{\C^2}{4\mathfrak{m}^2} } }\Bigg\}=\Bigg\{\forall \bw\in \mc{I}, \;\cos\theta \le 1/\sqrt{1+\tfrac{\C^2}{4\mathfrak{m}^2}}\Bigg\}   \,. 
\end{align*}

 \paragraph{Step 4:} The final step is to apply the calibration result from Lemma~\ref{lem:angle_calibration}, which lower bounds  the excess zero-one   risk in terms of  $1-\cos\theta$. First, note that if  $d-\s\ge \C_2n $ for a sufficiently large $\C_2>0$,  then   $\mathfrak{m}=\frac{\sqrt{n}+\sqrt{d-\s} + \sqrt{2\log(2/\delta)}}{\sqrt{\frac{n}{2}\varrho(\|\bv^*\|)}}\ge 
  \frac{1 + \sqrt{\C_2 }+ \sqrt{\frac{2}{n}\log(2/\delta)}}{ \sqrt{\frac{1}{2}\varrho(\|\bv^*\|)}} \ge 3$. Using the inequality   $1-1/\sqrt{1+\frac{1}{a}}\ge \frac{3}{8a}$ for all $a\ge3$, we obtain that 
    \begin{align*}
  1-\cos\theta \ge    1-{1}/{\sqrt{1+\tfrac{\C^2}{4\mathfrak{m}^2} }}\ge \tfrac{3}{32}\tfrac{\C^2}{\mathfrak{m}^2}=\tfrac{3\C^2}{64} \cdot \frac{n \varrho(\|\bv^*\|)}{\left(\sqrt{n}+\sqrt{d-\s} + \sqrt{2\log(2/\delta)}\right)^2}\,.
    \end{align*}
 
Applying Lemma~\ref{lem:angle_calibration}, we have
\begin{align*}
      \mc{E}(\bw)-\mc{E}(\bw^*)
      &\ge\tfrac{1}{\sqrt{2\pi}} (1-2p)
   (1-\cos\theta) \norm{\bv^*}\exp (-\tfrac{1}{2}\norm{\bv^*}^2)\\
   & \ge  \tfrac{3\C^2}{64\sqrt{2\pi}} (1-2p)
      \frac{n \varrho(\|\bv^*\|)}{\left(\sqrt{n}+\sqrt{d-\s} + \sqrt{2\log(2/\delta)}\right)^2} \norm{\bv^*}\exp (-\tfrac{1}{2}\norm{\bv^*}^2).
\end{align*}
Since $\|\bv^*\|=\|\bSigma^{1/2}\bw^*\|=\|\bw^*\|_{\bSigma}\asymp 1$ and so $\varrho(\|\bv^*\|)\asymp 1$, we obtain that there exists a constant $\C_3>0$ such that 
       \begin{align*}
      \mc{E}(\bw)-\mc{E}(\bw^*)
   & \ge  \C_3 \cdot (1-2p)\cdot \frac{n     }{\left(\sqrt{n}+\sqrt{d-\s} + \sqrt{2\log(2/\delta)}\right)^2}\,.
       \end{align*}
\end{proof}

\end{document}